\newtheorem{theorem}{Theorem}[section]
\newtheorem{lemma}[theorem]{Lemma}
\title{Trust Region-Based Safe Distributional \\ Reinforcement Learning for Multiple Constraints}
\author{%
  Dohyeong Kim$^1$, Kyungjae Lee$^2$, and Songhwai Oh$^1$ \\
  $^{1}$Dep. of Electrical and Computer Engineering and ASRI, Seoul National University \\
  $^2$Artificial Intelligence Graduate School, Chung-Ang University \\
  \texttt{dohyeong.kim@rllab.snu.ac.kr},  
  \texttt{kyungjae.lee@ai.cau.ac.kr}, \\
  \texttt{songhwai@snu.ac.kr} \\
}
\begin{document}

\maketitle

\begin{abstract}
In safety-critical robotic tasks, potential failures must be reduced, and multiple constraints must be met, such as avoiding collisions, limiting energy consumption, and maintaining balance.
Thus, applying safe reinforcement learning (RL) in such robotic tasks requires to handle multiple constraints and use risk-averse constraints rather than risk-neutral constraints.
To this end, we propose a trust region-based safe RL algorithm for multiple constraints called a \emph{safe distributional actor-critic} (SDAC).
Our main contributions are as follows: 1) introducing a gradient integration method to manage infeasibility issues in multi-constrained problems, ensuring theoretical convergence, and 2) developing a TD($\lambda$) target distribution to estimate risk-averse constraints with low biases. 
We evaluate SDAC through extensive experiments involving multi- and single-constrained robotic tasks.
While maintaining high scores, SDAC shows 1.93 times fewer steps to satisfy all constraints in multi-constrained tasks and 1.78 times fewer constraint violations in single-constrained tasks compared to safe RL baselines.
Code is available at: \texttt{https://github.com/rllab-snu/Safe-Distributional-Actor-Critic}.
\end{abstract}

\section{Introduction}

Deep reinforcement learning (RL) enables reliable control of complex robots \citep{merel2020humanoid, peng2021amp, rudin2022walk}.
In order to successfully apply RL to real-world systems, it is essential to design a proper reward function which reflects safety guidelines, such as collision avoidance and limited energy consumption, as well as the goal of the given task.
However, finding the reward function that considers all such factors involves a cumbersome and time-consuming process since RL algorithms must be repeatedly performed to verify the results of the designed reward function.
Instead, \emph{safe RL}, which handles safety guidelines as constraints, is an appropriate solution.
A safe RL problem can be formulated using a constrained Markov decision process \citep{altman1999cmdp}, where not only the reward but also cost functions are defined to provide the safety guideline signals.
By defining constraints using expectation or risk measures of the sum of costs, safe RL aims to maximize returns while satisfying the constraints.
Under the safe RL framework, the training process becomes straightforward since there is no need to search for a reward that reflects the safety guidelines.

While various safe RL algorithms have been proposed to deal with the safety guidelines, their applicability to general robotic applications remains limited due to the insufficiency in \textbf{1)} handling multiple constraints and \textbf{2)} minimizing failures, such as robot breakdowns after collisions.
First, many safety-critical applications require multiple constraints, such as maintaining distance from obstacles, limiting operational space, and preventing falls.
Lagrange-based safe RL methods \citep{yang2020wcsac, zhang2022safedist, bai2022zeroviolate}, which convert a safe RL problem into a dual problem and update the policy and Lagrange multipliers, are commonly used to solve these multi-constrained problems.
However, the Lagrangian methods are difficult to guarantee satisfying constraints during training theoretically, and the training process can be unstable due to the multipliers \citep{stooke2020pid}.
To this end, trust region-based methods \citep{yang2020pcpo, kim2022offtrc}, which can ensure to improve returns while satisfying the constraints under tabular settings \citep{achiam2017cpo}, have been proposed as an alternative to stabilize the training process.
Still, trust region-based methods have a critical issue.
Depending on the initial policy settings, there can be an infeasible starting case, meaning that no policy within the trust region satisfies constraints.
To address this issue, we can sequentially select a violated constraint and update the policy to reduce the selected constraint \citep{xu2021crpo}. 
However, this can be inefficient as only one constraint is considered per update.
It will be better to handle multiple constraints at once, but it is a remaining problem to find a policy gradient that reflects several constraints and guarantees to reach a feasible policy set. 

Secondly, as RL settings are inherently stochastic, employing risk-neutral measures like expectation to define constraints can lead to frequent failures. 
Hence, it is crucial to define constraints using risk measures, such as conditional value at risk (CVaR), as they can reduce the potential for massive cost returns by emphasizing tail distributions \citep{yang2020wcsac, kim2022offtrc}.
In safe RL, critics are used to estimate the constraint values.
Especially, to estimate constraints based on risk measures, it is required to use distributional critics \citep{dabney2018quantile}, which can be trained using the distributional Bellman update \citep{bellemare17distributional}.
However, the Bellman update only considers the one-step temporal difference, which can induce a large bias.
The estimation bias makes it difficult for critics to judge the policy, which can lead to the policy becoming overly conservative or risky, as shown in Section \ref{sec:ablation study in main text}.
In particular, when there are multiple constraints, the likelihood of deriving incorrect policy gradients due to estimation errors grows exponentially.
Therefore, there is a need for a method that can train distributional critics with low biases.

In this paper, we propose a trust region-based safe RL algorithm called a \emph{safe distributional actor-critic} (SDAC), designed to effectively manage multiple constraints and estimate risk-averse constraints with low biases.
First, to handle the infeasible starting case by considering all constraints simultaneously, we propose a \emph{gradient integration method} that projects unsafe policies into a feasible policy set by solving a quadratic program (QP) consisting of gradients of all constraints.
It guarantees to obtain a feasible policy within a finite time under mild technical assumptions, and we experimentally show that it can restore the policy more stably than the existing method \citep{xu2021crpo}. 
Furthermore, by updating the policy using the trust region method with the integrated gradient, our approach makes the training process more stable than the Lagrangian method, as demonstrated in Section \ref{sec: locomotion tasks}.
Second, to train critics to estimate constraints with low biases, we propose a \emph{TD($\lambda$) target distribution} which can adjust the bias-variance trade-off. The target distribution is obtained by merging the quantile regression losses \citep{dabney2018quantile} of multi-step distributions and extracting a unified distribution from the loss.
The unified distribution is then projected onto a quantile distribution set in a memory-efficient manner. 
We experimentally show that the target distribution can trade off the bias-variance of the constraint estimations (see Section \ref{sec:ablation study in main text}).

We conduct extensive experiments with multi-constrained locomotion tasks and single-constrained Safety Gym tasks \citep{ray2019safetygym} to evaluate the proposed method. 
In the locomotion tasks, SDAC shows 1.93 times fewer steps to satisfy all constraints than the second-best baselines.
In the Safety Gym tasks, the proposed method shows 1.78 times fewer constraint violations than the second-best methods while achieving high returns when using risk-averse constraints. 
As a result, it is shown that the proposed method can efficiently handle multiple constraints using the gradient integration method and effectively lower the constraint violations using the low-biased distributional critics.

\section{Background}

\textbf{Constrained Markov Decision Processes.}
We formulate the safe RL problem using constrained Markov decision processes (CMDPs) \citep{altman1999cmdp}.
A CMDP is defined as $(S$, $A$, $P$, $R$, $C_{1,..,K}$, $\rho$, $\gamma)$, where $S$ is a state space, $A$ is an action space, $P: S\times A\times S \mapsto [0, 1]$ is a transition model, $R:S\times A\times S \mapsto \mathbb{R}$ is a reward function, $C_{k\in\{1,...,K\}}:S\times A\times S \mapsto \mathbb{R}_{\geq0}$ are cost functions, $\rho:S\mapsto[0, 1]$ is an initial state distribution, and $\gamma \in (0,1)$ is a discount factor.
Given a policy $\pi$ from a stochastic policy set $\Pi$, the discounted state distribution is defined as $d^{\pi}(s):=(1-\gamma)\sum_{t=0}^\infty \gamma^t\mathrm{Pr}(s_t=s|\pi)$, and the return is defined as
$Z_R^{\pi}(s, a) :=\sum_{t=0}^\infty \gamma^t R(s_t, a_t, s_{t+1})$, where $s_0=s, \; a_0 = a, \; a_t\sim\pi(\cdot|s_t)$, and $s_{t+1}\sim P(\cdot|s_t, a_t)$.
Then, the state value and state action value functions are defined as: $V_R^{\pi}(s):=\mathbb{E}\left[Z_R^\pi(s,a)|a \sim \pi(\cdot|s)\right], \; Q_R^{\pi}(s, a):=\mathbb{E}\left[Z_R^\pi(s,a)\right]$.
By substituting the costs for the reward, the cost value functions $V_{C_k}^\pi(s)$ and $Q_{C_k}^\pi(s, a)$ are defined.
In the remainder of the paper, the cost parts will be omitted since they can be retrieved by replacing the reward with the costs.
Then, the safe RL problem is defined as follows with a safety measure $F$:
\begin{equation}
\label{eq:safe RL problem}
\mathrm{maximize}_\pi\;J(\pi) \; \mathbf{s.t.} \; F(Z_{C_k}^{\pi}(s,a)|s\sim\rho, a\sim\pi(\cdot|s)) \leq d_k \; \forall k,    
\end{equation}
where $J(\pi) := \mathbb{E}\left[Z_R^{\pi}(s,a)|s\sim\rho, a\sim\pi(\cdot|s)\right] + \beta\mathbb{E}\left[ \sum_{t=0}^{\infty}\gamma^t H(\pi(\cdot|s_t))|\rho, \pi, P\right]$, $\beta$ is an entropy coefficient, $H$ is the Shannon entropy, and $d_k$ is a threshold of the $k$th constraint.

\textbf{Trust-Region Method With a Mean-Std Constraint.}
\citet{kim2022offtrc} have proposed a trust region-based safe RL method with a risk-averse constraint, called a \emph{mean-std} constraint.
The definition of the mean-std constraint function is as follows:
\begin{equation}
\label{eq:mean-std definition}
F(Z;\alpha):=\mathbb{E}[Z]+(\phi(\Phi^{-1}(\alpha))/\alpha)\cdot\mathrm{Std}[Z],
\end{equation}
where $\alpha \in (0, 1]$ adjusts the risk level of constraints, $\mathrm{Std}[Z]$ is the standard deviation of $Z$, and $\phi$ and $\Phi$ are the probability density function and the cumulative distribution function (CDF) of the standard normal distribution, respectively.
In particular, setting $\alpha=1$ causes the standard deviation part to be zero, so the constraint becomes a risk-neutral constraint.
Also, the mean-std constraint can effectively reduce the potential for massive cost returns, as shown in \citet{yang2020wcsac, kim2022offtrc, kim2022trc}.
In order to calculate the mean-std constraint, it is essential to estimate the standard deviation of the cost return.
To this end, \citet{kim2022offtrc} define the square value functions: 
\begin{equation}
S_{C_k}^{\pi}(s):=\mathbb{E}\left[Z_{C_k}^{\pi}(s,a)^2|a\sim\pi(\cdot|s)\right], \; S_{C_k}^{\pi}(s, a):=\mathbb{E}\left[Z_{C_k}^{\pi}(s,a)^2\right]. 
\end{equation}
Since $\mathrm{Std}[Z]^2 = \mathbb{E}[Z^2] - \mathbb{E}[Z]^2$, the $k$th constraint can be written as follows:
\begin{equation}
\label{eq:mean-std constraint}
\begin{aligned}
&F_k(\pi;\alpha) = J_{C_k}(\pi) + (\phi(\Phi^{-1}(\alpha))/\alpha)\cdot\sqrt{J_{S_k}(\pi) - J_{C_k}(\pi)^2} \leq d_k,    
\end{aligned}
\end{equation}
where $J_{C_k}(\pi) := \mathbb{E}\left[V_{C_k}^{\pi}(s)|s\sim\rho\right]$, $J_{S_k}(\pi) := \mathbb{E}\left[S_{C_k}^{\pi}(s)|s\sim\rho\right]$.
In order to apply the trust region method \citep{schulman2015trpo}, it is necessary to derive surrogate functions for the objective and constraints.
These surrogates can substitute for the objective and constraints within the trust region.
Given a behavioral policy $\mu$ and the current policy $\pi_\mathrm{old}$, we denote the surrogates as $J^{\mu, \pi_\mathrm{old}}(\pi)$ and $F_k^{\mu, \pi_\mathrm{old}}(\pi;\alpha)$.
For the definition and derivation of the surrogates, please refer to Appendix \ref{sec:surrogate} and \citep{kim2022offtrc}.
Using the surrogates, a policy can be updated by solving the following subproblem:
\begin{equation}
\label{eq:safe RL subproblem}
\mathrm{maximize}_{\pi'}\;J^{\mu, \pi}(\pi') \; \mathbf{s.t.} \; D_{KL}(\pi||\pi') \leq \epsilon, F_k^{\mu, \pi}(\pi', \alpha) \leq d_k \; \forall k,    
\end{equation}
where $D_{\mathrm{KL}}(\pi||\pi'):=\mathbb{E}_{s\sim d^{\mu}}\left[D_{\mathrm{KL}}(\pi(\cdot|s)||\pi'(\cdot|s))\right]$, $D_\mathrm{KL}$ is the KL divergence, and $\epsilon$ is a trust region size.
This subproblem can be solved through approximation and a line search (see Appendix \ref{sec:policy update rule}).
However, it is possible that there is no policy satisfying the constraints of (\ref{eq:safe RL subproblem}). 
In order to tackle this issue, the policy must be projected onto a feasible policy set that complies with all constraints, yet there is a lack of such methods. 
In light of this issue, we introduce an efficient feasibility handling method for multi-constrained RL problems.

\textbf{Distributional Quantile Critic.}
\citet{dabney2018quantile} have proposed a method for approximating the random variable $Z^\pi_R$ to follow a quantile distribution.
Given a parametric model, $\theta:S\times A\mapsto \mathbb{R}^M$, $Z_R^\pi$ can be approximated as $Z_{R, \theta}$, called a \emph{distributional quantile critic}.
The probability density function of $Z_{R, \theta}$ is defined as follows:
\begin{equation}
\mathrm{Pr}(Z_{R,\theta}(s,a)=z) := \sum\nolimits_{m=1}^{M}\delta_{\theta_m(s, a)}(z)/M,
\end{equation}
where $M$ is the number of atoms, $\theta_m(s, a)$ is the $m$th atom, $\delta$ is the Dirac function, and $\delta_a(z) := \delta(z-a)$.
The percentile value of the $m$th atom is denoted by $\tau_{m}$ ($\tau_0=0, \tau_i=i/M$).
In distributional RL, the returns are directly estimated to get value functions, and the target distribution can be calculated from the distributional Bellman operator \citep{bellemare17distributional}: $\mathcal{T}^{\pi}Z_R(s, a):\stackrel{D}{=}R(s,a,s')+\gamma Z_R(s',a')$, where $s'\sim P(\cdot|s,a)$ and $a'\sim\pi(\cdot|s')$.
The above one-step distributional operator can be expanded to the $n$-step one: $\mathcal{T}_n^{\pi}Z_R(s_0, a_0):\stackrel{D}{=}\sum_{t=0}^{n-1}\gamma^tR(s_t,a_t,s_{t+1})+\gamma^n Z_R(s_n, a_n)$, where $a_t \sim \pi(\cdot|s_t)$ for $t=1, ..., n$.
Then, the critic can be trained to minimize the following quantile regression loss \citep{dabney2018quantile}:
\begin{equation}
\label{eq:quantile regression loss}
\small
\mathcal{L}(\theta)= \sum_{m=1}^M\underbrace{\mathbb{E}_{(s,a) \sim D} \left[\mathbb{E}_{Z \sim \mathcal{T}^{\pi}Z_{R, \theta}(s,a)}\left[\rho_{\bar{\tau}_m}(Z - \theta_m(s,a))\right] \right]}_{=:\mathcal{L}^{\bar{\tau}_m}_{\mathrm{QR}}(\theta_m)}, \; \mathrm{where} \; \rho_\tau(x) = x\cdot (\tau - \mathbf{1}_\mathrm{x<0}),
\end{equation}
$D$ is a replay buffer, $\bar{\tau}_m:=(\tau_{m-1}+\tau_{m})/2$, and $\mathcal{L}^{\bar{\tau}_m}_{\mathrm{QR}}(\theta_m)$ denotes the quantile regression loss for a single atom.
The distributional quantile critic can be plugged into existing actor-critic algorithms because only the critic modeling part is changed.

\section{Proposed Method}

The proposed method comprises two key components: \textbf{1)} a feasibility handling method required for multi-constrained safe RL problems and \textbf{2)} a target distribution designed to minimize estimation bias. This section sequentially presents these components, followed by a detailed explanation of the proposed method.

\begin{figure}[t]
\centering
\includegraphics[width=0.8\linewidth]{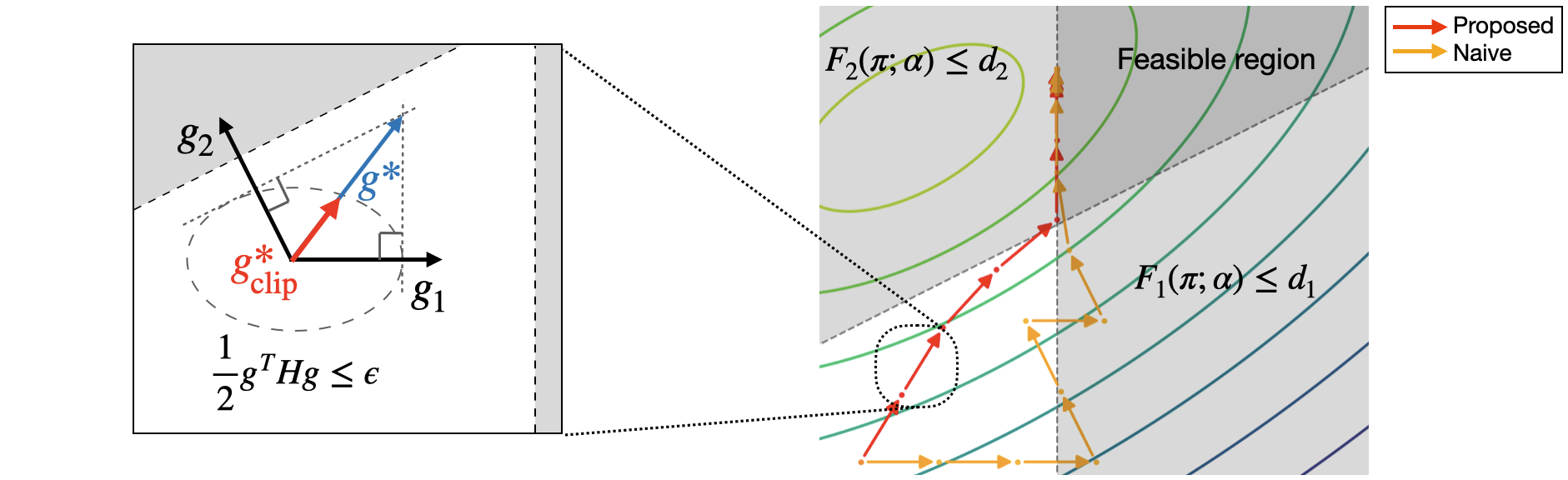}
\caption{
\small
\textbf{Left: Gradient integration.} 
Each constraint is truncated to be tangent to the trust region indicated by the ellipse, and the dashed straight lines show the truncated constraints.
The solution of (\ref{eq: gradient integration}) is indicated in blue, pointing to the nearest point in the intersection of the truncated constraints.
If the solution crosses the trust region, parameters are updated by the clipped direction, shown in red.
\textbf{Right: Optimization paths of the proposed and naive method in a toy example}.
The description is presented in Appendix \ref{sec:toy_example_fig_grad_integ}.
The contour graph represents the objective of the toy example.
The optimization paths exhibit distinct characteristics due to the difference that the naive method reflects only one constraint and the proposed method considers all constraints at once.
}
\label{fig:grad integration}
\end{figure}

\subsection{Feasibility Handling For Multiple Constraints}
\label{sec:feasibility handling}

An optimal safe RL policy can be found by iteratively solving the subproblem (\ref{eq:safe RL subproblem}), but the feasible set of (\ref{eq:safe RL subproblem}) can be empty in the infeasible starting cases. 
To address the feasibility issue in safe RL with multiple constraints, one of the violated constraints can be selected, and the policy is updated to minimize the constraint until the feasible region is not empty \citep{xu2021crpo}, which is called a \emph{naive approach}.
However, it may not be easy to quickly reach the feasible condition if only one constraint at each update step is used to update the policy.
Therefore, we propose a feasibility handling method which reflect all the constraints simultaneously, called a \emph{gradient integration method}.
The main idea is to get a gradient that reduces the value of violated constraints and keeps unviolated constraints.
To find such a gradient, the following quadratic program (QP) can be formulated by linearly approximating the constraints:
\begin{equation}
\label{eq: gradient integration}
\small
\begin{aligned}
&g^* = \underset{g}{\mathrm{argmin}} \; \frac{1}{2}g^THg \;\; \mathbf{s.t.} \; g_k^T g + c_k \leq 0 \; \forall k, \; c_k:=\mathrm{min}(\sqrt{\smash[b]{2\epsilon g_k^T H^{-1}g_k}}, F_k(\pi_{\psi};\alpha) -d_k + \zeta),
\end{aligned}
\end{equation}
where $H$ is the Hessian of KL divergence between the previous policy and the current policy with parameters $\psi$, $g_k$ is the gradient of the $k$th constraint, $c_k$ is a truncated threshold to make the $k$th constraint tangent to the trust region, $\epsilon$ is a trust region size, and $\zeta \in \mathbb{R}_{>0}$ is a slack coefficient.
The reason why we truncate constraints is to make the gradient integration method invariant to the gradient scale.
Otherwise, constraints with larger gradient scales might produce a dominant policy gradient.
Finally, we update the policy parameters using the clipped gradient as follows:
\begin{equation}
\psi^* = \psi + \mathrm{min}(1, \sqrt{\smash[b]{2\epsilon/(g^{*T}Hg^*)}}) g^*.  
\end{equation}
Figure \ref{fig:grad integration} illustrates the proposed gradient integration process.
In summary, the policy is updated by solving (\ref{eq:safe RL subproblem}); if there is no solution to (\ref{eq:safe RL subproblem}), it is updated using the gradient integration method.
Then, the policy can reach the feasibility condition within finite time steps.
\begin{restatable}{theorem}{feasible}
\label{thm:feasible}
Assume that the constraints are differentiable and convex, gradients of the constraints are $L$-Lipschitz continuous, eigenvalues of the Hessian are equal or greater than a positive value $R\in\mathbb{R}_{>0}$, and $\{\psi|F_k(\pi_\psi;\alpha) + \zeta < d_k, \; \forall k\} \neq \emptyset$.
Then, there exists $E \in \mathbb{R}_{>0}$ such that if $0 < \epsilon \leq E$ and a policy is updated by the proposed gradient integration method, all constraints are satisfied within finite time steps.
\end{restatable}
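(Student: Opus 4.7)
My plan is three-fold: (1) show the quadratic program in (\ref{eq: gradient integration}) is always feasible, so that $g^*$ is well-defined at every iteration; (2) derive a per-step bound showing that every violated constraint decreases by a quantitative amount; and (3) combine these facts to conclude finite-time feasibility. To establish (1), I would use the Slater-type assumption to exhibit a feasible direction. Pick $\psi^\dagger$ with $F_k(\pi_{\psi^\dagger};\alpha) + \zeta < d_k$ for all $k$, and set $\tilde g := \psi^\dagger - \psi$. By convexity of each $F_k$ and the first-order inequality,
\begin{equation*}
g_k^T \tilde g \;\leq\; F_k(\pi_{\psi^\dagger};\alpha) - F_k(\pi_{\psi};\alpha) \;\leq\; -(F_k(\pi_{\psi};\alpha) - d_k + \zeta) \;\leq\; -c_k,
\end{equation*}
where the last step uses $c_k \leq F_k(\pi_{\psi};\alpha) - d_k + \zeta$ by definition, so $\tilde g$ lies in the QP feasible set. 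Strong convexity of $\tfrac{1}{2}g^T H g$ (eigenvalues of $H$ are $\geq R$) then guarantees a unique minimizer $g^*$.

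For step (2), I would combine $L$-Lipschitz smoothness of the constraint gradients with the trust-region clipping. The clipped update $\psi^+ = \psi + \eta g^*$ with $\eta = \min(1, \sqrt{2\epsilon/(g^{*T}Hg^*)})$ satisfies $(\eta g^*)^T H (\eta g^*) \leq 2\epsilon$, which together with $\lambda_{\min}(H) \geq R$ yields $\|\eta g^*\|^2 \leq 2\epsilon/R$. The descent lemma implied by $L$-Lipschitz gradients then gives
\begin{equation*}
F_k(\pi_{\psi^+};\alpha) \;\leq\; F_k(\pi_{\psi};\alpha) + \eta g_k^T g^* + \tfrac{L}{2}\|\eta g^*\|^2 \;\leq\; F_k(\pi_{\psi};\alpha) - \eta c_k + \tfrac{L\epsilon}{R},
\end{equation*}
using the QP constraint $g_k^T g^* \leq -c_k$ from (\ref{eq: gradient integration}). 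Whenever the $k$th constraint is violated, $F_k - d_k + \zeta > \zeta > 0$, so $c_k > 0$ and the first-order term is strictly negative.

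The main obstacle is step (3): one must lower-bound $\eta c_k - L\epsilon/R$ by a positive constant uniformly over iterations. Because $c_k$ may equal $\sqrt{2\epsilon g_k^T H^{-1} g_k}$ and $\eta$ may equal $\sqrt{2\epsilon/(g^{*T}Hg^*)}$, both shrink with $\epsilon$; one must choose $E$ small enough that the product $\eta c_k$ (itself of order $\epsilon$) dominates the second-order remainder $L\epsilon/R$. A Slater-based argument combined with convexity of $F_k$, namely $\|g_k\|\,\|\psi - \psi^\dagger\| \geq g_k^T(\psi - \psi^\dagger) \geq F_k(\pi_{\psi};\alpha) - F_k(\pi_{\psi^\dagger};\alpha) \geq \zeta$, lower-bounds $\|g_k\|$ away from zero whenever $F_k$ is violated, provided the iterates remain in a bounded region; one would verify the latter by monitoring the Lyapunov quantity $\|\psi - \psi^\dagger\|^2$ and showing it does not grow beyond its initial value for sufficiently small $\epsilon$. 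Once a uniform decrement $\delta > 0$ per step is established for every violated constraint, any $F_k$ initially violated by $\Delta_k$ can remain violated for at most $\Delta_k/\delta$ steps, and taking the maximum over $k$ yields feasibility within finite time.
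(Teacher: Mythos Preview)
Your steps (1) and (2) coincide with the paper's argument: feasibility of the QP via the Slater point (the paper's Lemma~\ref{lemma:QP feasibility}) and the descent-lemma bound $F_k(\psi^+)\le F_k(\psi)-\eta c_k+\tfrac{L}{2R}\eta^2\|b_t\|^2$ with $b_t:=B^Tg^*$, $H=BB^T$, are exactly what the paper establishes first.

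The genuine gap is in step (3). You correctly identify that in the clipped regime $\eta=\sqrt{2\epsilon}/\|b_t\|$ and $c_k=\sqrt{2\epsilon\,g_k^TH^{-1}g_k}$, so $\eta c_k=2\epsilon\,\|\bar g_k\|/\|b_t\|$ with $\bar g_k:=B^{-1}g_k$. But then \emph{both} $\eta c_k$ and the remainder $L\epsilon/R$ are linear in $\epsilon$, and shrinking $\epsilon$ cannot make one dominate the other; the comparison reduces to whether the constant $2\|\bar g_k\|/\|b_t\|$ exceeds $L/R$, which your Lyapunov/Slater sketch does not secure. A Slater bound gives $\|g_k\|\ge\zeta/\|\psi-\psi^\dagger\|$, but converting this to a lower bound on $\|\bar g_k\|=\|B^{-1}g_k\|$ and an upper bound on $\|b_t\|=\|B^Tg^*\|$ requires control of $\lambda_{\max}(H)$, which is not assumed, and even with that control the resulting constant need not beat $L/R$.

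The paper closes this gap differently. It proves a structural lemma (Lemma~\ref{lemma:b_k}) showing $\|b_t\|\le T\sqrt{\epsilon}$ for some constant $T$: for small $\epsilon$ the active QP constraints are exactly those with $c_k=\sqrt{2\epsilon}\|\bar g_k\|$, and the solution of that homogeneous subproblem scales like $\sqrt{\epsilon}$. With $\|b_t\|=O(\sqrt{\epsilon})$ the remainder becomes $\tfrac{L}{2R}\beta_t^2\|b_t\|^2=O(\epsilon)$ while $\eta$ itself stays bounded below by $\sqrt{2}/T$, so the condition $\|b_t\|<2mR/L$ (with $m:=\inf_k\inf_{\mathrm{OB}_k}\|\bar g_k\|>0$ by convexity) is met for $\epsilon\le E\propto (mR/(LT))^2$. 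The paper also splits the analysis into an out-boundary set $\mathrm{OB}_k$ (where $c_k=\sqrt{2\epsilon}\|\bar g_k\|$ and one obtains strict decrease) and an in-boundary set $\mathrm{IB}_k$ (where $c_k=F_k-d_k+\zeta$ and one obtains $F_k(\psi^+)\le d_k$ directly, using the additional condition $\sqrt{2\epsilon}M\le\zeta$). Your proposal would be complete if you replaced the Lyapunov heuristic by an argument yielding $\|b_t\|=O(\sqrt{\epsilon})$; without it, the ``choose $E$ small enough'' step does not go through.
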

Note that the first two assumptions of Theorem \ref{thm:feasible} are commonly used in multi-task learning \citep{liu2021cagrad, yu2020pcgrad, navon2022nash}, and the assumption on eigenvalues is used in most trust region-based RL methods \citep{schulman2015trpo, kim2022offtrc}, so the assumptions in Theorem \ref{thm:feasible} can be considered reasonable. 
We provide the proof and show the existence of a solution (\ref{eq: gradient integration}) in Appendix \ref{sec:proof_thm2}.
The provided proof shows that the constant $E$ is proportional to $\zeta$.
This means that the trust region size should be set smaller as $\zeta$ decreases.
Also, we further analyze the worst-case time to satisfy all constraints by comparing the gradient integration method and naive approach in Appendix \ref{sec:worst case time}.
In conclusion, if the policy update rule (\ref{eq:safe RL subproblem}) is not feasible, a finite number of applications of the gradient integration method will make the policy feasible. 

\subsection{TD($\lambda$) Target Distribution}
\label{sec:target value distribution}

The mean-std constraints can be estimated using the distributional quantile critics.
Since the estimated constraints obtained from the critics are directly used to update policies in (\ref{eq:safe RL subproblem}), estimating the constraints with low biases is crucial.
In order to reduce the estimation bias of the critics, we propose a target distribution by capturing that the TD($\lambda$) loss, which is obtained by a weighted sum of several losses, and the quantile regression loss with a single distribution are identical.
A recursive method is then introduced so that the target distribution can be obtained practically.
First, the $n$-step targets for the current policy $\pi$ are estimated as follows, after collecting trajectories $(s_t, a_t, s_{t+1}, ...)$ with a behavioral policy $\mu$:
\begin{equation}
\label{eq:sampled n step target}
\begin{aligned}
\hat{Z}^{(n)}_R(s_t, a_t) &:\stackrel{D}{=}R_{t} + \gamma R_{t+1} + \cdots + \gamma^{n-1}R_{t+n-1} + \gamma^n Z_{R,\theta}(s_{t+n}, \hat{a}_{t+n}),
\end{aligned}
\end{equation}
where $R_{t}=R(s_{t},a_{t},s_{t+1})$, and $\hat{a}_{t+n} \sim \pi(\cdot|s_{t+n})$. 
Note that the $n$-step target controls the bias-variance tradeoff using $n$. If $n$ is equal to $1$, the $n$-step target is equivalent to the temporal difference method that has low variance but high bias. On the contrary, if $n$ increases to infinity, it becomes a Monte Carlo estimation that has high variance but low bias.
However, finding proper $n$ is another cumbersome task. To alleviate this issue, TD($\lambda$) \citep{sutton1988learning} method considers the discounted sum of all $n$-step targets. 
Similar to TD($\lambda$), we define the TD($\lambda$) loss for the distributional quantile critic as the discounted sum of all quantile regression losses with $n$-step targets.
Then, the TD($\lambda$) loss for a single atom is approximated using importance sampling of the sampled $n$-step targets in (\ref{eq:sampled n step target}) as:
\begin{equation}
\label{eq:total quantile loss}
\small
\begin{aligned}
&\mathcal{L}^{\bar{\tau}_m}_{\mathrm{QR}}(\theta_m) = (1 - \lambda)\sum_{i=0}^\infty\lambda^i \mathbb{E}_{(s_t, a_t) \sim D} \left[ \mathbb{E}_{Z \sim\mathcal{T}_{i+1}^{\pi}Z_{R,\theta}(s_t, a_t)} \left[ \rho_{\bar{\tau}_m}(Z - \theta_m(s_t, a_t))\right] \right] \\
& = (1 - \lambda)\sum_{i=0}^\infty\lambda^i \mathbb{E}_{(s_t, a_t) \sim D} \left[ \mathbb{E}_{Z \sim \mathcal{T}_{i+1}^{\mu}Z_{R,\theta}(s_t, a_t)} \left[ \prod_{j=t+1}^{t+i}\frac{\pi(a_{j}|s_{j})}{\mu(a_{j}|s_{j})} \rho_{\bar{\tau}_m}(Z - \theta_m (s_t, a_t))\right] \right] \\
&\approx (1-\lambda)\sum_{i=0}^\infty\lambda^i \mathbb{E}_{(s_t, a_t) \sim D} \left[ \prod_{j=t+1}^{t+i}\frac{\pi(a_{j}|s_{j})}{\mu(a_{j}|s_{j})}\sum_{m=1}^{M} \frac{1}{M} \rho_{\bar{\tau}_m}(\hat{Z}^{(i+1)}_{R,m}(s_t, a_t) - \theta_m(s_t, a_t)) \right],
\end{aligned}
\end{equation}
where $\lambda$ is a trace-decay value, and $\hat{Z}^{(i)}_{R,m}$ is the $m$th atom of $\hat{Z}^{(i)}_{R}$.
Since $\hat{Z}_{R}^{(i)}(s_t,a_t) \stackrel{D}{=} R_{t} + \gamma\hat{Z}_{R}^{(i-1)}(s_{t+1}, a_{t+1})$ is satisfied, (\ref{eq:total quantile loss}) is the same as the quantile regression loss with the following single distribution $\hat{Z}_t^{\mathrm{tot}}$, called a \emph{TD($\lambda$) target distribution}:
\begin{equation}
\label{eq:target distribution}
\small
\begin{aligned}
&\mathrm{Pr}(\hat{Z}_t^{\mathrm{tot}} = z) := \frac{1}{\mathcal{N}_t}(1-\lambda)\sum_{i=0}^\infty\lambda^i\prod_{j=t+1}^{t+i}\frac{\pi(a_{j}|s_{j})}{\mu(a_{j}|s_{j})}\sum_{m=1}^{M} \frac{1}{M} \delta_{\hat{Z}^{(i+1)}_{R,m}(s_t, a_t)}(z) \\
& = \frac{1-\lambda}{\mathcal{N}_t}\bigg(\sum_{m=1}^{M}\frac{1}{M}\delta_{\hat{Z}^{(1)}_{t,m}}(z) + \lambda\frac{\pi(a_{t+1}|s_{t+1})}{\mu(a_{t+1}|s_{t+1})} \sum_{i=0}^\infty\lambda^i\prod_{j=t+2}^{t+1+i}\frac{\pi(a_{j}|s_{j})}{\mu(a_{j}|s_{j})} \sum_{m=1}^{M} \frac{1}{M} \delta_{\hat{Z}^{(i+2)}_{R,m}(s_t, a_t)}(z) \bigg) \\
&=\frac{1-\lambda}{\mathcal{N}_t}\underbrace{\sum_{m=1}^{M}\frac{1}{M}\delta_{\hat{Z}^{(1)}_{t,m}}(z)}_{\text{(a)}} + \lambda\frac{\mathcal{N}_{t+1}}{\mathcal{N}_t}\frac{\pi(a_{t+1}|s_{t+1})}{\mu(a_{t+1}|s_{t+1})} \underbrace{\mathrm{Pr}(R_{t}+\gamma\hat{Z}^{\mathrm{tot}}_{t+1}=z)}_{\text{(b)}},
\end{aligned}
\end{equation}
where $\mathcal{N}_t$ is a normalization factor.
If the target for time step $t+1$ is obtained, the target distribution for time step $t$ becomes the weighted sum of \textbf{(a)} the current one-step TD target and \textbf{(b)} the shifted previous target distribution, so it can be obtained recursively, as shown in (\ref{eq:target distribution}).
Since the definition requires infinite sums, the recursive way is more practical for computing the target.
Nevertheless, to obtain the target in that recursive way, we need to store all quantile positions and weights for all time steps, which is not memory-efficient.
Therefore, we propose to project the target distribution into a quantile distribution with a specific number of atoms, $M'$ (we set $M' > M$ to reduce information loss).
The overall process to get the TD$(\lambda)$ target distribution is illustrated in Figure \ref{fig:target distribution}, and the pseudocode is given in Appendix \ref{sec:pseudocode_target}.
Furthermore, we can show that a distribution trained with the proposed target converges to the distribution of $Z_R^{\pi}$.
\begin{restatable}{theorem}{converge}
\label{thm:target}
Let define a distributional operator $\mathcal{T}_\lambda^{\mu,\pi}$, whose probability density function is: 
\begin{equation}
\label{eq:distributional_operator}
\small
\begin{aligned}
&\mathrm{Pr}(\mathcal{T}_\lambda^{\mu,\pi}Z(s,a)\!=\!z) \propto \\
&\;\; \sum_{i=0}^\infty \mathbb{E}_\mu \! \left[\lambda^i\prod_{j=1}^i\frac{\pi(a_j|s_j)}{\mu(a_j|s_j)} \underset{a'\sim \pi(\cdot|s_{i+1})}{\mathbb{E}}\!\left[\mathrm{Pr}\left(\sum_{t=0}^i\gamma^t R_t \!+ \! \gamma^{i+1}Z(s_{i+1}, a')\!=\!z \right)\right] \!\! \bigg|s_0=s, a_0=a\right].
\end{aligned}
\end{equation}
Then, a sequence, $Z_{k+1}(s,a)=\mathcal{T}_\lambda^{\mu,\pi}Z_k(s, a) \; \forall(s, a)$, converges to $Z_R^\pi$.
\end{restatable}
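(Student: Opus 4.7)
The plan is to recognize $\mathcal{T}_\lambda^{\mu,\pi}$ as a convex mixture of $n$-step distributional Bellman operators under $\pi$ and then invoke the Banach fixed-point theorem in a supremum Wasserstein metric. First I would absorb the importance sampling weights: under the trajectory measure induced by $\mu$ with $s_0=s,\,a_0=a$ fixed, the product $\prod_{j=1}^{i}\pi(a_j|s_j)/\mu(a_j|s_j)$ is exactly the Radon--Nikodym derivative converting $\mathbb{E}_\mu$ into $\mathbb{E}_\pi$. Hence the inner expectation in (\ref{eq:distributional_operator}) equals the density of the $(i{+}1)$-step distributional target $\mathcal{T}_{i+1}^{\pi} Z(s,a)$ evaluated at $z$, and integrating the geometric weights over $z$ yields total mass $1/(1-\lambda)$. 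The proportionality constant is therefore $1-\lambda$, and the operator can be rewritten as the mixture
\begin{equation*}
\mathcal{T}_\lambda^{\mu,\pi} Z(s,a) \stackrel{D}{=} (1-\lambda)\sum_{i=0}^{\infty}\lambda^{i}\,\mathcal{T}_{i+1}^{\pi}Z(s,a).
\end{equation*}

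Next I would verify that $Z_R^{\pi}$ is a fixed point: iterating the one-step distributional Bellman equation gives $\mathcal{T}_n^{\pi}Z_R^{\pi}\stackrel{D}{=}Z_R^{\pi}$ for every $n\geq 1$, and the identity is inherited by the convex mixture. To obtain uniqueness and convergence I would work in the complete metric space of state-action-indexed distributions with finite first moment, equipped with the supremum Wasserstein-$1$ distance $\bar{d}_1(Z,Z'):=\sup_{s,a} d_1(Z(s,a),Z'(s,a))$. The operator $\mathcal{T}_n^{\pi}$ is a $\gamma^n$-contraction in $\bar{d}_1$, a standard fact from \citep{bellemare17distributional}, and the convexity of optimal transport yields $d_1(\sum_i w_i\mu_i,\sum_i w_i\nu_i)\leq \sum_i w_i d_1(\mu_i,\nu_i)$ via the mixture of optimal couplings. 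Combining these two bounds,
\begin{equation*}
\bar{d}_1(\mathcal{T}_\lambda^{\mu,\pi}Z,\mathcal{T}_\lambda^{\mu,\pi}Z') \;\leq\; (1-\lambda)\sum_{i=0}^{\infty}\lambda^{i}\gamma^{i+1}\,\bar{d}_1(Z,Z') \;=\; \frac{\gamma(1-\lambda)}{1-\lambda\gamma}\,\bar{d}_1(Z,Z'),
\end{equation*}
and the contraction factor $\gamma(1-\lambda)/(1-\lambda\gamma)$ is strictly below $1$ whenever $\gamma<1$.

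Applying the Banach fixed-point theorem then delivers a unique fixed point to which every iterate $Z_{k+1}=\mathcal{T}_\lambda^{\mu,\pi}Z_k$ converges, and since $Z_R^{\pi}$ has already been exhibited as a fixed point, it is that limit. The main obstacle I anticipate is the technical bookkeeping around the infinite-sum definition: I have to check that the mixture is a well-defined probability measure (which requires controlling the moments of $\mathcal{T}_n^{\pi}Z$ uniformly in $n$ and justifying the interchange of summation and integration in the importance-sampling rewrite) and that the coupling-convexity inequality passes to a countable mixture. A clean way around this is to argue first for the truncated operator $\sum_{i=0}^{N}$, establish the contraction with factor converging to $\gamma(1-\lambda)/(1-\lambda\gamma)$ as $N\to\infty$, and then pass to the limit using continuity of $\bar{d}_1$ under weak convergence of the truncated mixtures.
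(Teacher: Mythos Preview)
Your proposal is correct and follows essentially the same strategy as the paper: show the operator is a contraction in a supremum metric over state-action distributions, verify that $Z_R^\pi$ is a fixed point, and conclude by Banach. The paper works directly with the $\mu$-expectation form, rewriting the operator as a convex combination over trajectories with weights $w_{s',a',r'_{0:i}}$, and proves $\gamma^{1/p}$-contractivity in the $\bar{l}_p$ metric of \citet{bdr2022} (for $p=1$ this is your $\bar{d}_1$). Your route---first absorbing the importance ratios to rewrite $\mathcal{T}_\lambda^{\mu,\pi}$ as the clean geometric mixture $(1-\lambda)\sum_i\lambda^i\mathcal{T}_{i+1}^\pi$ and only then bounding---is slightly more transparent and buys a sharper contraction modulus $\gamma(1-\lambda)/(1-\lambda\gamma)<\gamma$, since you retain the $\gamma^{i+1}$ factor in each summand rather than uniformly bounding it by $\gamma$ as the paper does. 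The truncation/limit bookkeeping you flag is handled in the paper simply by assuming finite state, action, and reward spaces, which makes all sums finite at each level and sidesteps the interchange issues.
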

The TD$(\lambda)$ target is a quantile distribution version of the distributional operator $\mathcal{T}_\lambda^{\mu,\pi}$ in Theorem \ref{thm:target}. 
Consequently, a distribution updated by minimizing the quantile regression loss with the TD$(\lambda)$ target converges to the distribution of $Z_R^\pi$ if the number of atoms is infinite, according to Theorem \ref{thm:target}.
The proof of Theorem \ref{thm:target} is provided in Appendix \ref{sec:proof_contraction}.
After calculating the target distribution for all time steps, the critic can be trained to reduce the quantile regression loss with the target distribution.
To provide more insight, we experiment with a toy example in Appendix \ref{sec:toy example of target}, and the results show that the proposed target distribution can trade off bias and variance through the trace-decay $\lambda$.

\begin{figure*}[t]
\begin{center}
\includegraphics[width=0.9\linewidth]{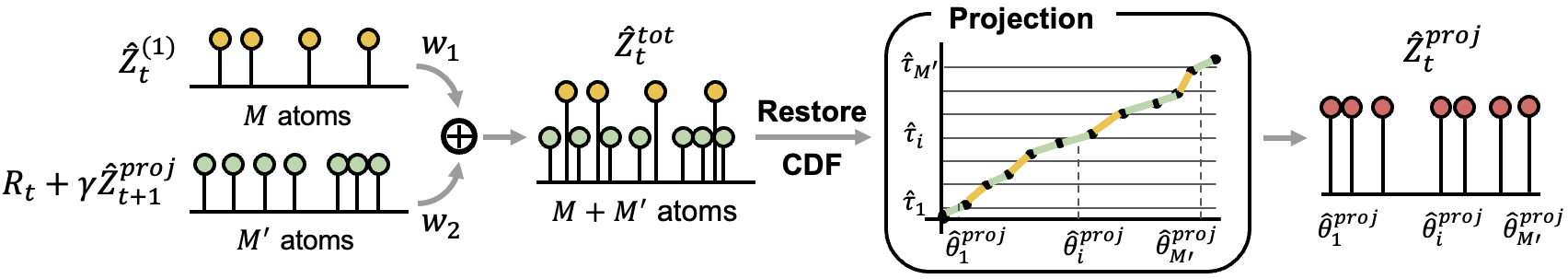}
\end{center}
\caption{
\small
\textbf{Procedure for constructing the target distribution.}
First, multiply the target at $t+1$ step by $\gamma$ and add $R_t$. Next, weight-combine the shifted previous target and one-step target at $t$ step and restore the CDF of the combined target.
The CDF can be restored by sorting the positions of the atoms and then accumulating the weights at each atom position.
Finally, the projected target can be obtained by finding the positions of the atoms corresponding to $M'$ quantiles in the CDF.
Using the projected target, the target at $t-1$ step can be found recursively.}
\label{fig:target distribution}
\end{figure*}

\subsection{Safe Distributional Actor-Critic}
\label{sec:TRSAC}

Finally, we describe the proposed method, safe distributional actor-critic (SDAC).
After collecting trajectories, the policy is updated by solving (\ref{eq:safe RL subproblem}), which can be solved through a line search (for more detail, see Appendix \ref{sec:policy update rule}).
The cost value and the cost square value functions in (\ref{eq:mean-std constraint}) can be obtained using the distributional critics as follows:
\begin{equation}
\begin{aligned}
Q_C^\pi(s, a) &= \int_{-\infty}^{\infty} z \mathrm{Pr}(Z^\pi_C(s,a)=z)dz \approx \frac{1}{M}\sum_{m=1}^{M}\theta_m(s, a), \\
S_C^\pi(s, a) &= \int_{-\infty}^{\infty} z^2 \mathrm{Pr}(Z^\pi_C(s,a)=z)dz \approx \frac{1}{M}\sum_{m=1}^{M}\theta_m(s, a)^2.
\end{aligned}
\end{equation}
If a solution of (\ref{eq:safe RL subproblem}) does not exist, the policy is projected into a feasible region through the proposed gradient integration method.
The critics can also be updated by the regression loss (\ref{eq:quantile regression loss}) between the target distribution obtained from (\ref{eq:target distribution}).
The proposed method is summarized in Algorithm \ref{algo:proposed method}.

\begin{algorithm}[t]
\caption{Safe Distributional Actor-Critic}
\label{algo:proposed method}
\begin{algorithmic}
\STATE {\bfseries Input:} Policy network $\pi_{\psi}$, reward and cost critic networks $Z_{R, \theta}^{\pi}$, $Z_{C_k, \theta}^{\pi}$, and replay buffer $\mathcal{D}$.
\STATE Initialize network parameters $\psi, \theta$, and replay buffer $\mathcal{D}$.
\FOR{epochs$\;=1$ {\bfseries to} $E$}
    \FOR{$t=1$ {\bfseries to} $T$}
        \STATE Sample $a_t\sim\pi_{\psi}(\cdot|s_t)$ and get $s_{t+1}, r_t=R(s_t, a_t, s_{t+1}),$ and $c_{k,t}=C_k(s_t, a_t, s_{t+1}) \; \forall k$.
        \STATE Store $(s_t, a_t, \pi_{\psi}(a_t|s_t), r_t, c_{\{1,...,K\},t}, s_{t+1})$ in $\mathcal{D}$.
    \ENDFOR
    \STATE Calculate the TD($\lambda$) target distribution (Section \ref{sec:target value distribution}) using $\mathcal{D}$ and update the critics to minimize the quantile loss defined in (\ref{eq:quantile regression loss}).
    \STATE Calculate the surrogates for the objective and constraints defined in (\ref{eq:original surrogate}) using $\mathcal{D}$.
    \STATE Update the policy by solving (\ref{eq:safe RL subproblem}), but if (\ref{eq:safe RL subproblem}) has no solution, take a recovery step (Section \ref{sec:feasibility handling}).
\ENDFOR
\end{algorithmic}
\end{algorithm}

\section{Related Work}

\textbf{Safe Reinforcement Learning.}
\citet{garcia2015comprehensive} and \citet{gu2022review} have researched and categorized safe RL methodologies from various perspectives. 
In this paper, we introduce safe RL methods depending on how to update policies to reflect safety constraints.
First, trust region-based safe RL methods \citep{achiam2017cpo, yang2020pcpo, kim2022offtrc} find policy update directions by approximating the safe RL problem as a linear-quadratic constrained linear program and update policies through a line search.
\citet{yang2020pcpo} also employ projection to meet a constraint; however, their method is limited to a single constraint and does not show to satisfy the constraint for the infeasible starting case.
Second, Lagrangian-based methods \citep{stooke2020pid, yang2020wcsac, liu2020ipo} convert the safe RL problem to a dual problem and update the policy and dual variables simultaneously.
Last, expectation-maximization (EM) based methods \citep{liu2022cvpo, zhang2022cdmpo} find non-parametric policy distributions by solving the safe RL problem in E-steps and fit parametric policies to the found non-parametric distributions in M-steps.
Also, there are other ways to reflect safety other than policy updates.
\citet{qin2021density, lee2022coptidice} find optimal state or state-action distributions that satisfy constraints, and \citet{bharadhwaj2021csc, thananjeyan2021rrl} reflect safety during exploration by executing only safe action candidates.
In the experiments, only the safe RL methods of the policy update approach are compared with the proposed method.

\textbf{Distributional TD($\lambda$).}
TD($\lambda$) \citep{precup2000tdtrace} can be extended to the distributional critic to trade off bias-variance.
\citet{gruslys2018reactor} have proposed a method to obtain target distributions by mixing $n$-step distributions, but the method is applicable only in discrete action spaces.
\citet{nam2021gmac} have proposed a method to obtain target distributions using sampling to apply to continuous action spaces, but this is only for on-policy settings.
A method proposed by \citet{tang2022disttd} updates the critics using newly defined distributional TD errors rather than target distributions. This method is applicable for off-policy settings but has the disadvantage that memory usage increases linearly with the number of TD error steps.
In contrast to these methods, the proposed method is memory-efficient and applicable for continuous action spaces under off-policy settings.

\textbf{Gradient Integration.}
The proposed feasibility handling method utilizes a gradient integration method, which is widely used in multi-task learning (MTL).
The gradient integration method finds a single gradient to improve all tasks by using gradients of all tasks.
\citet{yu2020pcgrad} have proposed a projection-based gradient integration method, which is guaranteed to converge Pareto-stationary sets.
A method proposed by \citet{liu2021cagrad} can reflect user preference, and \citet{navon2022nash} proposed a gradient-scale invariant method to prevent the training process from being biased by a few tasks.
The proposed method can be viewed as a mixture of projection and scale-invariant methods as gradients are clipped and projected onto a trust region.

\section{Experiments}
\label{sec:experiments}

We evaluate the safety performance of the proposed method in single- and multi-constrained robotic tasks.
For single constraints, the agent performs four tasks provided by Safety Gym \citep{ray2019safetygym}, and for multi-constraints, it performs bipedal and quadrupedal locomotion tasks.

\subsection{Safety Gym}

\textbf{Tasks.}
We employ two robots, point and car, to perform goal and button tasks in the Safety Gym. 
The goal task is to control a robot toward a randomly spawned goal without passing through hazard regions. 
The button task is to click a randomly designated button using a robot, where not only hazard regions but also dynamic obstacles exist.
Agents get a cost when touching undesignated buttons and obstacles or entering hazard regions. 
There is only one constraint for the Safety Gym tasks, and it is defined using (\ref{eq:mean-std constraint}) with the sum of costs.
Constraint violations (CVs) are counted when the cost sum exceeds the threshold. 
For more details, see Appendix \ref{sec:experimental settings}.

\textbf{Baselines.}
Safe RL methods based on various types of policy updates are used as baselines.
For the trust region-based method, we use constrained policy optimization (CPO) \citep{achiam2017cpo} and off-policy trust-region CVaR (OffTRC) \citep{kim2022offtrc}, which extend the CPO to an off-policy and mean-std constrained version.
For the Lagrangian-based method, distributional worst-case soft actor-critic (WCSAC) \citep{yang2022safety} is used, and constrained variational policy optimization (CVPO) \citep{liu2022cvpo} based on the EM method is used.
Specifically, WCSAC, OffTRC, and the proposed method, SDAC, use the risk-averse constraints, so we experiment with those for $\alpha=0.25$ and $1.0$ (when $\alpha=1.0$, the constraint is identical to the risk-neutral constraint).

\begin{figure*}[t]
\centering
\begin{subfigure}[b]{1.0\textwidth}
    \centering
    \includegraphics[width=1.0\textwidth]{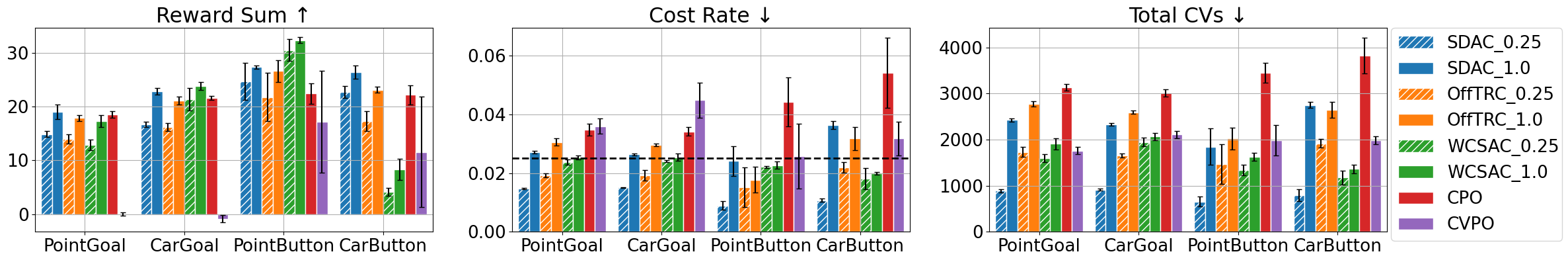}
    \vspace{-15pt}
    \caption{Results of the final reward sum, cost rate, and total number of CVs. The number after the algorithm name in the legend indicates $\alpha$ used for the risk-averse constraint.}
    \label{sfig:safetygym result}
\end{subfigure}
\hfill
\begin{subfigure}[b]{1.0\textwidth}
    \centering
    \includegraphics[width=\textwidth]{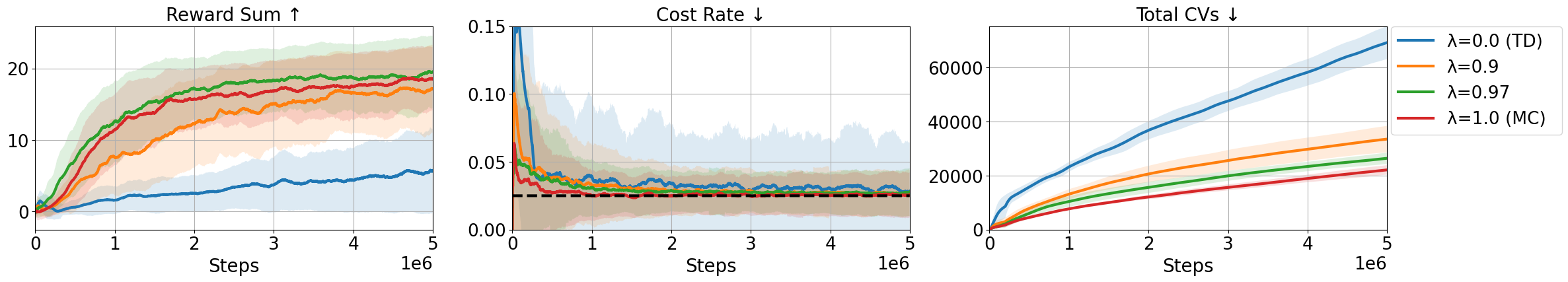}
    \vspace{-15pt}
    \caption{Training curves of the point goal task according to the trace-decay $\lambda$. The solid line represents the average value, and the shaded area shows half of the std value.}
    \label{sfig:ablation on trace decay}
\end{subfigure}
\vspace{-15pt}
\caption{
\small
\textbf{Safety Gym task results.}
The cost rates show the cost sums divided by the episode length, and the dashed black lines indicate the threshold of the constraint.
All methods are trained with five random seeds.
}
\label{fig:safety gym training results}
\vspace{-10pt}
\end{figure*}

\textbf{Results.}
The graph of the final reward sum, cost rate, and the total number of CVs are shown in Figure \ref{sfig:safetygym result}, and the training curves are provided in Appendix \ref{sec:experimental results on safety gym}.
We can interpret the results as good if the reward sum is high and the cost rate and total CVs are low.
SDAC with $\alpha=0.25$, risk-averse constraint situations, satisfies the constraints in all tasks and shows an average of 1.78 times fewer total CVs than the second-best algorithm. 
Nevertheless, since the reward sums are also in the middle or upper ranks, its safety performance is of high quality.
SDAC with $\alpha=1.0$, risk-neutral constraint situations, shows that the cost rates are almost the same as the thresholds except for the car button. In the case of the car button, the constraint is not satisfied, but by setting $\alpha=0.25$, SDAC can achieve the lowest total CVs and the highest reward sum compared to the other methods.
As for the reward sum, SDAC is the highest in the point goal and car button, and WCSAC is the highest in the rest.
However, WCSAC seems to lose the risk-averse properties seeing that the cost rates do not change significantly according to $\alpha$.
This is because WCSAC does not define constraints as risk measures of cost returns but as expectations of risk measures \citep{yang2022safety}.
OffTRC has lower safety performance than SDAC in most cases because, unlike SDAC, it does not use distributional critics.
Finally, CVPO and CPO are on-policy methods, so they are less efficient than the other methods.

\subsection{Locomotion Tasks}
\label{sec: locomotion tasks}

\textbf{Tasks.}
The locomotion tasks are to train robots to follow $xy$-directional linear and $z$-directional angular velocity commands. 
Mini-Cheetah from MIT \citep{katz2019minicheetah} and Laikago from Unitree \citep{laikago2018} are used for quadrupedal robots, and Cassie from Agility Robotics \citep{xie2018cassie} is used for a bipedal robot. 
In order to perform the locomotion tasks, robots should keep balancing, standing, and stamping their feet so that they can move in any direction.
Therefore, we define three constraints.
The first is to keep the balance so that the body angle does not deviate from zero, and the second is to keep the height of CoM above a threshold.
The third is to match the current foot contact state with a predefined foot contact timing.
The reward is defined as the negative $l^2$-norm of the difference between the command and the current velocity.
CVs are counted when the sum of at least one cost rate exceeds the threshold.
For more details, see Appendix \ref{sec:experimental settings}.

\textbf{Baselines.}
The baseline methods are identical to the Safety Gym tasks, and CVPO is excluded because it is technically challenging to scale to multiple constraint settings.
We set $\alpha$ to $1$ for the risk-averse constrained methods (OffTRC, WCSAC, and SDAC) to focus on measuring multi-constraint handling performance.

\begin{figure*}[t]
\centering
\begin{subfigure}[b]{1.0\textwidth}
    \centering
    \includegraphics[width=\textwidth]{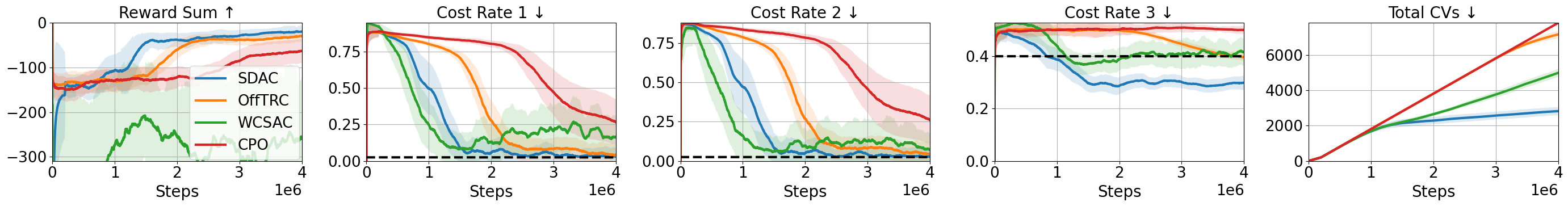}
    \vspace{-15pt}
    \caption{Training curves of the Cassie task.}
    \label{sfig:cassie}
\end{subfigure}
\hfill
\begin{subfigure}[b]{1.0\textwidth}
    \centering
    \includegraphics[width=\textwidth]{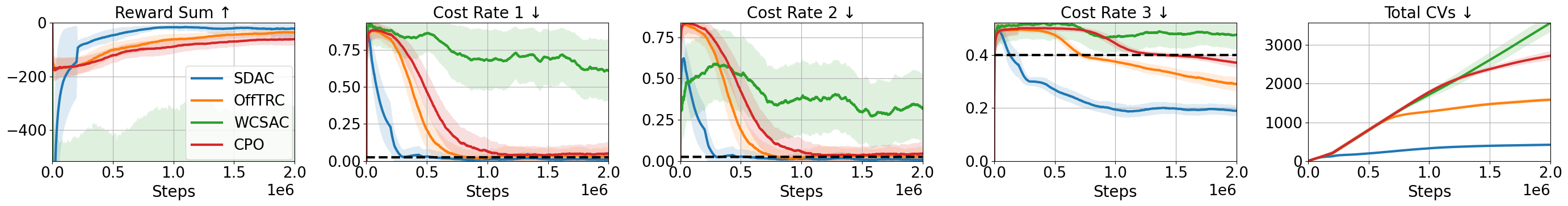}
    \vspace{-15pt}
    \caption{Training curves of the Laikago task.}
    \label{sfig:laikago}
\end{subfigure}
\hfill
\begin{subfigure}[b]{1.0\textwidth}
    \centering
    \includegraphics[width=\textwidth]{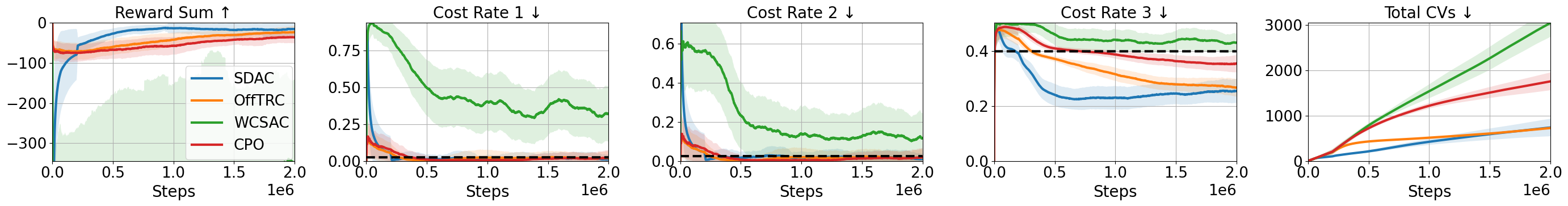}
    \vspace{-15pt}
    \caption{Training curves of the Mini-Cheetah task.}
    \label{sfig:cheetah}
\end{subfigure}
\hfill
\begin{subfigure}[b]{1.0\textwidth}
    \centering
    \includegraphics[width=\textwidth]{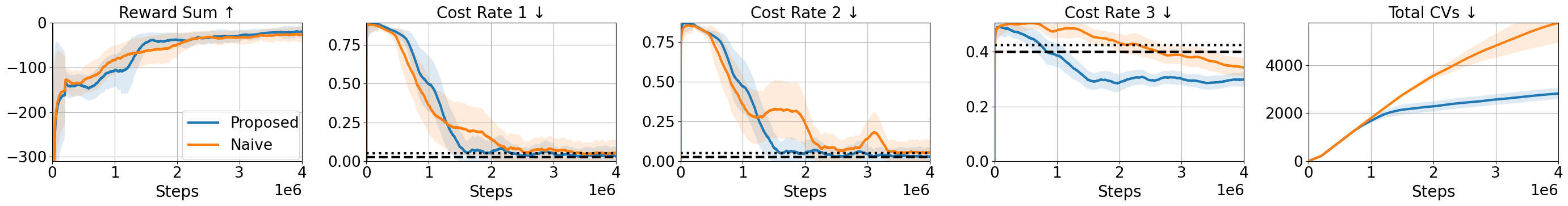}
    \vspace{-15pt}
    \caption{Training curves of the naive and proposed methods for the Cassie task.}
    \label{sfig:ablation on feasibility handling}
\end{subfigure}
\vspace{-15pt}
\caption{
\small
\textbf{Locomotion task results.}
The black dashed lines indicate the thresholds, and the dotted lines in (d) represent the thresholds + $0.025$.
The shaded area represents half of the standard deviation, and all methods are trained with five random seeds.}
\label{fig: locomotion results}
\vspace{-10pt}
\end{figure*}

\textbf{Results.}
Figure \ref{fig: locomotion results}.(a-c) presents the training curves.
SDAC shows the highest reward sums and the lowest total CVs in all tasks. 
In particular, the number of steps required to satisfy all constraints is 1.93 times fewer than the second-best algorithm on average. 
Trust region methods (OffTRC, CPO) stably satisfy constraints, but they are not efficient since they handle constraints by the naive approach.
WCSAC, a Lagrangian method, fails to keep the constraints and shows the lowest reward sums. This is because the Lagrange multipliers can hinder the training stability due to the concurrent update with policy \citep{stooke2020pid}.

\subsection{Ablation Study}
\label{sec:ablation study in main text}
We conduct ablation studies to show whether the proposed target distribution lowers the estimation bias and whether the proposed gradient integration quickly converges to the feasibility condition. 
In Figure \ref{sfig:ablation on trace decay}, the number of CVs is reduced as $\lambda$ increases, which means that the bias of constraint estimation decreases. However, the score also decreases due to large variance, showing that $\lambda$ can adjust the bias-variance tradeoff.
In Figure \ref{sfig:ablation on feasibility handling}, the proposed gradient integration method is compared with the naive approach, which minimizes the constraints in order from the first to the third constraint, as described in Section \ref{sec:feasibility handling}.
The proposed method reaches the feasibility condition faster than the naive approach and shows stable training curves because it reflects all constraints concurrently.
Additionally, we analyze the distributional critics in Appendix \ref{sec: ablation on components of SDAC} and the hyperparameters, such as the trust region size, in Appendix \ref{sec:trace ablation on hyperparameters}.
Furthermore, we analyze the sensitivity of traditional RL algorithms to reward configuration in Appendix \ref{sec:comparison_rl_algo}, emphasizing the advantage of safe RL that does not require reward tuning.

\section{Limitation}

A limitation of the proposed method is that the computational complexity of the gradient integration is proportional to the square of the number of constraints, whose qualitative analysis is presented in Appendix \ref{sec: qualitative complexity}. 
Also, we conducted quantitative analyses in Appendix \ref{sec: quantitative complexity} by measuring wall clock training time. 
In the mini-cheetah task which has three constraints, the training time of SDAC is the third fastest among the four safe RL algorithms. 
Gradient integration is not applied when the policy satisfies constraints, so it may not constitute a significant proportion of training time. 
However, its influence can be dominant as the number of constraints increases.
In order to resolve this limitation, the calculation can be speeded up by stochastically selecting a subset of constraints \citep{liu2021cagrad}, or by reducing the frequency of policy updates \citep{navon2022nash}. 
The other limitation is that the mean-std defined in (\ref{eq:mean-std definition}) is not a coherent risk measure.
As a result, mean-std constraints can be served as reducing uncertainty rather than risk, although we experimentally showed that constraint violations are efficiently reduced.
To resolve this, we can use the CVaR constraint, which can be estimated using an auxiliary variable, as done by \citet{chow2017risk}.
However, this solution can destabilize the training process due to the auxiliary variable, as observed in experiments of \citet{kim2022trc}.
Hence, a stabilization technique should be developed to employ the CVaR constraint.

\section{Conclusion}

We have presented the trust region-based safe distributional RL method, called \emph{SDAC}.
Through the locomotion tasks, it is verified that the proposed method efficiently satisfies multiple constraints using the gradient integration.
Moreover, constraints can be stably satisfied in various tasks due to the low-biased distributional critics trained using the proposed target distributions. 
In addition, the proposed method is analyzed from multiple perspectives through various ablation studies.
However, to compensate for the computational complexity, future work plans to devise efficient methods when dealing with large numbers of constraints.

\begin{ack}
This work was partly supported by Institute of Information \& Communications Technology Planning \& Evaluation (IITP) grant funded by the Korea government (MSIT) (No. 2019-0-01190, [SW Star Lab] Robot Learning: Efficient, Safe, and Socially-Acceptable Machine Learning, 34\%), Basic Science Research Program through the National Research Foundation of Korea (NRF) funded by the Ministry of Science and ICT (NRF-2022R1A2C2008239, General-Purpose Deep Reinforcement Learning Using Metaverse for Real World Applications, 33\%), and Institute of Information \& Communications Technology Planning \& Evaluation (IITP) grant funded by the Korea government (MSIT) (No. 2021-0-01341, AI Graduate School Program, CAU, 33\%).
\end{ack}

\bibliographystyle{abbrvnat}
\bibliography{main}

\newpage
\appendix
\onecolumn

\section{Algorithm Details}

\subsection{Proof of Theorem \ref{thm:feasible}}
\label{sec:proof_thm2}

We denote the policy parameter space as $\Psi\subseteq\mathbb{R}^d$, the parameter at the $t$th iteration as $\psi_t \in \Psi$, the Hessian matrix as $H(\psi_t)=\nabla_{\psi}^2D_{\mathrm{KL}}(\pi_{\psi_t}||\pi_{\psi})|_{\psi=\psi_t}$, and the $k$th constraint as $F_k(\psi_t)=F_k(\pi_{\psi_t};\alpha)$.
As we focus on the $t$th iteration, the following notations are used for brevity: $H=H(\psi_t)$ and $g_k=\nabla F_k(\psi_t)$.
The proposed gradient integration at $t$th iteration is defined as the following quadratic program (QP):
\begin{equation}
\label{eq:gradient_integ}
g_t = \underset{g}{\mathrm{argmin}}\;\frac{1}{2}g^THg \quad \mathrm{s.t.} \; g_k^Tg + c_k \leq 0 \; \mathrm{for} \; \forall k,
\end{equation}
where $c_k= \mathrm{min}(\sqrt{2\epsilon g_k^T H^{-1}g_k}, F_k(\pi_{\psi}) -d_k + \zeta)$.
In the remainder of this section, we introduce the assumptions and new definitions, discuss the existence of a solution (\ref{eq:gradient_integ}), show the convergence to the feasibility condition for varying step size cases, and provide the proof of Theorem \ref{thm:feasible}.

\textbf{Assumption.}
\textbf{1)} Each $F_k$ is differentiable and convex, \textbf{2)} $\nabla F_k$ is $L$-Lipschitz continuous, \textbf{3)} all eigenvalues of the Hessian matrix $H(\psi)$ are equal or greater than $R \in \mathbb{R}_{>0}$ for $\forall \psi \in \Psi$, and \textbf{4)} $\{\psi|F_k(\psi) + \zeta < d_k \; \mathrm{for} \; \forall k\} \neq \emptyset$.

\textbf{Definition.}
Using the Cholesky decomposition, the Hessian matrix can be expressed as $H = B\cdot B^T$ where $B$ is a lower triangular matrix.
By introducing new terms, $\bar{g}_k:= B^{-1}g_k$ and $b_t := B^T g_t$, the following is satisfied: $g_k^TH^{-1}g_k = ||\bar{g}_k||_2^2$.
Additionally, we define the in-boundary and out-boundary sets as:
\begin{equation*}
\begin{aligned}
\mathrm{IB}_k := \left\{\psi|F_k(\psi) - d_k + \zeta \leq \sqrt{2\epsilon\nabla F_k(\psi)^TH^{-1}(\psi)\nabla F_k(\psi)}\right\}, \\
\mathrm{OB}_k := \left\{\psi|F_k(\psi) - d_k + \zeta \geq \sqrt{2\epsilon\nabla F_k(\psi)^TH^{-1}(\psi)\nabla F_k(\psi)}\right\}.
\end{aligned}
\end{equation*}
The minimum of $||\bar{g}_k||$ in $\mathrm{OB}_k$ is denoted as $m_k$, and the maximum of $||\bar{g}_k||$ in $\mathrm{IB}_k$ is denoted as $M_k$.
Also, $\mathrm{min}_k m_k$ and $\mathrm{max}_k M_k$ are denoted as $m$ and $M$, respectively, and we can say that $m$ is positive.

\begin{lemma}
For all $k$, the minimum value of $m_k$ is positive.
\end{lemma}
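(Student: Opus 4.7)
The plan is to argue by contradiction, using convexity of $F_k$ together with Assumption 4 to rule out the vanishing of $\bar{g}_k$ on $\mathrm{OB}_k$.

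First, I would unpack the algebra. Since $H \succeq RI$ with $R>0$, the Hessian is strictly positive definite, and so is its Cholesky factor $B$; in particular, $B$ is non-singular. Consequently $\|\bar{g}_k\|^2 = g_k^T H^{-1} g_k$ vanishes if and only if $g_k = \nabla F_k(\psi) = 0$. So if $m_k = 0$ were attained at some $\psi \in \mathrm{OB}_k$, we would have $\nabla F_k(\psi) = 0$.

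Next, I would invoke convexity. By Assumption 1, $F_k$ is convex and differentiable, so every stationary point is a global minimizer. Combined with Assumption 4, which furnishes some $\psi^\star$ with $F_k(\psi^\star) + \zeta < d_k$ for every $k$, this gives $F_k(\psi) \leq F_k(\psi^\star) < d_k - \zeta$. On the other hand, the defining inequality of $\mathrm{OB}_k$ forces $F_k(\psi) - d_k + \zeta \geq \sqrt{2\epsilon}\,\|\bar{g}_k(\psi)\| = 0$, i.e.\ $F_k(\psi) \geq d_k - \zeta$. These two inequalities are incompatible, so no such $\psi$ exists, and $\|\bar{g}_k(\psi)\| > 0$ everywhere on $\mathrm{OB}_k$. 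The argument is symmetric in $k$, so the same conclusion holds for each constraint.

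The main obstacle I anticipate is not the pointwise step above, but rather upgrading it to strict positivity of the infimum. If $\mathrm{OB}_k$ restricted to the parameters actually visited by the algorithm is closed and bounded, continuity of $\nabla F_k$ immediately gives $m_k > 0$. In the general setting one needs a quantitative version: along any sequence $\psi_n \in \mathrm{OB}_k$ with $\|\bar{g}_k(\psi_n)\| \to 0$, convexity and Assumption 4 yield $g_k(\psi_n)^T(\psi_n - \psi^\star) \geq \eta_k := (d_k - \zeta) - F_k(\psi^\star) > 0$, and one then combines the $L$-Lipschitz bound on $\nabla F_k$ with the eigenvalue bound on $H$ (to relate $\|g_k\|$ and $\|\bar g_k\|$) in order to keep $\|\bar{g}_k\|$ bounded away from $0$ uniformly. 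This compactness-or-quantitative step is the delicate piece of the argument; once it is in place, the rest follows from the clean convexity contradiction sketched above.
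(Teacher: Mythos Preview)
Your proposal is correct and follows essentially the same route as the paper: assume $m_k=0$ is attained at some $\psi\in\mathrm{OB}_k$, use positive-definiteness of $H$ to conclude $\nabla F_k(\psi)=0$, invoke convexity to make $\psi$ a global minimizer, and then derive a contradiction between Assumption~4 (which forces $F_k(\psi)<d_k-\zeta$) and membership in $\mathrm{OB}_k$ (which forces $F_k(\psi)\geq d_k-\zeta$). Your additional discussion of the attainment issue (infimum versus minimum) is in fact more careful than the paper's own argument, which simply assumes the minimum is attained and does not address the compactness/quantitative point you raise.
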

\begin{proof}
Assume that there exist $k\in\{1, ..., K\}$ such that $m_{k}$ is equal to zero at a policy parameter $\psi^* \in \mathrm{OB}_k$, i.e., $||\nabla F_k(\psi^*)||=0$.
Since $F_k$ is convex, $\psi^*$ is a minimum point of $F_k$, $\mathrm{min}_{\psi}F_k(\psi) = F_k(\psi^*) < d_k - \zeta$.
However, $F_k(\psi^*) \geq d_k - \zeta$ as $\psi^* \in \mathrm{OB}_k$, so $m_k$ is positive due to the contradiction.
Hence, the minimum of $m_k$ is also positive.
\end{proof}

\begin{lemma}
\label{lemma:QP feasibility}
A solution of (\ref{eq:gradient_integ}) always exists.
\end{lemma}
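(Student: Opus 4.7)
The claim is that the QP in (\ref{eq:gradient_integ}) is always solvable. Since the objective $\tfrac{1}{2}g^THg$ is strictly convex and coercive (every eigenvalue of $H$ is at least $R>0$ by the third assumption), and the constraint set is a (closed) intersection of half-spaces, existence and uniqueness of the minimizer follow immediately once we verify the feasible set is non-empty. So the whole lemma reduces to exhibiting one feasible $g$.

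The plan is to use Assumption 4 to produce a Slater-type point. Let $\psi^{\dagger}\in\Psi$ satisfy $F_k(\psi^{\dagger})+\zeta<d_k$ for every $k$, which exists by assumption, and propose the candidate direction $g:=\psi^{\dagger}-\psi_t$. I would then test this $g$ against each linear constraint $g_k^T g + c_k \leq 0$ separately.

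The key estimate uses only convexity of $F_k$ (Assumption 1). Since $g_k=\nabla F_k(\psi_t)$, the supporting-hyperplane inequality gives
\begin{equation*}
g_k^T(\psi^{\dagger}-\psi_t) \;\leq\; F_k(\psi^{\dagger}) - F_k(\psi_t) \;<\; (d_k-\zeta) - F_k(\psi_t) \;=\; -\bigl(F_k(\psi_t)-d_k+\zeta\bigr).
\end{equation*}
By the very definition of $c_k$, one always has $c_k \leq F_k(\psi_t)-d_k+\zeta$, so $-c_k \geq -(F_k(\psi_t)-d_k+\zeta)$. Chaining the two bounds yields $g_k^T g + c_k < 0$ for every $k$, i.e. $g=\psi^{\dagger}-\psi_t$ is strictly feasible. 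Combined with strict convexity of the quadratic objective, this produces a unique minimizer and completes the proof.

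The argument is essentially a Slater-condition check, so I do not expect any serious obstacle; the only subtlety is noticing that the truncation built into $c_k=\min(\sqrt{2\epsilon\,g_k^T H^{-1}g_k},\,F_k(\psi_t)-d_k+\zeta)$ automatically makes $c_k$ no larger than the second argument, which is exactly the quantity the convexity inequality controls. If $F_k(\psi_t)-d_k+\zeta \leq 0$ for some $k$, the bound still holds trivially since then $c_k$ is non-positive and the convexity argument simply gives a strict negative for $g_k^T g$. No appeal to the Lipschitz assumption, to the eigenvalue lower bound beyond positive-definiteness, or to the specific value of $\epsilon$ is needed for this lemma; those hypotheses will only become relevant in the subsequent convergence theorem.
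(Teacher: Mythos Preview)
Your proof is correct and essentially identical to the paper's own argument: both pick a Slater point $\psi^{\dagger}$ guaranteed by Assumption~4, set $g=\psi^{\dagger}-\psi_t$, use convexity of $F_k$ to bound $g_k^Tg\leq F_k(\psi^{\dagger})-F_k(\psi_t)$, and combine this with $c_k\leq F_k(\psi_t)-d_k+\zeta$ to obtain $g_k^Tg+c_k<0$ for all $k$, after which strict convexity of the objective gives a unique minimizer. One small remark: your aside about the case $F_k(\psi_t)-d_k+\zeta\leq 0$ is unnecessary (the main chain already covers it) and slightly misstated, since convexity alone does not force $g_k^Tg$ to be strictly negative there---but this does not affect the proof, as your primary inequality $g_k^Tg+c_k\leq F_k(\psi^{\dagger})+\zeta-d_k<0$ holds uniformly in $k$ without any case split.
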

\begin{proof}
There exists a policy parameter $\hat{\psi} \in \{\psi|F_k(\psi) + \zeta < d_k \; \mathrm{for} \; \forall k\}$ due to the assumptions.
Let $g=\psi-\psi_t$.
Then, the following inequality holds.
\begin{equation*}
\begin{aligned}
g_k^T(\psi - \psi_t) + c_k &\leq g_k^T(\psi - \psi_t) + F_k(\psi_t) + \zeta - d_k \leq F_k(\psi) +\zeta - d_k. && (\because F_k \text{ is convex.}) \\
\Rightarrow g_k^T(\hat{\psi} - \psi_t) + c_k &\leq F_k(\hat{\psi}) + \zeta - d_k < 0 \text{ for } \forall k.
\end{aligned}
\end{equation*}
Since $\hat{\psi} - \psi_t$ satisfies all constraints of (\ref{eq:gradient_integ}), the feasible set is non-empty and convex.
Also, $H$ is positive definite, so the QP has a unique solution.
\end{proof}
Lemma \ref{lemma:QP feasibility} shows the existence of solution of (\ref{eq:gradient_integ}).
Now, we show the convergence of the proposed gradient integration method in the case of varying step sizes.
\begin{lemma}
\label{lemma:varying step size}
If $\sqrt{2\epsilon}M \leq \zeta$ and a policy is updated by $\psi_{t+1}=\psi_t + \beta_t g_t$, where $0 < \beta_t < \frac{2\sqrt{2\epsilon}mR}{L||b_t||^2}$ and $\beta_t\leq1$, the policy satisfies $F_k(\psi) \leq d_k\;$ for $\forall k$ within a finite time.
\end{lemma}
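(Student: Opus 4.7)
The plan is to establish strict descent of every currently-violated constraint per step, then upgrade this to finite-time feasibility by a monotone-convergence contradiction argument. Descent will come from $L$-smoothness of $F_k$ combined with the QP feasibility inequality, while the finite-time step rests on a uniform lower bound $c_k \geq \sqrt{2\epsilon}\,m$ whenever $F_k(\psi_t) \geq d_k$, which in turn uses the hypothesis $\sqrt{2\epsilon}\,M \leq \zeta$.

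First I would apply $L$-Lipschitz continuity of $\nabla F_k$ to obtain $F_k(\psi_{t+1}) \leq F_k(\psi_t) + \beta_t g_k^T g_t + (L\beta_t^2/2)\|g_t\|^2$, substitute the QP feasibility constraint $g_k^T g_t \leq -c_k$, and use the Hessian eigenvalue lower bound $R$ together with $g_t^T H g_t = \|b_t\|^2$ to conclude $\|g_t\|^2 \leq \|b_t\|^2/R$. This yields the working inequality
\[
F_k(\psi_{t+1}) \leq F_k(\psi_t) - \beta_t c_k + \frac{L\beta_t^2}{2R}\|b_t\|^2 \quad\text{for every } k.
\]
The central quantitative step is to lower bound $c_k$. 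For any $k$ with $F_k(\psi_t) \geq d_k$, I split by the two regimes defining $c_k$. If $\psi_t \in \mathrm{OB}_k$, then $c_k = \sqrt{2\epsilon}\,\|\bar g_k\| \geq \sqrt{2\epsilon}\,m_k \geq \sqrt{2\epsilon}\,m$. If $\psi_t \in \mathrm{IB}_k$, then $c_k = F_k(\psi_t) - d_k + \zeta \geq \zeta$, and since the shared boundary of $\mathrm{OB}_k$ and $\mathrm{IB}_k$ forces $M \geq m$ together with the hypothesis $\sqrt{2\epsilon}\,M \leq \zeta$, we still get $c_k \geq \sqrt{2\epsilon}\,m$. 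Plugging this back, the step-size constraint $\beta_t < 2\sqrt{2\epsilon}\,mR/(L\|b_t\|^2)$ makes the right-hand side strictly less than $F_k(\psi_t)$ whenever $k$ is violated.

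The main obstacle I anticipate is promoting strict per-step decrease to finite-time convergence, because the decrement depends on $\beta_t$ and $\|b_t\|$ and the allowed range of $\beta_t$ does not itself provide a uniform positive lower bound. I would close by contradiction: if some $F_{k}$ is violated for every $t$, then $\{F_{k}(\psi_t)\}$ is strictly decreasing and bounded below by $d_{k}$, so the per-step decrement tends to zero, which by the working inequality forces $\beta_t\bigl(\sqrt{2\epsilon}\,m - L\beta_t\|b_t\|^2/(2R)\bigr) \to 0$. Combining with the valid range of $\beta_t$, convexity of $F_k$, and the existence of a strictly interior feasible $\hat\psi$ with $F_k(\hat\psi) + \zeta < d_k$ guaranteed by the last assumption, I expect to show that the optimal QP direction $g_t$ must stay bounded away from zero in a suitable sense as long as any constraint remains violated, contradicting the vanishing decrement and thereby forcing feasibility within a finite number of steps.
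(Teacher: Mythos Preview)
Your descent inequality and the bound $\|g_t\|^2\le \|b_t\|^2/R$ are exactly what the paper uses, so the first half of your plan is on target. The divergence is in how you try to pass from per-step descent to finite-time feasibility, and there the proposal has a real gap.

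The paper does \emph{not} run a contradiction argument. Instead, it exploits the $\mathrm{IB}_k/\mathrm{OB}_k$ split in a way you missed. First, under the standing hypothesis $\sqrt{2\epsilon}\,M\le\zeta$, the case ``$\psi_t\in\mathrm{IB}_k$ and $F_k(\psi_t)>d_k$'' that you analyze is in fact empty: $\psi_t\in\mathrm{IB}_k$ gives $F_k(\psi_t)-d_k\le\sqrt{2\epsilon}\,\|\bar g_k\|-\zeta\le\sqrt{2\epsilon}\,M-\zeta\le 0$. So membership in $\mathrm{IB}_k$ already forces the $k$th constraint to be satisfied. Second, and this is the crucial step you do not have, the paper plugs $c_k=F_k(\psi_t)-d_k+\zeta$ into the same working inequality to obtain
\[
F_k(\psi_{t+1})-d_k\le(1-\beta_t)\bigl(F_k(\psi_t)-d_k\bigr)+\beta_t\bigl(-\zeta+\sqrt{2\epsilon}\,m\beta'_t\bigr)\le 0,
\]
using $\beta_t\le 1$ and $\sqrt{2\epsilon}\,m\le\sqrt{2\epsilon}\,M\le\zeta$. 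Together with the $\mathrm{OB}_k$ case (strict decrease, which holds whether or not $k$ is currently violated), this yields an \emph{invariance} property: once $F_k(\psi_t)\le d_k$, it remains $\le d_k$ for all later $t$. That is what lets you conclude simultaneous satisfaction of all $K$ constraints from the per-$k$ entry into $\mathrm{IB}_k$.

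Your contradiction sketch does not supply this invariance. Even if it succeeded, it would only show that for each $k$ there is \emph{some} time $t_k$ with $F_k(\psi_{t_k})\le d_k$; without showing that a satisfied constraint stays satisfied under further updates, different constraints could be feasible at disjoint times and you never reach a common feasible iterate. Moreover, the closing step of your contradiction (``$g_t$ must stay bounded away from zero \ldots\ contradicting the vanishing decrement'') cannot work as stated: the lemma's hypotheses impose no lower bound on $\beta_t$, so $\beta_t\to 0$ is permitted and fully explains $\beta_t\bigl(\sqrt{2\epsilon}\,m-L\beta_t\|b_t\|^2/(2R)\bigr)\to 0$ regardless of how large $\|b_t\|$ or $\|g_t\|$ remain. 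Convexity and the interior point $\hat\psi$ give you nothing extra here. Replace the contradiction with the paper's $\mathrm{IB}_k$ invariance computation and the argument goes through in the same form as theirs.
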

\begin{proof}
We can reformulate the step size as $\beta=\frac{2\sqrt{2\epsilon}mR}{L||b_t||^2}\beta'_t$, where $\beta'_t \leq \frac{L||b_t||^2}{2\sqrt{2\epsilon}mR}$ and $0 < \beta'_t < 1$.
Since the eigenvalues of $H$ is equal to or bigger than $R$ and $H$ is symmetric and positive definite, $\frac{1}{R}I - H^{-1}$ is positive semi-definite.
Hence, $x^T H^{-1}x \leq \frac{1}{R}||x||^2$ is satisfied.
Using this fact, the following inequality holds:
\begin{equation*}
\begin{aligned}
F_k(\psi_t + \beta_t g_t) - F_k(\psi_t) &\leq \beta_t\nabla F_k(\psi_t)^T g_t + \frac{L}{2}||\beta_t g_t||^2 && (\because \nabla F_k \; \text{is $L$-Lipschitz continuous.})\\
&=\beta_t g_k^T g_t + \frac{L}{2}\beta_t^2||g_t||^2 \\
&=\beta_t g_k^T g_t + \frac{L}{2}\beta_t^2 b_t^T H^{-1} b_t && (\because g_t=B^{-T}b_t)\\
&\leq -\beta_t c_k + \frac{L}{2R}\beta_t^2 ||b_t||^2. && (\because g_k^Tg_t + c_k\leq 0)
\end{aligned}
\end{equation*}
Now, we will show that $\psi$ enters $\mathrm{IB}_k$ in a finite time for $\forall\psi\in\mathrm{OB}_k$ and that the $k$th constraint is satisfied for $\forall\psi \in \mathrm{IB}_k$.
Thus, we divide into two cases, \textbf{1)} $\psi_t \in \mathrm{OB}_k$ and \textbf{2)} $\psi_t \in \mathrm{IB}_k$.
For the first case, $c_k = \sqrt{2\epsilon}||\bar{g}_k||$, so the following inequality holds:
\begin{equation}
\label{ieq:decrease}
\begin{aligned}
F_k(\psi_t + \beta_t g_t) - F_k(\psi_t) &\leq \beta_t\left(-\sqrt{2\epsilon} ||\bar{g}_k|| + \frac{L}{2R}\beta_t ||b_t||^2\right) \\
&\leq \beta_t\sqrt{2\epsilon}\left(-||\bar{g}_k|| + m\beta'_t\right) \\
&\leq \beta_t\sqrt{2\epsilon}m(\beta'_t - 1) < 0.
\end{aligned}
\end{equation}
The value of $F_k$ decreases strictly with each update step according to (\ref{ieq:decrease}).
Hence, $\psi_t$ can reach $\mathrm{IB}_k$ by repeatedly updating the policy.
We now check whether the constraint is satisfied for the second case.
For the second case, the following inequality holds by applying $c_k = F_k(\psi_t) - d_k + \zeta$:
\begin{equation*}
\begin{aligned}
&F_k(\psi_t + \beta_t g_t) - F_k(\psi_t) \leq \beta_t d_k - \beta_t F_k(\psi_t) - \beta_t\zeta + \frac{L}{2R}\beta_t^2 ||b_t||^2 \\
\Rightarrow& F_k(\psi_t + \beta_t g_t) - d_k \leq (1 - \beta_t)(F_k(\psi_t) - d_k) + \beta_t(-\zeta + \sqrt{2\epsilon}m\beta'_t).
\end{aligned}
\end{equation*}
Since $\psi_t \in \mathrm{IB}_k$,
\begin{equation*}
F_k(\psi_t) - d_k \leq \sqrt{2\epsilon}||\bar{g}_k|| - \zeta \leq \sqrt{2\epsilon}M - \zeta \leq 0.
\end{equation*}
Since $m \leq M$ and $\beta'_t < 1$,
\begin{equation*}
-\zeta + \sqrt{2\epsilon}m\beta'_t < -\zeta + \sqrt{2\epsilon}M \leq 0.
\end{equation*}
Hence, $F_k(\psi_t + \beta_t g_t) \leq d_k$, which means that the $k$th constraint is satisfied if $\psi_t \in \mathrm{IB}_k$.
As $\psi_t$ reaches $\mathrm{IB}_k$ for $\forall k$ within a finite time according to (\ref{ieq:decrease}), the policy can satisfy all constraints within a finite time.
\end{proof}
Lemma \ref{lemma:varying step size} shows the convergence to the feasibility condition in the case of varying step sizes.
We introduce a lemma, which shows $||b_t||$ is bounded by $\sqrt{\epsilon}$, and finally show the proof of Theorem \ref{thm:feasible}, which can be considered a special case of varying step sizes.

\begin{lemma}
\label{lemma:b_k}
There exists $T \in \mathbb{R}_{>0}$ such that $||b_t|| \leq T\sqrt{\epsilon}$.
\end{lemma}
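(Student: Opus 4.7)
The plan is to substitute $b = B^{T}g$ so that the quadratic program (\ref{eq:gradient_integ}) reduces to
\[
b_t = \arg\min_{b} \tfrac{1}{2}\|b\|^{2} \quad \text{subject to} \quad \bar g_k^{T}b + c_k \leq 0 \ \forall k,
\]
since $g^{T}Hg = \|B^{T}g\|^{2}$ and $g_k^{T}g = \bar g_k^{T}b$. The feasible polyhedron is nonempty by Lemma \ref{lemma:QP feasibility}, so $b_t$ is the projection of the origin onto it. It therefore suffices to exhibit any feasible $g^{*}$ with $\|B^{T}g^{*}\| \leq T\sqrt{\epsilon}$, because optimality of $b_t$ then forces $\|b_t\| \leq \|B^{T}g^{*}\|$.

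The critical structural fact is $c_k \leq \sqrt{2\epsilon}\,\|\bar g_k\|$, which is immediate from the definition of $c_k$ as a minimum. Combined with the curvature bound $R > 0$ on $H$, this yields $\|\bar g_k\| = \|B^{-1}g_k\| \leq \|\nabla F_k(\psi_t)\|/\sqrt{R}$, and the $L$-Lipschitz continuity of $\nabla F_k$ provides a uniform bound $\bar G$ on $\|\nabla F_k(\psi_t)\|$ so long as the iterates remain in a bounded region. Hence every right-hand side of the reformulated QP is already $O(\sqrt{\epsilon})$.

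I would then construct $g^{*}$ as a scaled Slater direction. Take $\hat\psi$ with $F_k(\hat\psi) + \zeta < d_k$ for all $k$, let $\Delta = \hat\psi - \psi_t$ and $\delta_0 = \min_k (d_k - \zeta - F_k(\hat\psi)) > 0$, and set $g^{*} = s\Delta$ with $s = C\sqrt{\epsilon}$. Convexity of $F_k$ gives $g_k^{T}\Delta \leq F_k(\hat\psi) - F_k(\psi_t)$, and a short case split shows: (i) whenever $c_k > 0$, the relation $F_k(\psi_t) \geq d_k - \zeta$ combined with the Slater margin forces $g_k^{T}\Delta \leq -\delta_0$, so feasibility of the $k$-th constraint is equivalent to $s\delta_0 \geq c_k$, achievable whenever $C \geq \sqrt{2}\,\bar G/\delta_0$; (ii) whenever $c_k \leq 0$ and $g_k^{T}\Delta > 0$, the Slater margin forces $|c_k| > \delta_0$, yielding a uniform upper bound on $s$ bounded away from $0$. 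These two bounds are compatible for $\epsilon$ below a threshold controlled by $\delta_0$, $\bar G$ and a uniform bound $U$ on $g_k^{T}\Delta$. Substituting back, $\|B^{T}g^{*}\| = s\|B^{T}\Delta\| = O(\sqrt{\epsilon})$, giving $T = C\cdot\|B^{T}\Delta\|$.

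The principal obstacle is securing uniformity of the constants $\bar G$, $\delta_0$, and $\|B^{T}\Delta\|$ across all iterations $t$. This is where the preceding descent analysis is essential: by shrinking the admissible trust-region size $E$ in Theorem \ref{thm:feasible} if needed, the sequence $\{\psi_t\}$ produced by the update rule stays in a compact neighborhood of the Slater point, and on this set the Lipschitz-gradient assumption together with the eigenvalue lower bound $R$ turns all of the constants above into finite quantities independent of $t$. Propagating this uniformity carefully through the case split, and verifying that the choice of $E$ in Theorem \ref{thm:feasible} can be made small enough to simultaneously satisfy the threshold on $\epsilon$ required by the lower/upper bound compatibility on $s$, is the most delicate technical piece of the proof.
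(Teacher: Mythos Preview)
Your approach is correct in its essentials and takes a genuinely different route from the paper. The paper argues in two regimes: for large $\epsilon$ it notes that every $c_k$ saturates at $F_k(\psi_t)+\zeta-d_k$, so the feasible set $C(\epsilon)$ is fixed and contains the Slater direction $\hat\psi-\psi_t$, giving a constant upper bound on $\|b(\epsilon)\|$; for small $\epsilon$ it identifies the active constraints (those in $O\cup U$), observes that the restricted QP rescales exactly so that $g(\epsilon)=\sqrt{\epsilon/\hat\epsilon}\,\hat g(\hat\epsilon)$, and then checks that the remaining ``inside'' constraints are automatically satisfied below a further threshold $\epsilon^*$. The two regimes together give $\|b(\epsilon)\|\le T\sqrt{\epsilon}$ for all $\epsilon$. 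Your argument bypasses the explicit scaling of the optimizer and instead exhibits a single feasible witness $g^*=s(\hat\psi-\psi_t)$ with $s=C\sqrt{\epsilon}$; the case split on the sign of $c_k$ and the direction of $g_k^T\Delta$ is clean, and the key inequalities $c_k\le\sqrt{2\epsilon}\,\|\bar g_k\|$ and $|c_k|>\delta_0$ in case (ii) are correct as you derive them. Your approach is more elementary and delivers exactly what Theorem~\ref{thm:feasible} requires, at the price of working only for $\epsilon$ below a threshold---which, as you note, can be absorbed into the final choice of $E$.

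On uniformity in $t$: you rightly flag this as the delicate point, and it is worth stressing that the paper's own proof has the same gap---its constants $\hat\epsilon$, $\epsilon^*$, and the bound $\tfrac12(\hat\psi-\psi_t)^T H(\hat\psi-\psi_t)$ all depend on $\psi_t$, yet Theorem~\ref{thm:feasible} treats $T$ as a single number when defining $E$. Your appeal to the descent inequality~(\ref{ieq:decrease}) is the natural patch, but note that it gives only $F_k(\psi_t)\le F_k(\psi_0)$ along the trajectory, not compactness of $\{\psi_t\}$ without an additional coercivity assumption on the $F_k$. Also, your final constant $T=C\,\|B^T\Delta\|$ involves $\Delta^T H\Delta$, which requires an \emph{upper} eigenvalue bound on $H$ that is not among the stated hypotheses (the paper's proof uses the same quantity, so this is again a shared omission rather than a flaw specific to your argument).
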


\begin{proof}
Let us define the following sets:
\begin{equation}
\begin{aligned}
I &:= \{k|F_k(\psi_t) + \zeta - d_k < 0\}, \; O := \{k|F_k(\psi_t) + \zeta - d_k > 0\}, \\
U &:= \{k|F_k(\psi_t) + \zeta - d_k = 0\}, \; C(\epsilon) := \{g|g_k^Tg + c_k(\epsilon) \leq 0 \; \forall k\}, \\
I_G &:= \{g | g_k^Tg + F_k(\psi_t) + \zeta - d_k \leq 0 \; \forall k\in I\}, 
\end{aligned}
\end{equation}
where $c_k(\epsilon) = \mathrm{min}(\sqrt{2\epsilon g_k^T H^{-1}g_k}, F_k(\pi_{\psi}) -d_k + \zeta)$.
Using these sets, the following vectors can be defined: $g(\epsilon) := \mathrm{argmin}_{g \in C(\epsilon)} \frac{1}{2}g^THg$, $b(\epsilon) := B^Tg(\epsilon).$
Now, we will show that $||b(\epsilon)||$ is bounded above and $||b(\epsilon)|| \propto \sqrt{\epsilon}$ for sufficiently small $\epsilon > 0$.

First, the following is satisfied for a sufficiently large $\epsilon$: 
\begin{equation}
C(\epsilon) = \{g| g_k^Tg +F_k(\psi_t) + \zeta - d_k \leq 0 \; \forall k\}.
\end{equation}
Since $\hat{\psi} - \psi_t \in C(\epsilon)$, where $\hat{\psi}$ is defined in Lemma \ref{lemma:QP feasibility}, $||b(\epsilon)|| \leq \frac{1}{2}(\hat{\psi}-\psi_t)^T H(\hat{\psi}-\psi_t)$ for $\forall \epsilon$.
Therefore, $||b(\epsilon)||$ is bounded above.

Second, let us define the following trust region size:
\begin{equation}
\hat{\epsilon} := \frac{1}{2}\left(\mathrm{min}_{k \in O}\frac{F_k(\psi_t) + \zeta - d_k}{||\bar{g_k}||}\right)^2 > 0.
\end{equation}
if , $\epsilon \leq \hat{\epsilon}$, the following is satisfied:
\begin{equation}
C(\epsilon) =  I_G \cap \{g|g_k^Tg + \sqrt{2\epsilon}||\bar{g_k}|| \leq 0 \; \forall k \in O, \; g_k^Tg \leq 0 \; \forall k\in U\} \neq \phi.
\end{equation}
Thus, $O_G(\epsilon) := \{g|g_k^Tg + \sqrt{2\epsilon}||\bar{g_k}|| \leq 0 \; \forall k \in O, \; g_k^Tg \leq 0 \; \forall k\in U\}$ is not empty.
If we define $\hat{g}(\epsilon) := \mathrm{argmin}_{g \in O_G(\epsilon)} \frac{1}{2}g^THg$, the following is satisfied: \begin{equation}
    \hat{g}(\epsilon) = \sqrt{\frac{\epsilon}{\hat{\epsilon}}}\hat{g}(\hat{\epsilon}) \; \text{for} \; 0 \leq \epsilon \leq \hat{\epsilon}.
\end{equation}
Then, if $\epsilon \leq \hat{\epsilon}\left(\mathrm{min}_{k \in I} (d_k - \zeta - F_k(\psi_t))/g_k^T \hat{g}(\hat{\epsilon}))\right)^2$, $\hat{g}(\epsilon) = g(\epsilon)$ since $\hat{g}(\epsilon) \in I_G$.
Consequently, by defining a trust region size:
\begin{equation}
\epsilon^* := \hat{\epsilon} \cdot \mathrm{min}\left(1,  \left(\mathrm{min}_{k \in I} \frac{d_k - \zeta - F_k(\psi_t)}{g_k^T \hat{g}(\hat{\epsilon})}\right)^2\right) > 0,
\end{equation}
$g(\epsilon) = \sqrt{\epsilon/\hat{\epsilon}}\hat{g}(\hat{\epsilon})$ for $\epsilon \leq \epsilon^*$.
Therefore, $||b(\epsilon)|| \propto \sqrt{\epsilon}$ if $\epsilon \leq \epsilon^*$.

Finally, since $||b(\epsilon)||$ is bounded above and proportional to $\sqrt{\epsilon}$ for sufficiently small $\epsilon$, there exist a constant $T$ such that $||b(\epsilon)|| \leq T\sqrt{\epsilon}$.
\end{proof}

\feasible*
\begin{proof}
The proposed step size is $\beta_t = \mathrm{min}(1, \sqrt{2\epsilon}/||b_t||)$, and the sufficient conditions that guarantee the convergence according to Lemma \ref{lemma:varying step size} are followings:
\begin{equation*}
\sqrt{2\epsilon}M \leq \zeta \;\mathrm{and}\; 0 < \beta_t \leq 1 \;\mathrm{and}\; \beta_t < \frac{2\sqrt{2\epsilon}mR}{L||b_t||^2}.
\end{equation*}
The second condition is self-evident.
To satisfy the third condition, the proposed step size $\beta_t$ should satisfy the followings:
\begin{equation*}
\begin{aligned}
&\frac{\sqrt{2\epsilon}}{||b_t||} < \frac{2\sqrt{2\epsilon}mR}{L||b_t||^2} \; \Leftrightarrow \; ||b_t|| < \frac{2mR}{L}.
\end{aligned}
\end{equation*}
If $\epsilon < 4((mR)/(LT))^2$, the following inequality holds:
\begin{equation*}
\begin{aligned}
\sqrt{\epsilon} < \frac{2mR}{LT} \; \Rightarrow \; ||b_t|| \leq T\sqrt{\epsilon} < \frac{2mR}{L}. &&(\because \; \text{Lemma \ref{lemma:b_k}}.)
\end{aligned}
\end{equation*}
Hence, if $\epsilon \leq E=\frac{1}{2} \mathrm{min}(\frac{\zeta^2}{2M^2}, 4(\frac{mR}{LT})^2)$, the sufficient conditions are satisfied.
\end{proof}

\subsection{Toy Example for Gradient Integration Method}
\label{sec:toy_example_fig_grad_integ}

The problem of the toy example in Figure \ref{fig:grad integration} is defined as:
\begin{equation}
\begin{aligned}
&\underset{x_1,x_2}{\mathrm{minimize}}\sqrt{(\sqrt{3}x_1 + x_2 + 2)^2 + 4(x_1 - \sqrt{3}x_2 + 4)^2} \quad \mathbf{s.t.} \; x_1 \geq 0, \; x_1 - 2x_2 \leq 0,
\end{aligned}
\end{equation}
where there are two linear constraints.
The initial points for the naive and gradient integration methods are $x_1=-2.5$ and $x_2=-3.0$, which do not satisfied the two constraints.
We use the Hessian matrix for the trust region as identity matrix and the trust region size as $0.5$ in both methods.
The naive method minimizes the constraints in order from the first to the second constraint.

\subsection{Analysis of Worst-Case Time to Satisfy All Constraints}
\label{sec:worst case time}

To analyze the sample complexity, we consider a tabular MDP and use softmax policy parameterization as follows (for more details, see \citep{xu2021crpo}):
\begin{equation}
\pi_\psi(a|s) := \frac{\exp{\psi(s, a)}}{\sum_{a'}\exp{\psi(s, a')}} \; \forall (s, a) \in S \times A.
\end{equation}
According to \citet{agarwal2021theory}, the natural policy gradient (NPG) update is as follows:
\begin{equation}
\psi_{t+1} = \psi_t + \beta A^{\pi_{\psi_t}}, \; \pi_{\psi_{t+1}}(a|s) = \pi_{\psi_t}(a|s) \frac{\exp(\beta A^{\pi_{\psi_t}}(s,a))}{Z_t(s)},
\end{equation}
where $\beta$ is a step size, $A^{\pi_{\psi_t}} \in \mathbb{R}^{|S||A|}$ is the vector expression of the advantage function, and $Z_t(s) = \sum_a \pi_{\psi_t}(a|s) \exp{(\beta A^{\pi_{\psi_t}}(s,a))/(1 - \gamma)}$.
Analyzing the sample complexity of trust region-based methods is challenging since their stepsize is not fixed, so we modify the gradient integration method to use the NPG as follows:
\begin{equation}
\begin{aligned}
&g^* = \mathrm{argmin}_g \frac{1}{2}g^T Hg \; \mathrm{s.t.} \; g_k^Tg + c_k \leq 0 \; \forall k \in \{k|F_k(\pi_{\psi_t};\alpha) > d_k\}, \\
&\psi_{t+1} = \psi_{t} + \beta g^*/||g^*||_2.
\end{aligned}
\end{equation}
In the remainder, we abbreviate $\pi_{\psi_t}$, $A^{\pi_{\psi_t}}$, and $F_k(\pi_{\psi_t};\alpha)$ as $\pi_t$, $A^t$, and $F_k(\pi_t)$, respectively.
Since $g^*$ always exists due to Lemma \ref{lemma:QP feasibility}, we can write the policy using Lagrange multipliers $\lambda_k^t \geq 0$ as follows:
\begin{equation}
\begin{aligned}
&\psi_{t+1} = \psi_t - \beta\sum_{k}\lambda^t_k A_{C_k}^t/W_t, \; W_t:=||\sum_{k} \lambda^t_k A_{C_k}^t||_2, \\
&\pi_{t+1}(a|s) = \pi_{t}(a|s) \exp{\left(-\frac{\beta}{W_t}\sum_{k}\lambda^t_k A_{C_k}^t(s, a)\right)}/Z_t(s),
\end{aligned}
\end{equation}
where $Z_t(s)$ is a normalization factor, and $\lambda_k^t=0$ for $F_k(\pi_t) \leq d_k$.
The naive approach can also be written as above, except that $\lambda^t$ is a one-hot vector, where $i$-th value $\lambda_i^t$ is one only for corresponding to the randomly selected constraint.
Then, we can get the followings:
\begin{equation}
\begin{aligned}
\sum_k \lambda^t_k & (F_k(\pi_{t+1}) - F_k(\pi_{t}))/W_t = \frac{1}{1 - \gamma} \mathbb{E}_{s \sim d^{\pi_{t+1}}} \left[\sum_a \pi_{t+1}(a|s)\sum_k\lambda^t_k A_{C_k}^t(s, a)/W_t \right] \\
&= -\frac{1}{\beta(1 - \gamma)} \mathbb{E}_{s \sim d^{\pi_{t+1}}} \left[\sum_a \pi_{t+1}(a|s) \log \frac{\pi_{t+1}(a|s)Z_t(s)}{\pi_t(a|s)} \right] \\
&= -\frac{1}{\beta(1 - \gamma)} \mathbb{E}_{s \sim d^{\pi_{t+1}}} \left[D_\mathrm{KL}(\pi_{t+1}(\cdot|s) || \pi_{t}(\cdot|s)) + \log Z_t(s) \right] \\
&\leq - \frac{1}{\beta(1 - \gamma)} \mathbb{E}_{s \sim d^{\pi_{t+1}}} \left[\log{Z_t(s)}\right] \;\;\;\quad\qquad\qquad\qquad\qquad (\because D_\mathrm{KL}(\pi'||\pi) \geq 0)\\
&\leq - \frac{1}{\beta} \mathbb{E}_{s \sim \rho} \left[\log{Z_t(s)}\right], \qquad\qquad\qquad\qquad\qquad\qquad\qquad (\because ||d^{\pi}/\rho||_\infty \geq 1-\gamma)
\end{aligned}
\end{equation}
We can also get the followings by using the Lemma 7 in \citet{xu2021crpo}:
\begin{equation}
\begin{aligned}
\sum_k \lambda^t_k(F_k(\pi_*) - F_k(\pi_{t}))/W_t \geq& -\frac{1}{\beta(1-\gamma)}\mathbb{E}_{s \sim d^*} \left[D_\mathrm{KL}(\pi_*||\pi_{t}) - D_\mathrm{KL}(\pi_*||\pi_{t+1})\right] \\
&- \sum_k \lambda^t_k \frac{2 \beta C_\mathrm{max}}{(1 - \gamma)^2 W_t},
\end{aligned}
\end{equation}
where $\pi_*$ is an optimal policy, and $C_\mathrm{max}$ is the maximum value of costs.
If $\lambda^t_k > 0$, $F_k(\pi_{t}) - F_k(\pi_*) > \zeta$.
Thus, $\sum_k \lambda^t_k(F_k(\pi_*) - F_k(\pi_{t})) \leq -\zeta\sum_k \lambda^t_k$.
If the policy does not satisfy the constraints until $T$ step, the following inequality holds by summing the above inequalities from $t=0$ to $T$:
\begin{equation}
\beta (1-\gamma) (\zeta - \frac{2 \beta C_\mathrm{max}}{(1 - \gamma)^2}) \sum_{t=0}^T \sum_i \lambda^t_i / W_t \leq \mathbb{E}_{s \sim d^*} \left[D_\mathrm{KL}(\pi_*||\pi_{0})\right].
\end{equation}
Let denote $\frac{1}{T}\sum_{t=0}^T \sum_i \lambda^t_i / W_t$ as $\mathbb{E}_t[\sum_i \lambda^t_i / W_t]$, and we can get $W_t = ||\sum_k \lambda^t_k A_{C_k}^t||_2 \leq \sum_k \lambda^t_k 2 |S||A|C_\mathrm{max}/(1-\gamma)$.
Then, the maximum $T$ can be expressed as:
\begin{equation}
T \leq  \frac{D_\mathrm{KL}}{\beta (1-\gamma) (\zeta - \frac{2 \beta C_\mathrm{max}}{(1 - \gamma)^2}) \mathbb{E}_t[\sum_i \lambda^t_i / W_t]} \leq \frac{2|S||A|C_\mathrm{max}D_\mathrm{KL}}{\beta (1-\gamma)^2\zeta - 2\beta^2C_\mathrm{max}} =: T_\mathrm{max},
\end{equation}
where we abbreviate $\mathbb{E}_{s \sim d^*} \left[D_\mathrm{KL}(\pi_*||\pi_{0})\right]$ as $D_\mathrm{KL}$.
Finally, the policy can reach the feasible region within $T_\mathrm{max}$ steps.

The worst-case time of the naive approach is the same as the above equation, except for the $\lambda$ part.
In the naive approach, $\lambda^t$ is a one-hot vector, as mentioned earlier.
In other words, only $\mathbb{E}_t[\sum_i\lambda_i^t/W_t] = \mathbb{E}_t[\sum_i\lambda_i^t/||\sum_i\lambda_i^tA_{C_k}^t||_2]$ is different.
Let us assume that the advantage vector follows a normal distribution. 
Then, the variance of $\sum_i\lambda_i^tA_{C_k}^t$ is smaller for $\lambda^t$ with distributed values than for one-hot values.
Then, the reciprocal of the 2-norm becomes larger, resulting in a decrease in the worst-case time.
From this perspective, the gradient integration method has a benefit over the naive approach as it reduces the variance of the advantage vector.
Even though we cannot officially say that the worst-case time of the proposed method is smaller than the naive method because the advantage vector does not follow the normal distribution, we can deliver our insight on the benefit of gradient integration method.

\subsection{Proof of Theorem \ref{thm:target}}
\label{sec:proof_contraction}

In this section, we show that a sequence, $Z_{k+1} = \mathcal{T}_\lambda^{\mu, \pi}Z_k$, converges to the $Z^{\pi}_R$.
First, we rewrite the operator $\mathcal{T}_\lambda^{\mu, \pi}$ for random variables to an operator for distributions and show that the operator is contractive.
Finally, we show that $Z_R^{\pi}$ is the unique fixed point.

Before starting the proof, we introduce useful notions and distance metrics.
As the return $Z_R^{\pi}(s, a)$ is a random variable, we define the distribution of $Z_R^{\pi}(s, a)$ as $\nu_R^\pi(s,a)$.
Let $\eta$ be the distribution of a random variable $X$.
Then, we can express the distribution of affine transformation of random variable, $aX + b$, using the \emph{pushforward} operator, which is defined by \citet{rowland2018dist}, as $(f_{a,b})_{\#}(\eta)$.
To measure a distance between two distributions, \citet{bdr2022} has defined the distance $l_p$ as follows:
\begin{equation}
l_p(\eta_1, \eta_2) := \left(\int_{\mathbb{R}}\left|F_{\eta_1}(x) - F_{\eta_2}(x)\right|^p dx\right)^{1/p}, 
\end{equation}
where $F_{\eta}(x)$ is the cumulative distribution function.
This distance is $1/p$-homogeneous, regular, and $p$-convex (see Section 4 of \citet{bdr2022} for more details).
For functions that map state-action pairs to distributions, a distance can be defined as \citep{bdr2022}: $\bar{l}_p(\nu_1, \nu_2):=\mathrm{sup}_{(s,a)\in S\times A}l_p(\nu_1(s,a), \nu_2(s,a))$.
Then, we can rewrite the operator $\mathcal{T}_\lambda^{\mu,\pi}$ for random variables in (\ref{eq:distributional_operator}) as an operator for distributions as below.
\begin{equation}
\label{eq:distributional_operator2}
\begin{aligned}
&\mathcal{T}^{\mu, \pi}_\lambda\nu(s,a) := \frac{1-\lambda}{\mathcal{N}}\sum_{i=0}^{\infty}\lambda^i\\
& \times\mathbb{E}_{\mu}\left[\left(\prod_{j=1}^i\eta(s_j, a_j)\right)\mathbb{E}_{a'\sim\pi(\cdot|s_{i+1})}\left[(f_{\gamma^{i+1}, \sum_{t=0}^i\gamma^t r_t})_{\#} (\nu(s_{i+1}, a'))\right]\Big|s_0=s, a_0=a\right],
\end{aligned}
\end{equation}
where $\eta(s, a)=\frac{\pi(a|s)}{\mu(a|s)}$ and $\mathcal{N}$ is a normalization factor.
Since the random variable $Z(s,a)$ and the distribution $\nu(s,a)$ is equivalent, the operators in (\ref{eq:distributional_operator}) and (\ref{eq:distributional_operator2}) are also equivalent.
Hence, we are going to show the proof of Theorem \ref{thm:target} using (\ref{eq:distributional_operator2}) instead of (\ref{eq:distributional_operator}).
We first show that the operator $\mathcal{T}^{\mu, \pi}_\lambda$ has a contraction property.
\begin{lemma}
\label{lemma: contractive}
Under the distance $\bar{l}_p$ and the assumption that the state, action, and reward spaces are finite, $\mathcal{T}^{\mu, \pi}_\lambda$ is $\gamma^{1/p}$-contractive.
\end{lemma}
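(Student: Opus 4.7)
The plan is to exploit three structural properties of the Wasserstein--like metric $l_p$ collected in Section 4 of \citet{bdr2022}: (i) $1/p$--homogeneity under affine pushforwards, $l_p((f_{a,b})_{\#}\eta_1,(f_{a,b})_{\#}\eta_2)=|a|^{1/p}\,l_p(\eta_1,\eta_2)$; (ii) $p$--convexity, i.e.\ $l_p\bigl(\sum_i\alpha_i\eta_i^{(1)},\sum_i\alpha_i\eta_i^{(2)}\bigr)\le\sum_i\alpha_i\,l_p(\eta_i^{(1)},\eta_i^{(2)})$ whenever $\alpha_i\ge 0$ and $\sum_i\alpha_i=1$; and (iii) regularity, which is needed to justify the translation part $b=\sum_{t=0}^{i}\gamma^t r_t$ of the pushforward not contributing to the distance. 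Because the operator $\mathcal{T}^{\mu,\pi}_\lambda$ is nothing but a countable convex combination of affine pushforwards of $\nu$ evaluated at future state--action pairs, these three facts are exactly the ingredients needed.

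First I would check that the mixing weights truly form a probability distribution so that (ii) applies. The weight attached to the $i$-th term inside the expectation over $\mu$ is $(1-\lambda)\lambda^i\prod_{j=1}^{i}\eta(s_j,a_j)/\mathcal{N}$, and since $\mathbb{E}_\mu\!\left[\prod_{j=1}^{i}\pi(a_j|s_j)/\mu(a_j|s_j)\,\middle|\,s_0=s,a_0=a\right]=1$ for every $i$, the normalizer reduces to $\mathcal{N}=(1-\lambda)\sum_{i\ge 0}\lambda^i=1$. So the mixture weights sum to one after taking the outer $\mathbb{E}_\mu$, which is the regime in which (ii) is valid (extending convexity from finite to countable mixtures under the finiteness assumption on the state, action and reward spaces).

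Next I would apply (ii) to two distributions $\nu_1,\nu_2$, then use (i) together with (iii) to peel off the affine pushforward $f_{\gamma^{i+1},\sum_{t=0}^{i}\gamma^t r_t}$, picking up a factor $\gamma^{(i+1)/p}$. Passing to the $\bar l_p$--supremum in $(s_{i+1},a')$ upper bounds the resulting state--action dependent term by $\bar l_p(\nu_1,\nu_2)$, leaving only
\begin{equation*}
\bar l_p\!\left(\mathcal{T}^{\mu,\pi}_\lambda\nu_1,\mathcal{T}^{\mu,\pi}_\lambda\nu_2\right)\le (1-\lambda)\sum_{i=0}^{\infty}\lambda^i\gamma^{(i+1)/p}\,\bar l_p(\nu_1,\nu_2)=\frac{(1-\lambda)\gamma^{1/p}}{1-\lambda\gamma^{1/p}}\,\bar l_p(\nu_1,\nu_2).
\end{equation*}
A one-line check that $(1-\lambda)/(1-\lambda\gamma^{1/p})\le 1$ whenever $\gamma\in(0,1)$ and $\lambda\in[0,1)$ then yields the claimed $\gamma^{1/p}$--contraction.

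The main obstacle I expect is justifying the interchanges of summation, expectation and the convex--combination inequality for $l_p$: once the normalization $\mathcal{N}=1$ is in hand the proof is almost mechanical, but swapping the infinite sum with $\mathbb{E}_\mu$ and then with the $p$--convexity bound requires a dominated--convergence style argument, for which the finiteness of $S$, $A$ and the reward set (together with $\lambda\gamma^{1/p}<1$) is exactly what is needed. Everything else is an application of properties (i)--(iii) of $l_p$ recorded in \citet{bdr2022}.
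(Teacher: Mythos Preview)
Your argument is correct and parallels the paper's proof closely: both rewrite $\mathcal{T}^{\mu,\pi}_\lambda$ as a convex mixture of affine pushforwards, invoke convexity and homogeneity of the Cram\'er metric from \citet{bdr2022}, and then bound termwise. The paper carries the computation out at the level of $l_p^p$ (using genuine $p$-convexity, i.e.\ convexity of $l_p^p$, and the homogeneity $l_p^p((f_{\gamma^{i+1},b})_\#\cdot,(f_{\gamma^{i+1},b})_\#\cdot)=\gamma^{i+1}l_p^p(\cdot,\cdot)$), then crudely bounds $\gamma^{i+1}\le\gamma$ and takes a $p$-th root at the end. You instead work directly with $l_p$, use the triangle inequality (which is what your inequality in (ii) really is, not $p$-convexity), and sum the geometric series in $\lambda\gamma^{1/p}$. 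This buys you the sharper intermediate modulus $(1-\lambda)\gamma^{1/p}/(1-\lambda\gamma^{1/p})$, which the paper's route does not see because it discards the $i$-dependence before summing. Two small remarks: your observation that $\mathcal{N}=1$ is correct (the paper leaves it as a sum of weights but the importance-sampling identity gives exactly this), and your property (ii) is mislabelled---``$p$-convexity'' in \citet{bdr2022} refers to $l_p^p$, whereas the inequality you actually use (and which is valid) is ordinary convexity of $l_p$ coming from Minkowski's inequality.
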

\begin{proof}
First, the operator can be rewritten using summation as follows.
\begin{equation}
\begin{aligned}
\mathcal{T}^{\mu, \pi}_\lambda\nu(s,a) &= \frac{1-\lambda}{\mathcal{N}}\sum_{i=0}^{\infty}\lambda^i\sum_{a'\in A}\sum_{(s_0, a_0, r_0, ..., s_{i+1})}\mathrm{Pr}_\mu(\underbrace{s_0, a_0, r_0, ..., s_{i+1}}_{=:\tau})\left(\prod_{j=1}^i\eta(s_j, a_j)\right)\\
&\quad\quad\quad \times \pi(a'|s_{i+1})(f_{\gamma^{i+1}, \sum_{t=0}^i\gamma^t r_t})_{\#} (\nu(s_{i+1}, a'))\\
&=\frac{1-\lambda}{\mathcal{N}}\sum_{i=0}^{\infty}\lambda^i\sum_{a'\in A}\sum_\tau\mathrm{Pr}_\mu(\tau)\left(\prod_{j=1}^i\eta(s_j, a_j)\right)\pi(a'|s_{i+1})\sum_{s'\in S}\mathbf{1}_{s'=s_{i+1}} \\
&\quad\quad\quad \times \sum_{r'_{0:i}}\left(\prod_{k=0}^{i}\mathbf{1}_{r'_k=r_k}\right)(f_{\gamma^{i+1}, \sum_{t=0}^i\gamma^t r'_t})_{\#} (\nu(s', a'))\\
&= \frac{1-\lambda}{\mathcal{N}}\sum_{i=0}^{\infty}\lambda^i\sum_{a'\in A}\sum_{s'\in S}\sum_{r'_{0:i}}(f_{\gamma^{i+1}, \sum_{t=0}^i\gamma^t r'_t})_{\#} (\nu(s', a'))\\
& \quad\quad\quad \times\underbrace{\mathbb{E}_{\mu}\left[\left(\prod_{j=1}^i\eta(s_j, a_j)\right)\pi(a'|s_{i+1})\mathbf{1}_{s'=s_{i+1}}\left(\prod_{k=0}^{i}\mathbf{1}_{r'_k=r_k}\right)\right]}_{=: w_{s', a', r'_{0:i}}}\\
&= \frac{1-\lambda}{\mathcal{N}}\sum_{i=0}^{\infty}\sum_{s'\in S}\sum_{a'\in A}\sum_{r'_{0:i}}\lambda^iw_{s', a', r'_{0:i}}(f_{\gamma^{i+1}, \sum_{t=0}^i\gamma^t r'_t})_{\#} (\nu(s', a')).\\
\end{aligned}
\end{equation}
Since the sum of weights of distributions should be one, we can find the normalization factor $\mathcal{N}=(1-\lambda)\sum_{i=0}^{\infty}\sum_{s\in S}\sum_{a\in A}\sum_{r_{0:i}}\lambda^iw_{s, a, r_{0:i}}$.
Then, the following inequality can be derived using the homogeneity, regularity, and convexity of $l_p$:
\begin{equation}
\begin{aligned}
l^p_p&(\mathcal{T}^{\mu, \pi}_\lambda\nu_1(s,a), \mathcal{T}^{\mu, \pi}_\lambda\nu_2(s,a)) \\
&= l^p_p\left(\frac{1-\lambda}{\mathcal{N}}\sum_{i=0}^{\infty}\sum_{s\in S}\sum_{a\in A}\sum_{r_{0:i}}\lambda^iw_{s, a, r_{0:i}}(f_{\gamma^{i+1}, \sum_{t=0}^i\gamma^tr_t})_{\#} (\nu_1(s, a)), \right. \\
&\qquad\qquad\qquad \left.\frac{1-\lambda}{\mathcal{N}}\sum_{i=0}^{\infty}\sum_{s\in S}\sum_{a\in A}\sum_{r_{0:i}}\lambda^iw_{s, a, r_{0:i}}(f_{\gamma^{i+1}, \sum_{t=0}^i\gamma^tr_t})_{\#} (\nu_2(s, a))\right) \\
&\leq \sum_{i=0}^{\infty}\sum_{s\in S}\sum_{a\in A}\sum_{r_{0:i}}\frac{(1-\lambda)\lambda^iw_{s, a, r_{0:i}}}{\mathcal{N}}l^p_p\left((f_{\gamma^{i+1}, \sum_{t=0}^i\gamma^tr_t})_{\#} (\nu_1(s, a)), \right.\\
&\left.\qquad\qquad\qquad(f_{\gamma^{i+1}, \sum_{t=0}^i\gamma^tr_t})_{\#} (\nu_2(s, a))\right) \\
&\leq \sum_{i=0}^{\infty}\sum_{s\in S}\sum_{a\in A}\sum_{r_{0:i}}\frac{(1-\lambda)\lambda^iw_{s, a, r_{0:i}}}{\mathcal{N}}l^p_p\left((f_{\gamma^{i+1}, 0})_{\#} (\nu_1(s, a)), (f_{\gamma^{i+1}, 0})_{\#} (\nu_2(s, a))\right) \\
&= \sum_{i=0}^{\infty}\sum_{s\in S}\sum_{a\in A}\sum_{r_{0:i}}\frac{(1-\lambda)\lambda^iw_{s, a, r_{0:i}}}{\mathcal{N}}\gamma^{i+1}l^p_p\left( \nu_1(s, a), \nu_2(s, a)\right) \\
&\leq \sum_{i=0}^{\infty}\sum_{s\in S}\sum_{a\in A}\sum_{r_{0:i}}\frac{(1-\lambda)\lambda^iw_{s, a, r_{0:i}}}{\mathcal{N}}\gamma^{i+1}\left(\bar{l}_p\left(\nu_1, \nu_2 \right)\right)^p \\
&\leq \gamma\left(\bar{l}_p\left(\nu_1, \nu_2 \right)\right)^p. \\
\end{aligned}
\end{equation}
Therefore, $\bar{l}_p\left(\mathcal{T}^{\mu, \pi}_\lambda\nu_1, \mathcal{T}^{\mu, \pi}_\lambda\nu_2 \right) \leq \gamma^{1/p} \bar{l}_p\left(\nu_1, \nu_2 \right)$.
\end{proof}

By the Banach's fixed point theorem, the operator $\mathcal{T}^{\mu, \pi}_\lambda$ has a unique fixed distribution.
We now show that the fixed distribution is $\nu^\pi_R$.
\begin{lemma}
\label{lemma: fixed point}
The fixed distribution of the operator $\mathcal{T}^{\mu, \pi}_\lambda$ is $\nu^\pi_R$.
\end{lemma}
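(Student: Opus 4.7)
The plan is to verify directly that $\nu^\pi_R$ is a fixed point of $\mathcal{T}^{\mu,\pi}_\lambda$. Since Lemma \ref{lemma: contractive} already gives uniqueness of the fixed point and convergence of the Banach iteration, the only remaining task is to check the single identity $\mathcal{T}^{\mu,\pi}_\lambda \nu^\pi_R = \nu^\pi_R$, after which the convergence statement $Z_k \to Z^\pi_R$ follows immediately.

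The first step is to convert the $\mu$-expectation inside the operator into a $\pi$-expectation via importance sampling. Starting from $(s_0,a_0)=(s,a)$ and generating the trajectory under $\mu$ (transitions by $P$, actions by $\mu$), the weight $\prod_{j=1}^{i}\eta(s_j,a_j)=\prod_{j=1}^{i}\pi(a_j|s_j)/\mu(a_j|s_j)$ precisely re-weights the law of $(a_1,\dots,a_i)$ from $\mu$ to $\pi$ conditioned on the encountered states. Therefore, for any bounded functional $f$ of $(s_0,a_0,r_0,s_1,a_1,\dots,s_{i+1})$, the identity $\mathbb{E}_{\mu}[\prod_{j=1}^{i}\eta(s_j,a_j)\,f]=\mathbb{E}_{\pi}[f]$ holds. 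Applying this to the integrand in (\ref{eq:distributional_operator2}) replaces each $\mu$-expectation by a $\pi$-expectation with the same structure but without the likelihood ratios.

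The second step is to evaluate the resulting inner pushforward. Under the $\pi$-trajectory with $a'\sim\pi(\cdot|s_{i+1})$, the distribution $(f_{\gamma^{i+1},\sum_{t=0}^{i}\gamma^t r_t})_{\#}\nu^\pi_R(s_{i+1},a')$ is exactly the conditional law of $\sum_{t=0}^{i}\gamma^t R_t+\gamma^{i+1}Z^\pi_R(s_{i+1},a')$ given $(R_0,\dots,R_i,s_{i+1},a')$. Integrating out those variables under the $\pi$-trajectory starting at $(s,a)$ and using the standard $(i+1)$-fold iteration of the distributional Bellman equation $\mathcal{T}^{\pi}\nu^\pi_R=\nu^\pi_R$, the resulting marginal distribution equals $\nu^\pi_R(s,a)$ for every $i\ge 0$. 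Consequently,
\begin{equation*}
\mathcal{T}^{\mu,\pi}_\lambda\nu^\pi_R(s,a)=\frac{1-\lambda}{\mathcal{N}}\sum_{i=0}^{\infty}\lambda^i\,\nu^\pi_R(s,a).
\end{equation*}

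The final step is to pin down the normalization. Choosing $f\equiv 1$ in the same importance-sampling identity yields $\mathbb{E}_{\mu}[\prod_{j=1}^{i}\eta(s_j,a_j)]=1$, so summing the weights $w_{s',a',r'_{0:i}}$ from the proof of Lemma \ref{lemma: contractive} over $(s',a',r'_{0:i})$ gives $1$ for every $i$. Hence $\mathcal{N}=(1-\lambda)\sum_{i=0}^{\infty}\lambda^i=1$, and the geometric series in the previous display collapses to $\nu^\pi_R(s,a)$. This shows $\mathcal{T}^{\mu,\pi}_\lambda\nu^\pi_R=\nu^\pi_R$, which together with Lemma \ref{lemma: contractive} and Banach's fixed-point theorem proves Theorem \ref{thm:target}. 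The main obstacle I anticipate is bookkeeping: the pushforward has to be applied to the conditional return distribution before the outer integration against the $\pi$-trajectory law, and the way the deterministic shift $\sum_{t=0}^{i}\gamma^t r_t$ interacts with the random scaling $\gamma^{i+1}$ and the subsequent Bellman collapse must be spelled out carefully so that the iterated Bellman identity is applied to the correct object.
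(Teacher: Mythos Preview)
Your proposal is correct and follows essentially the same route as the paper: convert the $\mu$-expectation with the likelihood ratios into a $\pi$-expectation, apply the iterated distributional Bellman identity so that each $i$th summand collapses to $\nu^\pi_R(s,a)$, and then use the normalization to conclude. Your explicit verification that $\mathcal{N}=1$ via the importance-sampling identity with $f\equiv 1$ is a detail the paper leaves implicit in the final equality, but otherwise the arguments coincide.
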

\begin{proof}
From the definition of $Z_R^{\pi}$, the following equality holds \citep{rowland2018dist}: $\nu_R^{\pi}(s,a) = \mathbb{E}_{\pi}\left[(f_{\gamma, r})_{\#}(\nu_R^{\pi}(s', a'))\right]$.
Then, it can be shown that $\nu_R^\pi$ is the fixed distribution by applying the operator $\mathcal{T}^{\mu, \pi}_\lambda$ to $\nu_R^\pi$:
\begin{equation}
\begin{aligned}
&\mathcal{T}^{\mu, \pi}_\lambda\nu_R^\pi(s,a) = \frac{1-\lambda}{\mathcal{N}}\sum_{i=0}^{\infty}\lambda^i\\
& \times\mathbb{E}_{\mu}\left[\left(\prod_{j=1}^i\eta(s_j, a_j)\right)\mathbb{E}_{a'\sim\pi(\cdot|s_{i+1})}\left[(f_{\gamma^{i+1}, \sum_{t=0}^i\gamma^t r_t})_{\#} (\nu_R^{\pi}(s_{i+1}, a'))\right]\Big|s_0=s, a_0=a\right] \\
&=\frac{1-\lambda}{\mathcal{N}}\sum_{i=0}^{\infty}\lambda^i\mathbb{E}_{\pi}\left[(f_{\gamma^{i+1}, \sum_{t=0}^i\gamma^t r_t})_{\#} (\nu_R^{\pi}(s_{i+1}, a_{i+1}))\Big|s_0=s, a_0=a\right]\\
& = \frac{1-\lambda}{\mathcal{N}}\sum_{i=0}^{\infty}\lambda^i \nu_R^{\pi}(s, a) = \nu_R^{\pi}(s, a).
\end{aligned}
\end{equation}
\end{proof}

\converge*
\begin{proof}
The operator $\mathcal{T}^{\mu, \pi}_\lambda$ is $\gamma^{1/p}$-contractive under the distance $\bar{l}_p$ according to Lemma \ref{lemma: contractive}.
Also, the fixed distribution of the operator is $\nu_R^\pi$, which is equivalent to $Z_R^\pi$, according to Lemma \ref{lemma: fixed point}.
By the Banach's fixed point theorem, the sequence, $Z_{k+1}(s,a) = \mathcal{T}^{\mu, \pi}_\lambda Z_k(s,a) \; \forall (s,a)$, converges to the fixed distribution of the operator, $Z^\pi_R$.
\end{proof}

\subsection{Pseudocode of TD($\lambda$) Target Distribution}
\label{sec:pseudocode_target}

We provide the pseudocode for calculating TD($\lambda$) target distribution for the reward critic in Algorithm \ref{algo:target distribution}.
The target distribution for the cost critics can also be obtained by simply replacing the reward part with the cost.

\begin{algorithm}[h]
\caption{TD($\lambda$) Target Distribution}
\label{algo:target distribution}
\begin{algorithmic}
\STATE {\bfseries Input:} Policy network $\pi_{\psi}$, critic network $Z_{\theta}^{\pi}$, and trajectory $\{(s_t, a_t, \mu(a_t|s_t), r_t, d_t, s_{t+1})\}_{t=1}^{T}$.
\STATE Sample an action $a'_{T+1}\sim \pi_{\psi}(s_{T+1})$ and get $\hat{Z}_{T}^{\mathrm{tot}} = r_T + (1 - d_T)\gamma Z_{\theta}^{\pi}(s_{T+1}, a'_{T+1})$.
\STATE Initialize the total weight $w_{\mathrm{tot}}=\lambda$.
\FOR{$t=T$ {\bfseries to} $1$}
    \STATE Sample an action $a'_{t+1}\sim \pi_{\psi}(s_{t+1})$ and get $\hat{Z}_{t}^{(1)} = r_t + (1 - d_t)\gamma Z_{\theta}^{\pi}(s_{t+1}, a'_{t+1})$. 
    \STATE Set the current weight $w= 1- \lambda$. 
    \STATE Combine the two targets, $(\hat{Z}_{t}^{(1)}, w)$ and $(\hat{Z}_{t}^{(\mathrm{tot})}, w_\mathrm{tot})$, and sort the combined target according to the positions of atoms.
    \STATE Build the CDF of the combined target by accumulating the weights at each atom.
    \STATE Project the combined target into a quantile distribution with $M'$ atoms, which is $\hat{Z}_{t}^{(\mathrm{proj})}$, using the CDF (find the atom positions corresponding to each quantile). 
    \STATE Update $\hat{Z}_{t-1}^{(\mathrm{tot})} = r_{t-1} + (1-d_{t-1})\gamma\hat{Z}_{t}^{(\mathrm{proj})}$ and $w_{\mathrm{tot}}=\lambda\frac{\pi_{\psi}(a_t|s_t)}{\mu(a_t|s_t)}(1-d_{t-1})(1-\lambda + w_{\mathrm{tot}})$.
\ENDFOR
\STATE \textbf{Return} $\{\hat{Z}_{t}^{(\mathrm{proj})}\}_{t=1}^{T}$.
\end{algorithmic}
\end{algorithm}

\subsection{Quantitative Analysis on TD($\lambda$) Target Distribution}
\label{sec:toy example of target}

We experiment with a toy example to measure the bias and variance of the reward estimation according to $\lambda$. 
The toy example has two states, $s_1$ and $s_2$; the state distribution is defined as an uniform; the reward function is defined as $r(s_1) \sim \mathcal{N}(-0.005, 0.02)$ and $r(s_2) \sim \mathcal{N}(0.005, 0.03)$. 
We train parameterized reward distributions by minimizing the quantile regression loss with the TD($\lambda$) target distribution for $\lambda = 0, 0.5, 0.9$, and $1.0$.
The experimental results are presented in the table below.

\begin{table}[ht]
\caption{Experimental results of the toy example.}
\label{table:results of toy example}
\vskip 0.15in
\small
\begin{center}
\begin{tabular}{llllll}
\toprule
& 5th iteration & 10th iteration & 15th iteration & 20th iteration & 25th iteration \\
\midrule
$\lambda=0.0$ & 4.813 (0.173) & 4.024 (0.253)  & 3.498 (0.085)  & 3.131 (0.103)  & 2.835 (0.070)  \\
$\lambda=0.5$ & 4.621 (0.185) & 3.688 (0.273)  & 2.925 (0.183)  & 2.379 (0.134)  & 2.057 (0.070)  \\
$\lambda=0.9$ & 4.141 (0.461) & 2.237 (0.402)  & 1.389 (0.132)  & 1.058 (0.031)  & 0.923 (0.019)  \\
$\lambda=1.0$ & 2.886 (0.767) & 1.733 (0.365)  & 1.509 (0.514)  & 1.142 (0.325)  & 1.109 (0.476) \\
\bottomrule
\end{tabular}
\end{center}
\end{table}

The values in the table are the mean and standard deviation of the past five values of the Wasserstein distance between the true reward return and the estimated distribution. 
Looking at the fifth iteration, it is clear that the larger the $\lambda$ value, the smaller the mean and the higher the standard deviation.
At the 25th iteration, the run with $\lambda=0.9$ has the lowest mean and standard deviation, indicating that training has converged.
On the other hand, the run with $\lambda=1.0$ has the biggest standard deviation, and the mean is greater than $\lambda=0.9$, indicating that the significant variance hinders training. 
In conclusion, we measured bias and variance quantitatively through the toy example, and the results are well aligned with our claim that $\lambda$ can trade off bias and variance.

\subsection{Surrogate Functions}
\label{sec:surrogate}

In this section, we introduce the surrogate functions for the objective and constraints.
First, \citet{kim2022offtrc} define a doubly discounted state distribution: $d_2^{\pi}(s):=(1-\gamma^2)\sum_{t=0}^\infty \gamma^{2t}\mathrm{Pr}(s_t=s|\pi)$.
Then, the surrogates for the objective and constraints are defined as follows \citep{kim2022offtrc}:
\begin{equation}
\label{eq:original surrogate}
\small
\begin{aligned}
J^{\mu, \pi}(\pi') &:= \underset{s_0\sim\rho}{\mathbb{E}}\left[V^\pi(s_0)\right] + \frac{1}{1-\gamma}\Big(\beta\underset{d^\pi}{\mathbb{E}}\left[H(\pi'(\cdot|s))\right] + \underset{d^\mu, \pi'}{\mathbb{E}}\left[Q_R^\pi(s,a)\right]\Big), \\
J_{C_k}^{\mu, \pi}(\pi') &:= \underset{s_0\sim\rho}{\mathbb{E}}\left[V_{C_k}^\pi(s_0)\right] + \frac{1}{1-\gamma}\underset{d^\mu, \pi'}{\mathbb{E}}\left[Q_{C_k}^\pi(s,a)\right], \\
J^{\mu,\pi}_{S_k}(\pi') &:= \underset{s_0\sim\rho}{\mathbb{E}}\left[S_{C_k}^\pi(s_0)\right] + \frac{1}{1-\gamma^2}\underset{d_2^\mu, \pi'}{\mathbb{E}}\left[S_{C_k}^\pi(s,a)\right], \\
F_k^{\mu, \pi}(\pi';\alpha) &:= J^{\mu,\pi}_{C_k}(\pi') + \frac{\phi(\Phi^{-1}(\alpha))}{\alpha}\sqrt{J^{\mu,\pi}_{S_k}(\pi') - (J^{\mu,\pi}_{C_k}(\pi'))^2},
\end{aligned}
\end{equation}
where $\mu, \pi, \pi'$ are behavioral, current, and next policies, respectively. 
According to Theorem 1 in \citep{kim2022offtrc}, the constraint surrogates are bounded by $D_\mathrm{KL}(\pi, \pi')$.
We also show that the surrogate of the objective is bounded by $D_\mathrm{KL}(\pi, \pi')$ in Appendix \ref{sec:bound of entropy-augmented objective}.
As a result, the gradients of the objective function and constraints become the same as the gradients of the surrogates, and the surrogates can substitute the objective and constraints within the trust region.

\subsection{Policy Update Rule}
\label{sec:policy update rule}

To solve the constrained optimization problem (\ref{eq:safe RL subproblem}), we find a policy update direction by linearly approximating the objective and safety constraints and quadratically approximating the trust region constraint, as done by \citet{achiam2017cpo}.
After finding the direction, we update the policy using a line search method.
Given the current policy parameter $\psi_t \in \Psi$, the approximated problem can be expressed as follows:
\begin{equation}
\label{eq:LQCLP}
x^* = \underset{x \in \Psi}{\mathrm{argmax}} \; g^{T}x \quad \mathrm{s.t.} \;
\frac{1}{2}x^T H x \leq \epsilon, \; b_k^T x + c_k \leq 0 \; \forall k,
\end{equation}
where $g = \nabla_\psi J^{\mu,\pi}(\pi_\psi)|_{\psi=\psi_t}$, $H=\nabla_{\psi}^2D_{\mathrm{KL}}(\pi_{\psi_t}||\pi_{\psi})|_{\psi=\psi_t}$, $b_k = \nabla_\psi F_k^{\mu,\pi}(\pi_\psi;\alpha)|_{\psi=\psi_t}$, and $c_k = F_k(\pi_\psi;\alpha) - d_k$.
Since (\ref{eq:LQCLP}) is convex, we can use an existing convex optimization solver.
However, the search space, which is the policy parameter space $\Psi$, is excessively large, so we reduce the space by converting (\ref{eq:LQCLP}) to a dual problem as follows:
\begin{equation}
\begin{aligned}
g(\lambda, \nu) &= \mathrm{min}_x L(x,\lambda, \nu) =\mathrm{min}_x\{-g^{T}x + \nu(\frac{1}{2}x^T H x - \epsilon) + \lambda^T(B x + c)\} \\
&=\frac{-1}{2\nu}\left(\underbrace{g^TH^{-1}g}_{=:q} - 2\underbrace{g^TH^{-1}B^T}_{=:r^T}\lambda + \lambda^T\underbrace{B H^{-1}B^T}_{=:S}\lambda\right) + \lambda^T c - \nu\epsilon \\
&= \frac{-1}{2\nu}(q - 2r^T\lambda + \lambda^T S \lambda) + \lambda^T c - \nu\epsilon,
\end{aligned}
\end{equation}
where $B = (b_1, .., b_K)$, $c = (c_1, ..., c_K)^T$, and $\lambda \in \mathbb{R}^K \geq 0$ and $\nu \in \mathbb{R} \geq 0$ are Lagrange multipliers.
Then, the optimal $\lambda$ and $\nu$ can be obtained by a convex optimization solver.
After obtaining the optimal values, $(\lambda^*, \nu^*) = \mathrm{argmax}_{(\lambda, \nu)}g(\lambda, \nu)$, the policy update direction $x^*$ are calculated by $\frac{1}{\nu^*}H^{-1}(g - B^T \lambda^*)$.
Then, the policy is updated by $\psi_{t+1} = \psi_t + \beta x^*$, where $\beta$ is a step size, which can be found through a backtracking method (please refer to Section 6.3.2 of \cite{dennis1996numerical}).

Before using the above policy update rule, we should note that the existing trust-region method with the risk-averse constraint \citep{kim2022offtrc} and the equations (\ref{eq:safe RL problem}, \ref{eq:original surrogate}, \ref{eq:safe RL subproblem}) are slightly different.
There are two differences: 1) the objective is augmented with an entropy bonus, and 2) the surrogates are expressed with Q-functions instead of value functions.
To use the entropy-regularized objective in the trust-region method, it is required to show that the objective is bounded by the KL divergence.
We present the existence of bound in Appendix \ref{sec:bound of entropy-augmented objective}.
Next, there is no problem using the Q-functions because it is mathematically equivalent between the original surrogates \citep{kim2022offtrc} and the new ones expressed with Q-functions (\ref{eq:original surrogate}). 
However, we experimentally show that using the Q-functions in off-policy settings has advantages in Appendix \ref{sec:compare surrogate}.

\subsection{Bound of Entropy-Augmented Objective}
\label{sec:bound of entropy-augmented objective}

In this section, we show that the entropy-regularzied objective in (\ref{eq:safe RL problem}) has a bound expressed by the KL divergence.
Before showing the boundness, we present a new function and a lemma.
A value difference function is defined as follows:
\begin{equation*}
\delta^{\pi'}(s):=\mathbb{E}\left[R(s,a,s')+\gamma V^{\pi}(s') - V^\pi(s) \; | \; a\sim\pi'(\cdot|s), s'\sim P(\cdot|s,a)\right] = \underset{a\sim\pi'}{\mathbb{E}}\left[A^\pi(s,a)\right],
\end{equation*}
where $A^\pi(s,a) := Q^\pi(s,a) - V^\pi(s,a)$.
\begin{lemma}
\label{lemma:delta difference}
The maximum of $|\delta^{\pi'}(s) - \delta^{\pi}(s)|$ is equal or less than $\epsilon_R\sqrt{2D_{\mathrm{KL}}^{\mathrm{max}}(\pi||\pi')}$, where $\epsilon_R=\underset{s,a}{\mathrm{max}}\left|A^{\pi}(s,a)\right|$.
\end{lemma}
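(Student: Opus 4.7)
The plan is to exploit the fact that $\delta^{\pi}(s)$ vanishes identically, so the quantity to bound collapses to $|\mathbb{E}_{a\sim\pi'}[A^\pi(s,a)]|$, and then control this via total variation plus Pinsker's inequality. Concretely, I would first observe that by the definition of the advantage and the tower rule,
\begin{equation*}
\delta^{\pi}(s) = \mathbb{E}_{a\sim\pi(\cdot|s)}[A^\pi(s,a)] = \mathbb{E}_{a\sim\pi(\cdot|s)}[Q^\pi(s,a)] - V^\pi(s) = 0.
\end{equation*}
Hence $|\delta^{\pi'}(s) - \delta^{\pi}(s)| = \bigl|\mathbb{E}_{a\sim\pi'}[A^\pi(s,a)] - \mathbb{E}_{a\sim\pi}[A^\pi(s,a)]\bigr|$, i.e.\ an integral of the bounded function $A^\pi(s,\cdot)$ against the signed measure $\pi'(\cdot|s) - \pi(\cdot|s)$.

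The second step is a Hölder bound: pulling out the sup norm of $A^\pi$ gives
\begin{equation*}
|\delta^{\pi'}(s) - \delta^{\pi}(s)| \le \epsilon_R \sum_{a} |\pi'(a|s) - \pi(a|s)| = 2\epsilon_R \cdot \mathrm{TV}\bigl(\pi(\cdot|s), \pi'(\cdot|s)\bigr),
\end{equation*}
using $\epsilon_R = \max_{s,a}|A^\pi(s,a)|$. The third step is Pinsker's inequality, $\mathrm{TV}(\pi(\cdot|s), \pi'(\cdot|s)) \le \sqrt{\tfrac{1}{2} D_{\mathrm{KL}}(\pi(\cdot|s)\|\pi'(\cdot|s))}$, which converts the $2\cdot\mathrm{TV}$ factor into $\sqrt{2 D_{\mathrm{KL}}(\pi(\cdot|s)\|\pi'(\cdot|s))}$. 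Finally, taking the supremum over $s$ on both sides replaces the pointwise KL by $D_{\mathrm{KL}}^{\max}(\pi\|\pi')$, yielding the claimed bound $\epsilon_R\sqrt{2 D_{\mathrm{KL}}^{\max}(\pi\|\pi')}$.

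There is no real obstacle in this argument; it is a three-line chain (vanishing of $\delta^\pi$, Hölder, Pinsker) once one recognizes that the asymmetric appearance of $D_{\mathrm{KL}}(\pi\|\pi')$ on the right is compatible with Pinsker (which holds in either KL direction). The only minor care point is notational: verifying that the ``$\max$'' version $D_{\mathrm{KL}}^{\max}$ is defined as $\sup_s D_{\mathrm{KL}}(\pi(\cdot|s)\|\pi'(\cdot|s))$ so that the sup can be pushed inside the square root via monotonicity of $\sqrt{\cdot}$, and confirming that $\epsilon_R$ is indeed finite under the paper's standing boundedness assumptions on $R$ (so that $Q^\pi$, hence $A^\pi$, is uniformly bounded).
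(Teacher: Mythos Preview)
Your proposal is correct and essentially identical to the paper's proof: both write $\delta^{\pi'}(s)-\delta^{\pi}(s)$ as an integral of $A^\pi(s,\cdot)$ against $\pi'(\cdot|s)-\pi(\cdot|s)$, apply H\"older to get $2\epsilon_R D_{\mathrm{TV}}$, and finish with Pinsker plus a sup over $s$. Your explicit remark that $\delta^\pi(s)=0$ is a nice clarifying step that the paper leaves implicit.
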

\begin{proof}
The value difference can be expressed in a vector form,
\begin{equation*}
\delta^{\pi'}(s) - \delta^{\pi}(s) = \sum_a(\pi'(a|s) - \pi(a|s))A^\pi(s, a) = \langle \pi'(\cdot|s) - \pi(\cdot|s), A^{\pi}(s, \cdot) \rangle.
\end{equation*}
Using Hölder's inequality, the following inequality holds:
\begin{equation*}
\begin{aligned}
|\delta^{\pi'}(s) - \delta^{\pi}(s)| &\leq ||\pi'(\cdot|s) - \pi(\cdot|s)||_1 \cdot ||A^\pi(s,\cdot)||_{\infty} \\
&=2D_{\mathrm{TV}}(\pi'(\cdot|s)||\pi(\cdot|s))\mathrm{max}_a A^\pi(s,a).
\end{aligned}
\end{equation*}
\begin{equation*}
\Rightarrow ||\delta^{\pi'} - \delta^{\pi}||_{\infty} = \mathrm{max}_s |\delta^{\pi'}(s) - \delta^{\pi}(s)| \leq 2\epsilon_R\mathrm{max}_s D_{\mathrm{TV}}(\pi(\cdot|s)||\pi'(\cdot|s)).
\end{equation*}
Using Pinsker's inequality,
$||\delta^{\pi'} - \delta^{\pi}||_{\infty} \leq \epsilon_R\sqrt{2D_{\mathrm{KL}}^{\mathrm{max}}(\pi||\pi')}$.
\end{proof}

\begin{theorem}
\label{thm:bound}
Let us assume that $\mathrm{max}_s H(\pi(\cdot|s)) < \infty$ for $\forall\pi\in\Pi$.
The difference between the objective and surrogate functions is bounded by a term consisting of KL divergence as:
\begin{equation}
\small
\begin{aligned}
&\left|J(\pi') - J^{\mu, \pi}(\pi') \right| \leq \frac{\sqrt{2}\gamma}{(1 - \gamma)^2}\sqrt{D_{\mathrm{KL}}^{\mathrm{max}}(\pi||\pi')}\left(\beta\epsilon_H + \epsilon_R\sqrt{2D_{\mathrm{KL}}^{\mathrm{max}}(\mu||\pi')}\right),
\end{aligned}
\end{equation}
where $\epsilon_H=\underset{s}{\mathrm{max}}\left|H(\pi'(\cdot|s))\right|$, $D_{\mathrm{KL}}^{\mathrm{max}}(\pi||\pi')=\underset{s}{\mathrm{max}}\;D_{\mathrm{KL}}(\pi(\cdot|s)||\pi'(\cdot|s))$, and the equality holds when $\pi'=\pi$.
\end{theorem}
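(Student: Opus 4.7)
The plan is to decompose $J(\pi')-J^{\mu,\pi}(\pi')$ into a reward component and an entropy component and to bound each by a state-visitation shift argument combined with Pinsker's inequality and the already-established Lemma~\ref{lemma:delta difference}. Directly from~(\ref{eq:original surrogate}),
\begin{equation*}
\begin{aligned}
J(\pi') - J^{\mu,\pi}(\pi') &= \underbrace{\Bigl(\mathbb{E}_{s_0}[V^{\pi'}(s_0)-V^\pi(s_0)] - \tfrac{1}{1-\gamma}\mathbb{E}_{d^\mu,\pi'}[Q_R^\pi(s,a)]\Bigr)}_{=:\,T_R} \\
&\quad + \underbrace{\tfrac{\beta}{1-\gamma}\Bigl(\mathbb{E}_{d^{\pi'}}[H(\pi'(\cdot|s))] - \mathbb{E}_{d^\pi}[H(\pi'(\cdot|s))]\Bigr)}_{=:\,T_H},
\end{aligned}
\end{equation*}
so it is enough to control $|T_R|$ and $|T_H|$ separately and add.

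For $T_R$, I would first invoke the performance-difference identity $\mathbb{E}_{s_0}[V^{\pi'}-V^\pi] = \tfrac{1}{1-\gamma}\mathbb{E}_{d^{\pi'}}[\delta^{\pi'}]$ and use $Q_R^\pi = V_R^\pi + A_R^\pi$ to rewrite the importance-weighted $Q$-term as $\tfrac{1}{1-\gamma}\mathbb{E}_{d^\mu}[\delta^{\pi'}]$ after absorbing the constant $V^\pi$ baseline into $\mathbb{E}_{s_0}[V^\pi(s_0)]$, reducing $T_R$ to $\tfrac{1}{1-\gamma}\bigl(\mathbb{E}_{d^{\pi'}}[\delta^{\pi'}] - \mathbb{E}_{d^\mu}[\delta^{\pi'}]\bigr)$. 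Since $\delta^\pi\equiv 0$, Lemma~\ref{lemma:delta difference} promotes to the pointwise bound $\|\delta^{\pi'}\|_\infty \leq \epsilon_R\sqrt{2D_{\mathrm{KL}}^{\max}(\pi\|\pi')}$. I would then combine the elementary $|\mathbb{E}_{d_1}f-\mathbb{E}_{d_2}f|\leq 2\|f\|_\infty\|d_1-d_2\|_{\mathrm{TV}}$ with the classical discounted-visitation shift $\|d^{\pi_1}-d^{\pi_2}\|_{\mathrm{TV}}\leq\tfrac{\gamma}{1-\gamma}\max_s D_{\mathrm{TV}}(\pi_1(\cdot|s)\|\pi_2(\cdot|s))$ and Pinsker's $D_{\mathrm{TV}}\leq\sqrt{D_{\mathrm{KL}}/2}$, obtaining $|T_R| \leq \tfrac{2\gamma\epsilon_R}{(1-\gamma)^2}\sqrt{D_{\mathrm{KL}}^{\max}(\pi\|\pi')\,D_{\mathrm{KL}}^{\max}(\mu\|\pi')}$. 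The same recipe---bound the integrand by its sup-norm, the distribution shift by TV, then TV by KL via Pinsker---applied to $T_H$ with $f=H(\pi'(\cdot|s))$, $\|f\|_\infty\leq\epsilon_H$, and the shift now between $d^{\pi'}$ and $d^\pi$, yields $|T_H| \leq \tfrac{\sqrt{2}\gamma\beta\epsilon_H}{(1-\gamma)^2}\sqrt{D_{\mathrm{KL}}^{\max}(\pi\|\pi')}$. Summing the two and factoring out $\tfrac{\sqrt{2}\gamma}{(1-\gamma)^2}\sqrt{D_{\mathrm{KL}}^{\max}(\pi\|\pi')}$ recovers the claimed inequality, and both sides vanish at $\pi'=\pi$.

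The hard part will be the reward term. Lemma~\ref{lemma:delta difference} is only stated as a bound on $\|\delta^{\pi'}-\delta^\pi\|_\infty$, so I must first use $\delta^\pi\equiv 0$ to promote it to a pointwise bound on $\delta^{\pi'}$ itself---without this step the required $\sqrt{D_{\mathrm{KL}}^{\max}(\pi\|\pi')}$ factor never appears inside $T_R$ and the product structure of the stated RHS cannot emerge. A second bookkeeping subtlety is verifying that $\mathbb{E}_{d^\mu,\pi'}[Q_R^\pi]$ really reduces to $\mathbb{E}_{d^\mu}[\delta^{\pi'}]$ up to the additive constant that is exactly cancelled by the $\mathbb{E}_{s_0}[V^\pi(s_0)]$ baseline in the surrogate, since this is what makes $J^{\mu,\pi}(\pi)=J(\pi)$ and underwrites the equality-at-$\pi'=\pi$ claim. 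Finally, Pinsker must be applied in the correct direction for each of the two shifts---$D_{\mathrm{KL}}(\pi\|\pi')$ for $T_H$ versus $D_{\mathrm{KL}}(\mu\|\pi')$ for $T_R$---so that the constants in the statement match exactly.
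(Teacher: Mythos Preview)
Your proposal is correct and mirrors the paper's proof almost exactly: the paper likewise writes $J(\pi')-J^{\mu,\pi}(\pi')$ as $\tfrac{1}{1-\gamma}\langle d^{\pi'}-d^\mu,\delta^{\pi'}\rangle + \tfrac{\beta}{1-\gamma}\langle d^{\pi'}-d^\pi,H^{\pi'}\rangle$, invokes $\delta^\pi\equiv 0$ together with Lemma~\ref{lemma:delta difference} to bound $\|\delta^{\pi'}\|_\infty$, and then applies H\"older, the Achiam visitation-shift bound, and Pinsker in precisely the order you describe. The only cosmetic difference is that the paper works in vector notation and multiplies through by $(1-\gamma)$ at the outset, whereas you keep expectations and name the pieces $T_R,T_H$; the bookkeeping worry you flag about the $V^\pi$ baseline is also present in the paper, which silently passes from the $Q^\pi$-form of the surrogate in~(\ref{eq:original surrogate}) to the $\delta^{\pi'}$-form in its proof.
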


\begin{proof}
The surrogate function can be expressed in vector form as follows:
\begin{equation*}
J^{\mu, \pi}(\pi') = \langle\rho, V^\pi\rangle + \frac{1}{1-\gamma}\left(\langle d^\mu, \delta^{\pi'}\rangle + \beta\langle d^\pi, H^{\pi'}\rangle\right),
\end{equation*}
where $H^{\pi'}(s)=H(\pi'(\cdot|s))$.
The objective function of $\pi'$ can also be expressed in a vector form using Lemma 1 from \citet{achiam2017cpo},
\begin{equation*}
\begin{aligned}
J(\pi') &= \frac{1}{1-\gamma}\mathbb{E}\left[R(s,a,s') + \beta H^{\pi'}(s) \; | \; s\sim d^{\pi'}, a\sim\pi'(\cdot|s), s'\sim P(\cdot|s,a)\right] \\
&=\frac{1}{1-\gamma}\underset{s\sim d^{\pi'}}{\mathbb{E}}\left[\delta^{\pi'}(s) + \beta H^{\pi'}(s)\right] + \underset{s\sim\rho}{\mathbb{E}}\left[V^\pi(s)\right] \\
&=\langle\rho, V^\pi\rangle + \frac{1}{1-\gamma}\langle d^{\pi'}, \delta^{\pi'} + \beta H^{\pi'}\rangle.
\end{aligned}
\end{equation*}
By Lemma 3 from \citet{achiam2017cpo}, $||d^\pi - d^{\pi'}||_1 \leq \frac{\gamma}{1 - \gamma}\sqrt{2D^{\mathrm{max}}_{\mathrm{KL}}(\pi||\pi')}$.
Then, the following inequality is satisfied:
\begin{equation*}
\begin{aligned}
|(1-&\gamma)(J^{\mu, \pi}(\pi') - J(\pi'))| \\
&= |\langle d^{\pi'}-d^\mu, \delta^{\pi'}\rangle + \beta\langle d^{\pi} - d^{\pi'}, H^{\pi'}\rangle|\\
&\leq |\langle d^{\pi'}-d^\mu, \delta^{\pi'}\rangle| + \beta |\langle d^{\pi} - d^{\pi'}, H^{\pi'}\rangle|\\
&= |\langle d^{\pi'}-d^\mu, \delta^{\pi'} - \delta^{\pi}\rangle| + \beta |\langle d^{\pi} - d^{\pi'}, H^{\pi'}\rangle| && (\because \delta^{\pi}=0)\\
&\leq || d^{\pi'}-d^\mu||_1 ||\delta^{\pi'} - \delta^{\pi}||_{\infty} + \beta || d^{\pi} - d^{\pi'}||_1 ||H^{\pi'}||_\infty && (\because \text{\small Hölder's inequality})\\
&\leq \frac{2\epsilon_R \gamma}{1-\gamma}\sqrt{D_{\mathrm{KL}}^{\mathrm{max}}(\mu||\pi')D_{\mathrm{KL}}^{\mathrm{max}}(\pi||\pi')} + \frac{\beta\gamma\epsilon_H}{1-\gamma}\sqrt{2D^{\mathrm{max}}_{\mathrm{KL}}(\pi||\pi')} && (\because \text{Lemma \ref{lemma:delta difference}})\\
&= \frac{\gamma}{1 - \gamma}\sqrt{D_{\mathrm{KL}}^{\mathrm{max}}(\pi||\pi')}\left(\sqrt{2}\beta\epsilon_H + 2\epsilon_R\sqrt{D_{\mathrm{KL}}^{\mathrm{max}}(\mu||\pi')}\right).
\end{aligned}    
\end{equation*}
If $\pi'=\pi$, the KL divergence term becomes zero, so equality holds.
\end{proof}

\subsection{Comparison of Q-Function and Value Function-Based Surrogates}
\label{sec:compare surrogate}

The original surrogate is defined as follows:
\begin{equation}
\label{eq:off-trpo surrogates}
J^{\mu, \pi}(\pi') := J(\pi) + \frac{1}{1-\gamma}\underset{d^\mu, \mu}{\mathbb{E}}\left[\frac{\pi'(a|s)}{\mu(a|s)}A^\pi(s,a)\right],
\end{equation}
where $A^\pi(s,a) := Q^\pi(s,a) - V^\pi(s,a)$, and the surrogate is the same as that of OffTRPO \citep{meng2022offtrpo} and OffTRC \citep{kim2022offtrc}.
An entropy-regularized version can be derived as:
\begin{equation}
\label{eq:entropy-regularized surrogates}
J^{\mu, \pi}(\pi') = J(\pi) + \frac{1}{1-\gamma}\Big(\beta\underset{d^\pi}{\mathbb{E}}\left[H(\pi'(\cdot|s))\right] + \underset{d^\mu, \mu}{\mathbb{E}}\left[\frac{\pi'(a|s)}{\mu(a|s)}A^\pi(s,a)\right]\Big).
\end{equation}
Then, the surrogate expressed by Q-functions in (\ref{eq:original surrogate}), called SAC-style version, can be rewritten as:
\begin{equation}
\label{eq:Q-function-based surrogates}
J^{\mu, \pi}(\pi') = J(\pi) + \frac{1}{1-\gamma}\Big(\beta\underset{d^\pi}{\mathbb{E}}\left[H(\pi'(\cdot|s))\right] + \underset{d^{\mu},\pi'}{\mathbb{E}}\left[Q^{\pi}(s, a)\right] \Big).
\end{equation}
In this section, we evaluate the original, entropy-regularized, and SAC-style versions in the continuous control tasks of the MuJoCo simulators \citep{todorov2012mujoco}.
We use neural networks with two hidden layers with (512, 512) nodes and ReLU for the activation function.
The output of a value network is linear, but the input is different; the original and entropy-regularized versions use states, and the SAC-style version uses state-action pairs.
The input of a policy network is the state, the output is mean $\mu$ and std $\sigma$, and actions are squashed into $\mathrm{tanh}(\mu + \epsilon\sigma)$, $\epsilon \sim \mathcal{N}(0,1)$ as in SAC \citep{haarnoja2018sac}.
The entropy coefficient $\beta$ in the entropy-regularized and SAC-style versions are adaptively adjusted to keep the entropy above a threshold (set as $-d$ given $A \subseteq \mathbb{R}^d$).
The hyperparameters for all versions are summarized in Table \ref{table:surrogate settings}.
\begin{table}[ht]
\caption{Hyperparameters for all versions.}
\label{table:surrogate settings}
\vskip 0.15in
\small
\begin{center}
\begin{tabular}{l|l}
\toprule
Parameter & Value \\
\midrule
Discount factor $\gamma$ & 0.99 \\
Trust region size $\epsilon$ & 0.001 \\
Length of replay buffer & $10^5$ \\
Critic learning rate & $0.0003$ \\
Trace-decay $\lambda$ & $0.97$ \\
Initial entropy coefficient $\beta$ & 1.0 \\
$\beta$ learning rate & 0.01 \\
\bottomrule
\end{tabular}
\end{center}
\end{table}

\begin{figure}[ht]
\centering
\begin{subfigure}[b]{0.3\textwidth}
    \centering
    \includegraphics[width=\textwidth]{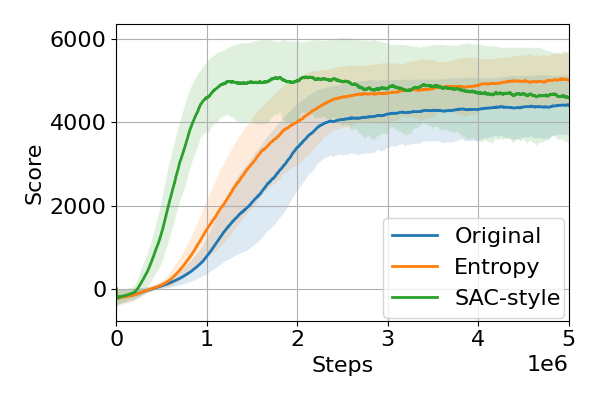}
    \caption{Ant-v3}
\end{subfigure}
\begin{subfigure}[b]{0.3\textwidth}
    \centering
    \includegraphics[width=\textwidth]{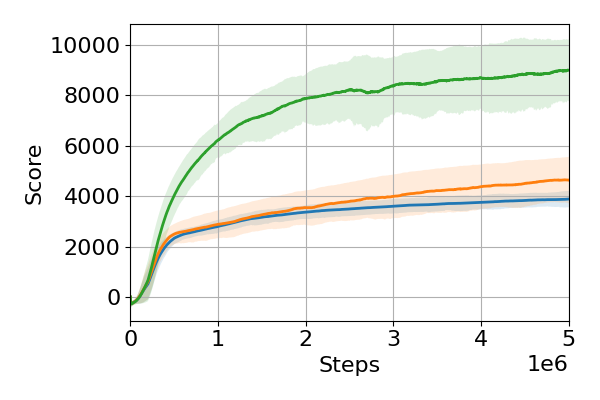}
    \caption{HalfCheetah-v3}
\end{subfigure}
\begin{subfigure}[b]{0.3\textwidth}
    \centering
    \includegraphics[width=\textwidth]{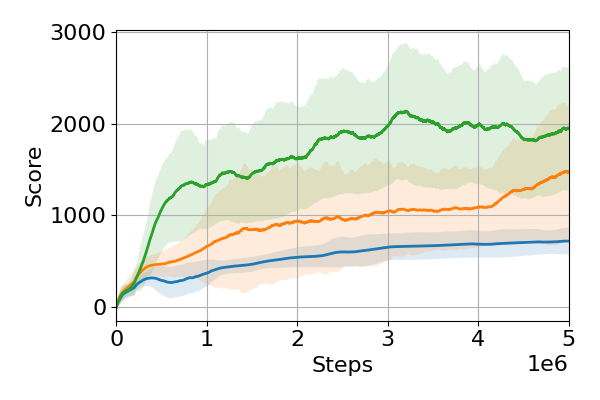}
    \caption{Hopper-v3}
\end{subfigure}
\begin{subfigure}[b]{0.3\textwidth}
    \centering
    \includegraphics[width=\textwidth]{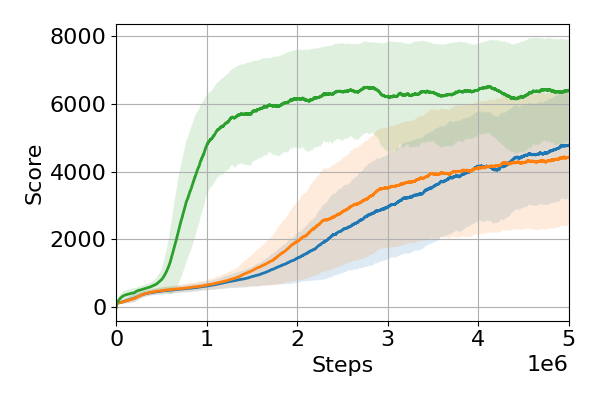}
    \caption{Humanoid-v3}
\end{subfigure}
\begin{subfigure}[b]{0.3\textwidth}
    \centering
    \includegraphics[width=\textwidth]{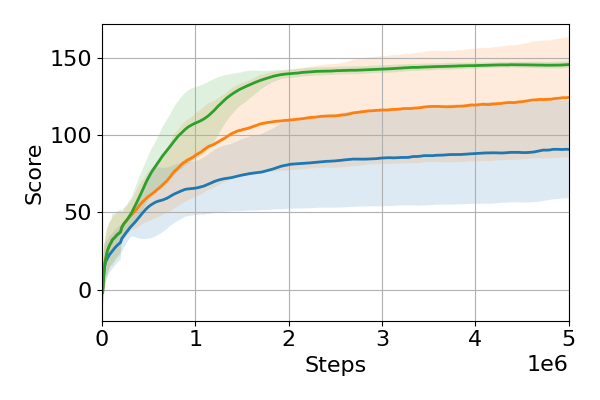}
    \caption{Swimmer-v3}
\end{subfigure}
\begin{subfigure}[b]{0.3\textwidth}
    \centering
    \includegraphics[width=\textwidth]{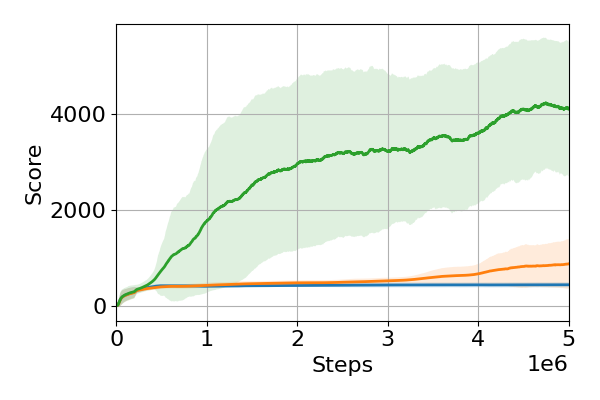}
    \caption{Walker2d-v3}
\end{subfigure}
\caption{MuJoCo training curves.}
\label{fig:MuJoCo Training Curves}
\end{figure}

The training curves are presented in Figure \ref{fig:MuJoCo Training Curves}.
All methods are trained with five different random seeds.
Although the entropy-regularized version (\ref{eq:entropy-regularized surrogates}) and SAC-style version (\ref{eq:Q-function-based surrogates}) are mathematically equivalent, it can be observed that the performance of the SAC-style version is superior to the regularized version.
It can be inferred that this is due to the variance of importance sampling.
In the off-policy setting, the sampling probabilities of the behavioral and current policies can be significantly different, so the variance of the importance ratio is huge.
The increased variance prevents estimating the objective accurately, so significant performance degradation can happen.
As a result, using the Q-function-based surrogates has an advantage for efficient learning.

\newpage
\section{Experimental Settings}
\label{sec:experimental settings}

\begin{figure}[h]
\centering
\begin{subfigure}[b]{0.19\textwidth}
    \centering
    \includegraphics[width=\textwidth]{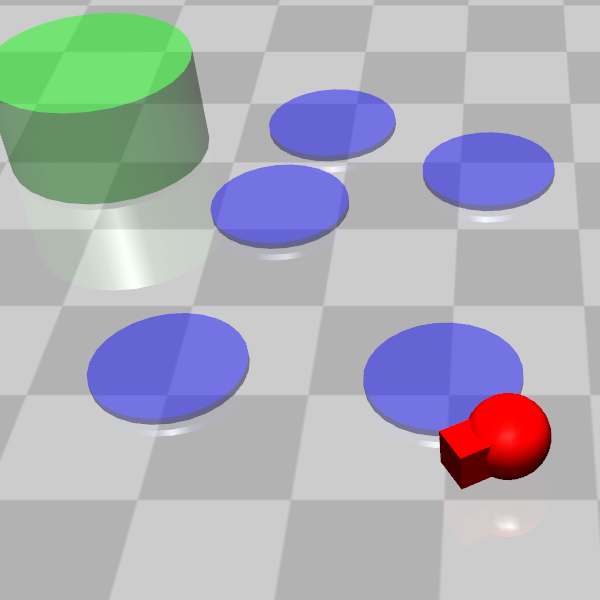}
    \caption{Point goal.}
    \label{sfig:point goal task}    
\end{subfigure}
\begin{subfigure}[b]{0.19\textwidth}
    \centering
    \includegraphics[width=\textwidth]{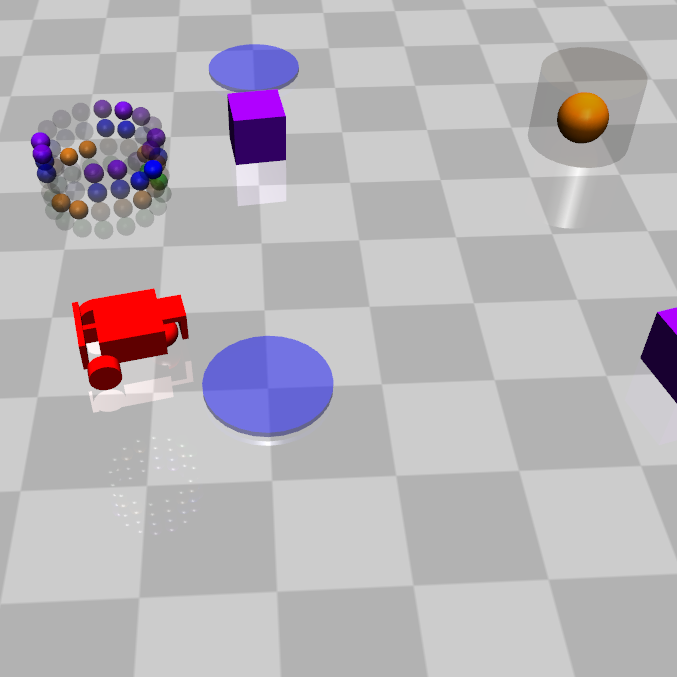}
    \caption{Car button.}
    \label{sfig:car button task}    
\end{subfigure}
\begin{subfigure}[b]{0.19\textwidth}
    \centering
    \includegraphics[width=\textwidth]{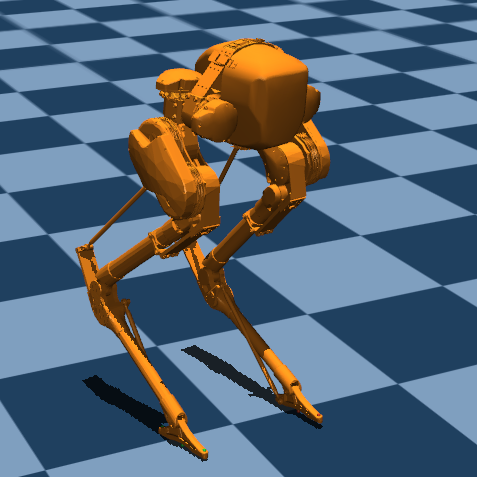}
    \caption{Cassie.}
    \label{sfig:cassie task}
\end{subfigure}
\begin{subfigure}[b]{0.19\textwidth}
    \centering
    \includegraphics[width=\textwidth]{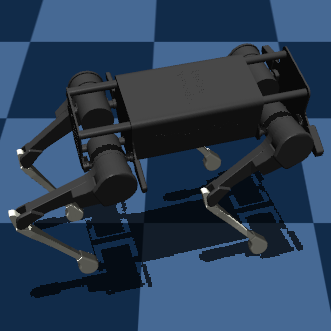}
    \caption{Laikago.}
    \label{sfig:laikago task}
\end{subfigure}
\begin{subfigure}[b]{0.19\textwidth}
    \centering
    \includegraphics[width=\textwidth]{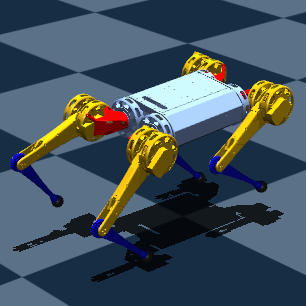}
    \caption{Mini-Cheetah.}
    \label{sfig:mini-cheetah task}
\end{subfigure}
\caption{(a) and (b) are Safety Gym tasks. (c), (d), and (e) are locomotion tasks.}
\label{fig:tasks}
\end{figure}

\textbf{Safety Gym.}
We use the goal and button tasks with the point and car robots in the Safety Gym environment \citep{ray2019safetygym}, as shown in Figure \ref{sfig:point goal task} and \ref{sfig:car button task}.
The environmental setting for the goal task is the same as in  \citet{kim2022trc}.
Eight hazard regions and one goal are randomly spawned at the beginning of each episode, and a robot gets a reward and cost as follows:
\begin{equation}
\label{eq:goal task reward}
\begin{aligned}
R(s,a,s') &= -\Delta d_\mathrm{goal} + \mathbf{1}_{d_{\mathrm{goal}} \leq 0.3}, \\
C(s,a,s') &= \mathrm{Sigmoid}(10\cdot(0.2 - d_\mathrm{hazard})),
\end{aligned}
\end{equation}
where $d_\mathrm{goal}$ is the distance to the goal, and $d_\mathrm{hazard}$ is the minimum distance to hazard regions.
If $d_\mathrm{goal}$ is less than or equal to $0.3$, a goal is respawned.
The state consists of relative goal position, goal distance, linear and angular velocities, acceleration, and LiDAR values.
The action space is two-dimensional, which consists of $xy$-directional forces for the point and wheel velocities for the car robot. \par
The environmental settings for the button task are the same as in \citet{liu2022cvpo}.
There are five hazard regions, four dynamic obstacles, and four buttons, and all components are fixed throughout the training.
The initial position of a robot and an activated button are randomly placed at the beginning of each episode.
The reward function is the same as in (\ref{eq:goal task reward}), but the cost is different since there is no dense signal for contacts.
We define the cost function for the button task as an indicator function that outputs one if the robot makes contact with an obstacle or an inactive button or enters a hazardous region.
We add LiDAR values of buttons and obstacles to the state of the goal task, and actions are the same as the goal task.
The length of the episode is 1000 steps without early termination.

\textbf{Locomotion Tasks.}
We use three different legged robots, Mini-Cheetah, Laikago, and Cassie, for the locomotion tasks, as shown in Figure \ref{sfig:mini-cheetah task}, \ref{sfig:laikago task}, and \ref{sfig:cassie task}.
The tasks aim to control robots to follow a velocity command on flat terrain.
A velocity command is given by $(v_x^\mathrm{cmd}, v_y^\mathrm{cmd}, \omega_z^\mathrm{cmd})$, where $v_x^\mathrm{cmd} \sim \mathcal{U}(-1.0, 1.0)$ for Cassie and $\mathcal{U}(-1.0, 2.0)$ otherwise, $v_y^\mathrm{cmd}=0$, and $\omega_z^\mathrm{cmd}\sim\mathcal{U}(-0.5, 0.5)$.
To lower the task complexity, we set the $y$-directional linear velocity to zero but can scale to any non-zero value.
As in other locomotion studies \citep{lee2020challenging, miki2022wild}, \emph{central phases} are introduced to produce periodic motion, which are defined as $\phi_i(t) = \phi_{i, 0} + f\cdot t$ for $\forall i \in \{1, ..., n_\mathrm{legs}\}$, where $f$ is a frequency coefficient and is set to $10$, and $\phi_{i, 0}$ is an initial phase. 
Actuators of robots are controlled by PD control towards target positions given by actions.
The state consists of velocity command, orientation of the robot frame, linear and angular velocities of the robot, positions and speeds of the actuators, central phases, history of positions and speeds of the actuators (past two steps), and history of actions (past two steps).
A foot contact timing $\xi$ can be defined as follows:
\begin{equation}
\xi_i(s) = -1 + 2\cdot \mathbf{1}_{\sin(\phi_i) \leq 0} \;\; \forall i \in \{1, ..., n_\mathrm{legs}\},
\end{equation}
where a value of -1 means that the $i$th foot is on the ground; otherwise, the foot is in the air.
For the quadrupedal robots, Mini-Cheetah and Laikago, we use the initial phases as $\phi_0 = \{0, \pi, \pi, 0\}$, which generates trot gaits.
For the bipedal robot, Cassie, the initial phases are defined as $\phi_0 = \{0, \pi\}$, which generates walk gaits.
Then, the reward and cost functions are defined as follows:
\begin{equation}
\label{eq:reward and costs}
\small
\begin{aligned}
&\qquad\qquad R(s,a,s') = -0.1\cdot(||v_{x,y}^{\mathrm{base}} - v_{x,y}^{\mathrm{cmd}}||_2^2 + ||\omega_{z}^{\mathrm{base}} - \omega_{z}^{\mathrm{cmd}}||_2^2 + 10^{-3}\cdot R_\mathrm{power}), \\
&C_1(s, a, s') = \mathbf{1}_{\mathrm{angle}\geq a}, \; C_2(s, a, s') = \mathbf{1}_{\mathrm{height}\leq b}, \; C_3(s, a, s') = \sum_{i=1}^{n_{\mathrm{legs}}}(1 - \xi_i\cdot \hat{\xi}_i)/(2\cdot n_{\mathrm{legs}}),
\end{aligned}
\end{equation}
where the power consumption $R_{\mathrm{power}}=\sum_i |\tau_i v_i|$, the sum of the torque times the actuator speed, is added to the reward as a regularization term, $v_{x,y}^{\mathrm{base}}$ is the $xy$-directional linear velocity of the base frame of robots, $\omega_z^{\mathrm{base}}$ is the $z$-directional angular velocity of the base frame, and $\hat{\xi} \in \{-1, 1\}^{n_\mathrm{legs}}$ is the current feet contact vector.
For balancing, the first cost indicates whether the angle between the $z$-axis vector of the robot base and the world is greater than a threshold ($a=15^{\circ}$ for all robots).
For standing, the second cost indicates the height of CoM is less than a threshold ($b=0.3, 0.35, 0.7$ for Mini-Cheetah, Laikago, and Cassie, respectively), and the last cost is to check that the current feet contact vector $\hat{\xi}$ matches the pre-defined timing $\xi$.
The length of the episode is 500 steps.
There is no early termination, but if a robot falls to the ground, the state is frozen until the end of the episode.

\textbf{Hyperparameter Settings.}
The structure of neural networks consists of two hidden layers with $(512, 512)$ nodes and ReLU activation for all baselines and the proposed method.
The input of value networks is state-action pairs, and the output is the positions of atoms.
The input of policy networks is the state, the output is mean $\mu$ and std $\sigma$, and actions are squashed into $\mathrm{tanh}(\mu + \epsilon\sigma)$, $\epsilon \sim \mathcal{N}(0,1)$.
We use a fixed entropy coefficient $\beta$.
The trust region size $\epsilon$ is set to $0.001$ for all trust region-based methods.
The overall hyperparameters for the proposed method can be summarized in Table \ref{table: safe RL settings}.
\begin{table}[ht]
\caption{Hyperparameter settings for the Safety Gym and locomotion tasks.}
\label{table: safe RL settings}
\vskip 0.15in
\begin{center}
\begin{small}
\begin{tabular}{l|l|l}
\toprule
Parameter & Safety Gym & Locomotion \\ 
\midrule
Discount factor $\gamma$ & 0.99 & 0.99 \\
Trust region size $\epsilon$ & 0.001 & 0.001 \\
Length of replay buffer & $10^5$ & $10^5$ \\
Critic learning rate & $0.0003$ & $0.0003$ \\
Trace-decay $\lambda$ & $0.97$ & $0.97$ \\
Entropy coefficient $\beta$ & 0.0 & 0.001 \\
The number of critic atoms $M$ & 25 & 25 \\
The number of target atoms $M'$ & 50 & 50 \\
Constraint risk level $\alpha$ & 0.25, 0.5, and 1.0 & 1.0 \\
threshold $d_k$ & $0.025/(1 - \gamma)$ & $[0.025, 0.025, 0.4]/(1 - \gamma)$ \\
Slack coefficient $\zeta$ & - & $\mathrm{min}_k d_k = 0.025/(1-\gamma)$ \\
\bottomrule
\end{tabular}
\end{small}
\end{center}
\vskip -0.1in
\end{table}

Since the range of the cost is $[0, 1]$, the maximum discounted cost sum is $1/(1 - \gamma)$.
Thus, the threshold is set by target cost rate times $1/(1 - \gamma)$.
For the locomotion tasks, the third cost in (\ref{eq:reward and costs}) is designed for foot stamping, which is not essential to safety.
Hence, we set the threshold to near the maximum (if a robot does not stamp, the cost rate becomes 0.5).
In addition, baseline safe RL methods use multiple critic networks for the cost function, such as target \citep{yang2020wcsac} or square value networks \citep{kim2022offtrc}.
To match the number of network parameters, we use two critics as an ensemble, as in \citet{kuznetsov2020tqc}.

\textbf{Tips for Hyperparameter Tuning.}
\begin{itemize}
    \item Discount factor $\gamma$, Critic learning rate: Since these are commonly used hyperparameters, we do not discuss these.
    \item Trace-decay $\lambda$, Trust region size $\epsilon$: 
    The ablation studies on these hyperparameters are presented in Appendix \ref{sec:trace ablation on hyperparameters}.
    From the results, we recommend setting the trace-decay to $0.95\sim0.99$ as in other TD($\lambda$)-based methods \citep{precup2000tdtrace}.
    Also, the results show that the performance is not sensitive to the trust region size.
    However, if the trust region size is too large, the approximation error increases, so it is better to set it below $0.003$.
    \item Entropy coefficient $\beta$:
    This value is fixed in our experiments, but it can be adjusted automatically as done in SAC \citep{haarnoja2018sac}.
    \item The number of atoms $M, M'$: 
    Although experiments on the number of atoms did not performed, performance is expected to increase as the number of atoms increases, as in other distributional RL methods \citep{dabney2018implicit}.
    \item Length of replay buffer:
    The effect of the length of the replay buffer can be confirmed through the experimental results from an off policy-based safe RL method \citep{kim2022offtrc}.
    According to that, the length does not impact performance unless it is too short. We recommend setting it to 10 to 100 times the collected trajectory length.
    \item Constraint risk level $\alpha$, threshold $d_k$:
    If the cost sum follows a Gaussian distribution, the mean-std constraint is identical to the CVaR constraint. Then, the probability of the worst case can be controlled by adjusting $\alpha$.
    For example, if we set $\alpha=0.125$ and $d=0.03/(1-\gamma)$, the mean-std constraint enforces the probability that the average cost is less than 0.03 during an episode greater than $95\%=\Phi(\phi(\Phi^{-1}(\alpha))/\alpha)$.
    Through this meaning, proper $\alpha$ and $d_k$ can be found.
    \item Slack coefficient $\zeta$:
    As mentioned at the end of Section \ref{sec:feasibility handling}, it is recommended to set this coefficient as large as possible. Since $d_k - \zeta$ should be positive, we recommend setting $\zeta$ to $\min_k d_k$.
\end{itemize}
In conclusion, most hyperparameters are not sensitive, so few need to be optimized.
It seems that $\alpha$ and $d_k$ need to be set based on the meaning described above. 
Additionally, if the approximation error of critics is significant, the trust region size should be set smaller.

\newpage
\section{Experimental Results}
\label{sec:experimental results}

\subsection{Safety Gym}
\label{sec:experimental results on safety gym}

In this section, we present the training curves of the Safety Gym tasks separately according to the risk level of constraints for better readability.
Figure \ref{fig: training curves of safety gym alpha=1.0} shows The training results of the risk-neutral constrained algorithms and risk-averse constrained algorithms with $\alpha=1.0$.
Figures \ref{fig: training curves of safety gym alpha=0.5} and \ref{fig: training curves of safety gym alpha=0.25} show the training results of the risk-averse constrained algorithms with $\alpha=0.25$ and $0.5$, respectively.

\begin{figure}[ht]
\centering
\includegraphics[width=\linewidth]{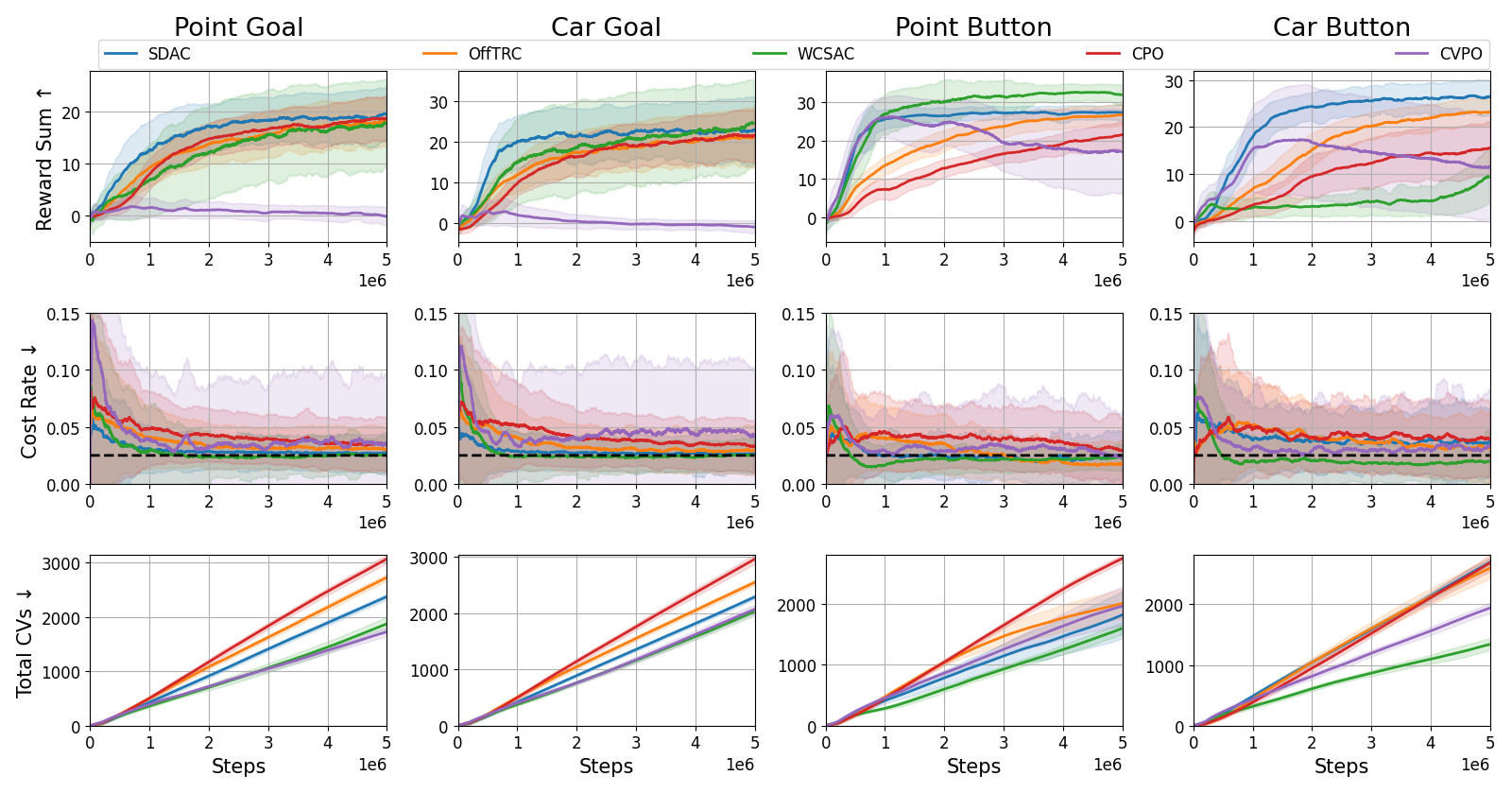}
\caption{
Training curves of risk-neutral constrained algorithms for the Safety Gym tasks.
The solid line and shaded area represent the average and std values, respectively.
The black dashed lines in the second row indicate thresholds.
All methods are trained with five random seeds.
}
\label{fig: training curves of safety gym alpha=1.0}
\end{figure}

\begin{figure}[ht]
\centering
\includegraphics[width=\linewidth]{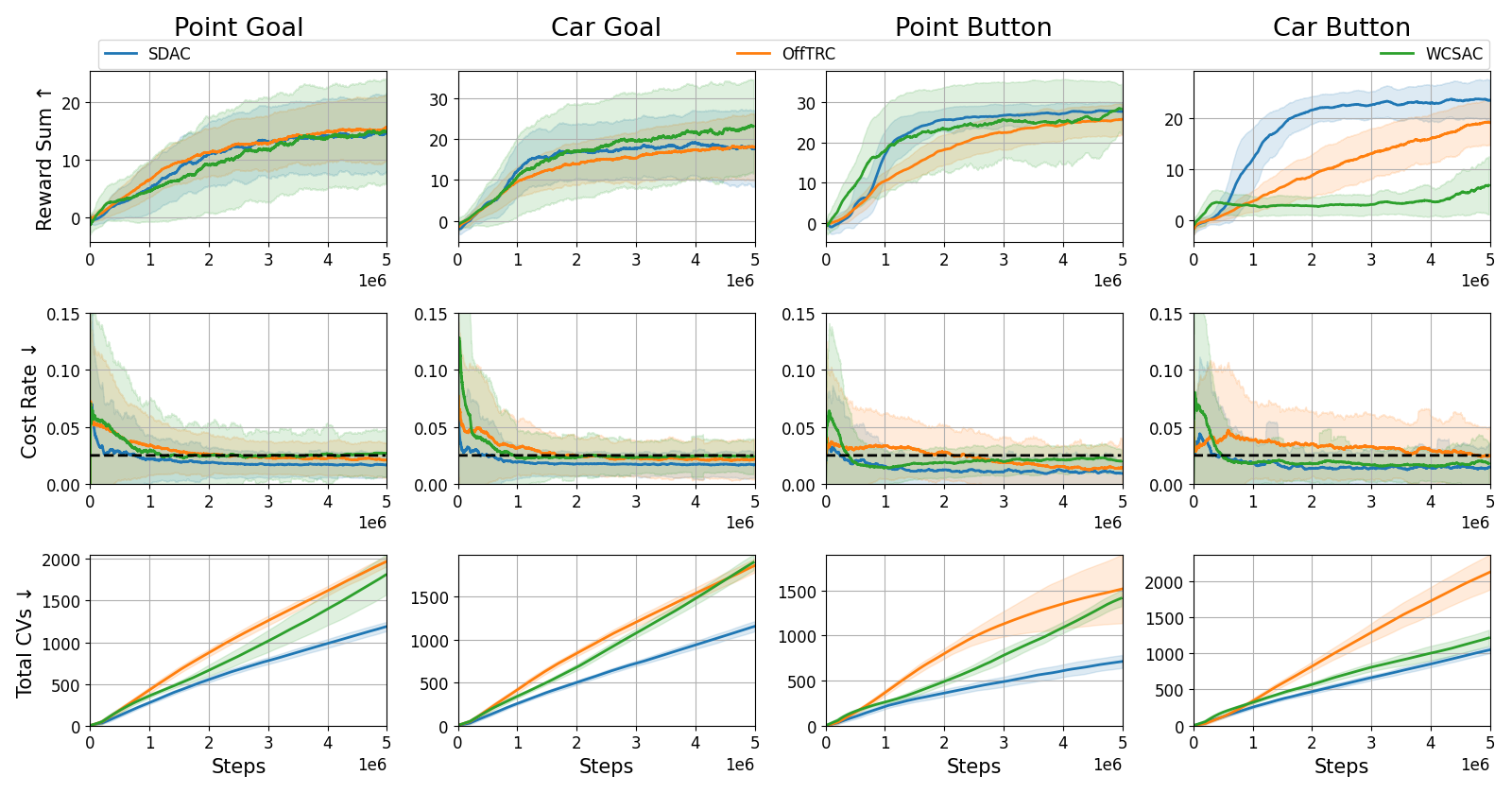}
\caption{
Training curves of risk-averse constrained algorithms with $\alpha=0.5$ for the Safety Gym.
}
\label{fig: training curves of safety gym alpha=0.5}
\end{figure}

\newpage
\begin{figure}[h]
\centering
\includegraphics[width=\linewidth]{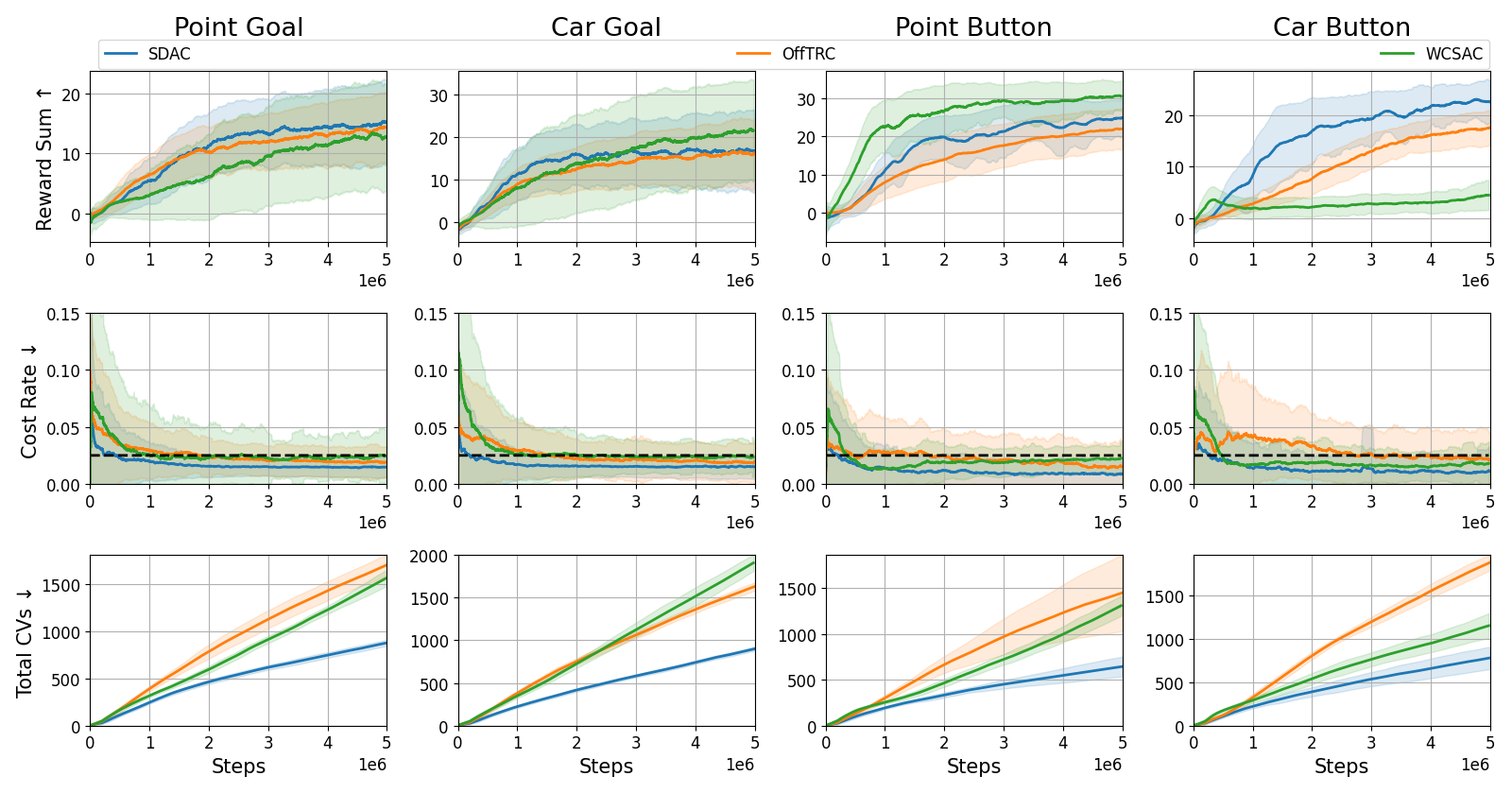}
\caption{
Training curves of risk-averse constrained algorithms with $\alpha=0.25$ for the Safety Gym.
}
\label{fig: training curves of safety gym alpha=0.25}
\end{figure}

\subsection{Ablation Study on Components of SDAC}
\label{sec: ablation on components of SDAC}

There are three main differences between SDAC and the existing trust region-based safe RL algorithm for mean-std constraints \citep{kim2022offtrc}, called OffTRC: 1) feasibility handling methods in multi-constraint settings, 2) the use of distributional critics, and 3) the use of Q-functions instead of advantage functions, as explained in Appendix \ref{sec:policy update rule} and \ref{sec:compare surrogate}.
Since the ablation study for feasibility handling is conducted in Section \ref{sec:ablation study in main text}, we perform ablation studies for the distributional critic and Q-function in this section.
We call SDAC with only distributional critics as \emph{SDAC-Dist} and SDAC with only Q-functions as \emph{SDAC-Q}.
If all components are absent, SDAC is identical to OffTRC \citep{kim2022offtrc}.
The variants are trained with the point goal task of the Safety Gym, and the training results are shown in Figure \ref{fig: compare all components}.
SDAC-Q lowers the cost rate quickly but shows the lowest score.
SDAC-Dist shows scores similar to SDAC, but the cost rate converges above the threshold $0.025$.
In conclusion, SDAC can efficiently satisfy the safety constraints through the use of Q-functions and improve score performance through the distributional critics.

\begin{figure}[ht]
\centering
\includegraphics[width=0.9\linewidth]{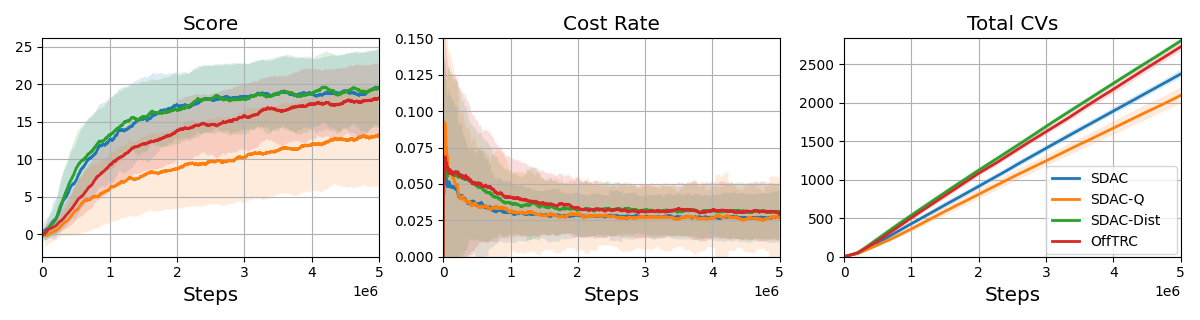}
\caption{Training curves of variants of SDAC for the point goal task.}
\label{fig: compare all components}
\end{figure}

\newpage
\subsection{Ablation Study on Hyperparameters}
\label{sec:trace ablation on hyperparameters}

To check the effects of the hyperparameters, we conduct ablation studies on the trust region size $\epsilon$ and entropy coefficient $\beta$.
The results on the entropy coefficient are presented in Figure \ref{sfig:ablation on entropy}, showing that the score significantly decreases when $\beta$ is $0.01$. 
This indicates that policies with high entropy fail to improve score performance since they focus on satisfying the constraints.
Thus, the entropy coefficient should be adjusted cautiously, or it can be better to set the coefficient to zero.
The results on the trust region size are shown in Figure \ref{sfig:ablation on trust region}, which shows that the results do not change significantly regardless of the trust region size.
However, the score convergence rate for $\epsilon=0.01$ is the slowest because the estimation error of the surrogate increases as the trust region size increases according to Theorem \ref{thm:bound}.

\begin{figure}[ht]
\centering
\begin{subfigure}[b]{0.9\textwidth}
    \centering
    \includegraphics[width=\textwidth]{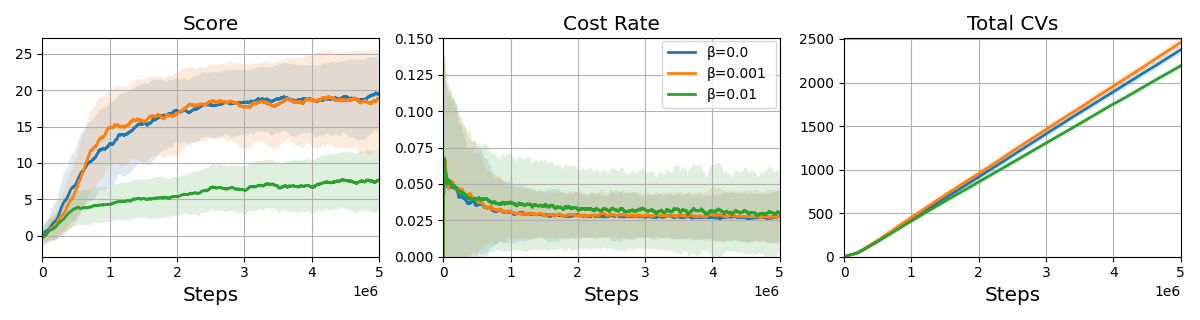}
    \caption{Entropy coefficient $\beta$.}
    \label{sfig:ablation on entropy}
\end{subfigure}
\begin{subfigure}[b]{0.9\textwidth}
    \centering
    \includegraphics[width=\textwidth]{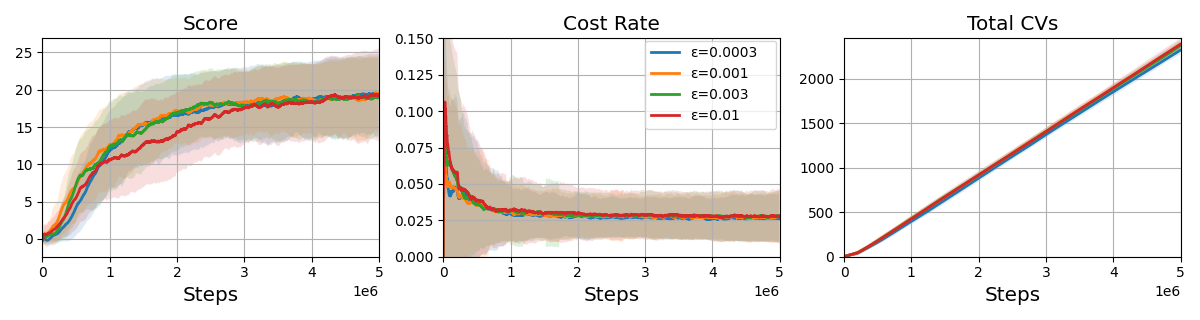}
    \caption{Trust region size $\epsilon$.}
    \label{sfig:ablation on trust region}
\end{subfigure}
\caption{Training curves of SDAC with different hyperparameters for the point goal task.}
\label{fig:ablation on trace decay}
\end{figure}

\newpage
\section{Comparison with RL Algorithms}
\label{sec:comparison_rl_algo}

In this section, we compare the proposed safe RL algorithm with traditional RL algorithms in the locomotion tasks and show that safe RL has the advantage of not requiring reward tuning.
We use the truncated quantile critic (TQC) \citep{kuznetsov2020tqc}, a state-of-the-art algorithm in existing RL benchmarks \citep{todorov2012mujoco}, as traditional RL baselines.
To apply the same experiment to traditional RL, it is necessary to design a reward reflecting safety. We construct the reward through a weighted sum as $\bar{R}=(R - \sum_{i=1}^3w_i C_i)/(1 + \sum_{i=1}^3 w_i)$, where $R$ and $C_{\{1,2,3\}}$ are used to train safe RL methods and are defined in Appendix \ref{sec:experimental settings}, and $R$ is called the \emph{true reward}.
The weights of the reward function $w_{\{1,2,3\}}$ are searched by a Bayesian optimization tool\footnote{We use Sweeps from Weights \& Biases \cite{wandb}.} to maximize the true reward of TQC in the Mini-Cheetah task.
Among the 63 weights searched through Bayesian optimization, the top five weights are listed in Table \ref{table: weight of reward}.


\begin{table}[ht]
\caption{Weights of the reward function for the Mini-Cheetah task.
}
\label{table: weight of reward}
\vskip 0.15in
\begin{center}
\begin{tabular}{l|l|l|l}
\toprule
Reward weights & $w_1$ & $w_2$ & $w_3$ \\
\midrule
$\#1$ & 1.588 & 0.299 & 0.174 \\
$\#2$ & 1.340 & 0.284 & 0.148 \\
$\#3$ & 1.841 & 0.545 & 0.951 \\
$\#4$ & 6.560 & 0.187 & 4.920 \\
$\#5$ & 1.603 & 0.448 & 0.564 \\
\bottomrule
\end{tabular}
\end{center}
\end{table}

Figure \ref{fig: training curves on locomotion} shows the training curves of the Mini-Cheetah task experiments where TQC is trained using the weight pairs listed in Table \ref{table: weight of reward}.
The graph shows that it is difficult for TQC to lower the second cost below the threshold while all costs of SDAC are below the threshold.
In particular, TQC with the fifth weight pairs shows the lowest second cost rate, but the true reward sum is the lowest.
This shows that it is challenging to obtain good task performance while satisfying the constraints through reward tuning.

\begin{figure}[ht]
\centering
\includegraphics[width=\linewidth]{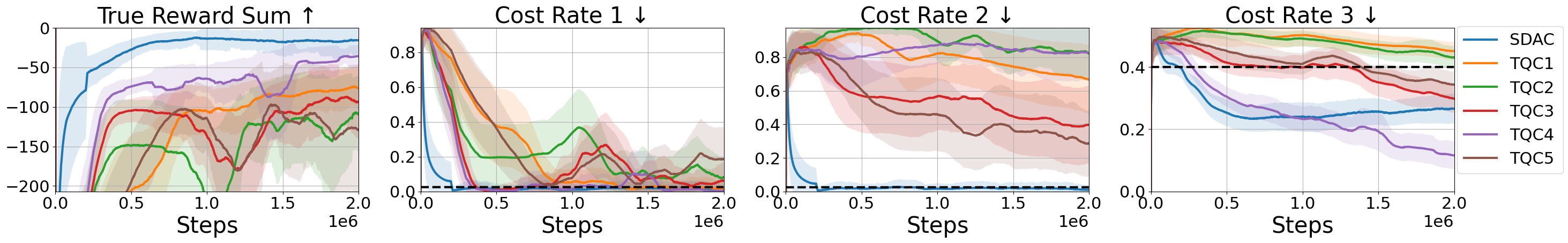}
\caption{
Training curves of the Mini-Cheetah task.
The black dashed lines show the thresholds used for the safe RL method.
The solid line represents the average value, and the shaded area shows one-fifth of the std value.
The number after TQC in the legend indicates which of the reward weights in Table \ref{table: weight of reward} is used.
All methods are trained with five different random seeds.
}
\label{fig: training curves on locomotion}
\end{figure}

\newpage
\section{Computational Cost Analysis}

\subsection{Complexity of Gradient Integration Method}
\label{sec: qualitative complexity}

In this section, we analyze the computational cost of the gradient integration method.
The proposed gradient integration method has three subparts.
First, it is required to calculate policy gradients of each cost surrogate, $g_k$, and $H^{-1}g_k$ for $\forall k \in \{1, 2,..., K\}$, where $H$ is the Hessian matrix of the KL divergence.
$H^{-1}g_k$ can be computed using the conjugate gradient method, which requires only a constant number of back-propagation on the cost surrogate, so the computational cost can be expressed as $K\cdot O(\mathrm{BackProp})$.

Second, the quadratic problem in Section \ref{sec:feasibility handling} is transformed to a dual problem, where the transformation process requires inner products between $g_k$ and $H^{-1}g_m$ for $\forall k,m \in \{1, 2,..., K\}$.
The computational cost can be expressed as $K^2\cdot O(\mathrm{InnerProd})$.

Finally, the transformed quadratic problem is solved in the dual space $\in \mathbb{R}^{K}$ using a quadratic programming solver.
Since $K$ is usually much smaller than the number of policy parameters, the computational cost almost negligible compared to the others.
Then, the cost of the gradient integration is $K\cdot O(\mathrm{BackProp}) + K^2\cdot O(\mathrm{InnerProd}) + C$.
Since the back-propagation and the inner products is proportional to the number of policy parameters $|\psi|$, the computational cost can be simplified as $O(K^2\cdot|\psi|)$. 

\subsection{Quantitative Analysis}
\label{sec: quantitative complexity}

\begin{table}[ht]
\caption{Training time of Safe RL algorithms (in hours). 
The training time of each algorithm is measured as the average time required for training with five random seeds.
The total training steps are $5\cdot 10^6$ and $3 \cdot 10^6$ for the point goal task and the Mini-Cheetah task, respectively.
}
\label{table: training time}
\vskip 0.15in
\begin{center}
\begin{tabular}{l|l|l|l|l|l}
\toprule
Task & \textbf{SDAC (proposed)} & OffTRC & WCSAC & CPO & CVPO \\
\midrule
Point goal (Safety Gym) & 7.96 & 4.86 & 19.07 & 2.61 & 47.43 \\
Mini-Cheetah (Locomotion) & 8.36 & 6.54 & 16.41 & 1.99 & - \\
\bottomrule
\end{tabular}
\end{center}
\end{table}

We analyze the computational cost of the proposed method quantitatively.
To do this, we measure the training time of the proposed method, SDAC, and the safe RL baselines.
We use a workstation whose CPU is the Intel Xeon e5-2650 v3, and GPU is the NVIDIA GeForce GTX TITAN X.
The results are presented in Table \ref{table: training time}.
While CPO is the fastest algorithm, its performance, such as the sum of rewards, is relatively poor compared to other algorithms.
The main reason why CPO shows the fastest computation time is that CPO is an on-policy algorithm, hence, it does not require an insertion to (and deletion from) a replay memory, and batch sampling.
SDAC shows the third fastest computation time in all algorithms and the second best one among off-policy algorithms.
Especially, SDAC is slightly slower than OffTRC, which is the fastest one among off-policy algorithms.
This result shows the benefit of SDAC since SDAC outperforms OffTRC in terms of the returns and CV, but the training time is not significantly increased over OffTRC.
WCSAC, which is based on SAC, has a slower training time because it updates networks more frequently than other algorithms.
CVPO, an EM-based safe RL algorithm, has the slowest training time.
In the E-step of CVPO, a non-parametric policy is optimized to solve a local subproblem, and the optimization process requires discretizing the action space and solving a non-linear convex optimization for all batch states.
Because of this, CVPO takes the longest to train an RL agent.



\end{document}